\documentclass[11pt]{article}
\usepackage{enumitem}
\usepackage[dvipsnames]{xcolor}

\usepackage{amsmath}
\usepackage{amssymb}
\usepackage{mathtools}
\usepackage{amsthm}
\usepackage{color,soul}
\usepackage{aux/macros}

\usepackage{fullpage,times,bm,float,array}
\usepackage{nicefrac}
\usepackage{thmtools}
\usepackage{thm-restate}

\usepackage{microtype}
\usepackage{graphicx}
\usepackage{subfigure}
\usepackage{booktabs} 

\usepackage[dvipsnames]{xcolor}

\newtheoremstyle{italicdefinition} {}                               {}                               {\itshape}                       {}                               {\bfseries}                      {.}                              {.5em}                           {}                               

\theoremstyle{plain}
\newtheorem{theorem}{Theorem}[section]

\newtheorem{lemma}[theorem]{Lemma}
\newtheorem{corollary}[theorem]{Corollary}
\theoremstyle{italicdefinition}
\newtheorem{definition}{Definition}
\newtheorem{assumption}[theorem]{Assumption}
\theoremstyle{remark}

\usepackage[textsize=tiny]{todonotes}

\usepackage[style = alphabetic,citestyle=alphabetic,maxbibnames=99,sorting=nyt,natbib=true,backref=true]{biblatex}
\addbibresource{references.bib}
\DefineBibliographyStrings{english}{backrefpage = {Cited on page},backrefpages = {Cited on pages},}
\usepackage[colorlinks,linkcolor = RawSienna, urlcolor  = Green, citecolor = Green, anchorcolor = ForestGreen,bookmarks=False]{hyperref}

\sloppy 
\renewcommand{\r}{\right}
\renewcommand{\l}{\left} 

\author{
Neil Mallinar$^*$\\
UC San Diego\\
\texttt{nmallina@ucsd.edu}\\
\and Austin Zane$^*$\\
UC Berkeley\\
\texttt{austin.zane@berkeley.edu}\\
\and Spencer Frei \\
UC Davis\\
\texttt{sfrei@ucdavis.edu}\\
\and Bin Yu \\
UC Berkeley\\
\texttt{binyu@berkeley.edu}
}

\title{\textbf{Minimum-Norm Interpolation Under Covariate Shift}}

\setlength{\parskip}{1em} 
\setlength{\parindent}{0em} 

\titlespacing*{\section}      {0em}{.75em}{.5em}
\titlespacing*{\subsection}   {0em}{.75em}{.5em}
\titlespacing*{\subsubsection}{0em}{.75em}{.5em}
\titlespacing*{\paragraph}    {0em}{.75em}{.5em}

\date{April 2, 2024}
\begin{document}

\maketitle
\def\thefootnote{*}\footnotetext{Equal contribution.}

\begin{abstract}
Transfer learning is a critical part of real-world machine learning deployments and has been extensively studied in experimental works with overparameterized neural networks.
However, even in the simplest setting of linear regression a notable gap still exists in  the theoretical understanding of transfer learning.
In-distribution research on high-dimensional linear regression has led to the identification of a phenomenon known as \textit{benign overfitting}, in which linear interpolators overfit to noisy training labels and yet still generalize well.
This behavior occurs under specific conditions on the source covariance matrix and input data dimension.
Therefore, it is natural to wonder how such high-dimensional linear models behave under transfer learning.
We prove the first non-asymptotic excess risk bounds for benignly-overfit linear interpolators in the transfer learning setting.
From our analysis, we propose a taxonomy of \textit{beneficial} and \textit{malignant} covariate shifts based on the degree of overparameterization.
We follow our analysis with empirical studies that show these beneficial and malignant covariate shifts for linear interpolators on real image data, and for fully-connected neural networks in settings where the input data dimension is larger than the training sample size. \end{abstract}

\section{Introduction}

Practical deployments of machine learning models are almost always in a transfer learning setting, where models trained on a \textit{source data distribution} with noisy labels are expected to perform well on a different \textit{target data distribution}, referred to as the ``out-of-distribution" (OOD) dataset \citep{Oglic_2022,damour2022underspecification}.
There have been many experimental works on transfer learning with complex models and datasets \citep{recht2019imagenet, Koh2021WILDS,pmlr-v139-miller21b,Hendrycks_2021_ICCV,Wenzel2022AssayingOG,Liang2023AccuracyOT}, but remarkably fewer attempts to study it theoretically, even in the simplest case of linear models which have been of great interest in recent years \citep{Dwivedi2020RevisitingCA,bartlett2020benignpnas,Hastie2022Surprises,tsigler2023benignjmlr}.

There has been an extensive ``in-distribution" (ID) theoretical interest in high-dimensional linear regression and specifically \textit{interpolation}, meaning a model reaches zero training loss \citep{belkin2019reconciling,belkin2019interpolationoptimality}.
Frameworks such as ``benign overfitting", or ``harmless interpolation" \citep{bartlett2020benignpnas,muthukumar2020harmless} emerged as an attempt to explain why interpolating neural networks 
often
do not overfit catastrophically \citep{zhang2017rethinkinggeneralization}.
They found that, in specific cases, overfitting can be ``benign", meaning that a model interpolates noisy training labels and yet has vanishing excess risk.
In linear regression, this occurs if and only if the training (source) covariance matrix satisfies very specific conditions.
Under these conditions, the minimum-norm interpolator (MNI) approximately acts like a ridge regression solution.

This sparked an initial wave of in-distribution theoretical research into benign overfitting in high-dimensional linear models \citep{Chatterji2022ImplicitBias,tsigler2023benignjmlr,  chatterji2023deep}, kernel regression \citep{Rakhlin2018ConsistencyOI, haas2023mind, Barzilai2023GeneralizationIK,belkin2018overfittingperfectfitting}, and even some shallow neural networks \citep{pmlr-v178-frei22a, pmlr-v202-kou23a, kornowski2023tempered, Xu2023ReLU}.  
Although these works were motivated by a desire to understand overfitting in modern deep learning, recent works have shown that in many practical settings of interest, overfitting is not benign~\citep{Mallinar2022, haas2023mind,lai2023generalization}.  
Thus, deeper investigations into the generalization behavior of overfit models are warranted. 

Given the increasing prevalence of overparameterized models, it is natural to ask how such models perform in the transfer learning setting.
There have been some efforts to answer this in the theoretically tractable cases of linear regression and random feature and kernel regression \citep{pmlr-v162-pathak22a,wang2023pseudolabeling}.
However, these works either provide asymptotic bounds that require the training sample size and data dimension to go to infinity at the same rate \citep{tripuraneni2021overparam}, study minimax settings which only considers worst-case risk \citep{lei2021linearregressiondistshift}, or focus on augmented gradient-based training algorithms, like importance weighting \citep{wang2021importance}.

\paragraph{Summary of contributions.}
In this paper, we investigate the generalization behavior of the minimum $\ell_2$-norm linear interpolator (MNI) under distribution shifts when the source distribution satisfies the conditions necessary for benign overfitting.  We summarize our main contributions as follows.
\begin{itemize}
    \item We provide the first non-asymptotic, instance-wise risk bounds for covariate shifts in interpolating linear regression when the source covariance matrix satisfies benign overfitting conditions and commutes with the target covariance matrix.
\item We use our risk bounds to propose a taxonomy of covariate shifts for the MNI. 
    We show how the ratio of target eigenvalues to source eigenvalues and the degree of overparameterization affect whether a shift is \textit{beneficial} or \textit{malignant}, meaning OOD risk is better or worse than ID risk, respectively. 
    The degree of overparameterization is determined by the eigenspectrum's head and tail properties.

    \item We empirically show that our taxonomy of shifts holds: (1) for the MNI on real image data under natural shifts like blur (a beneficial shift) and noise (a malignant shift), underscoring the significance of our findings beyond the idealized source and target covariances for which our theory is applicable; (2) for neural networks in settings where the input data dimension is larger than the training sample size, showing that our findings for the MNI are also reflective of the behavior of more complex models.

\end{itemize} \subsection{Prior Work and Comparisons to this Work}

\textbf{Excess risk analysis under distribution shifts: }
\citet{tripuraneni2021overparam} give an asymptotic analysis of high-dimensional random feature regression in covariate shift.
They require the number of samples, $n$, data dimension, $p$, and random feature dimension to go to $\infty$ at the same rate. In contrast, our non-asymptotic analysis  
considers finite sample cases and differing rates.
This allows us to draw new conclusions about how the \textit{degree of overparameterization} changes the way in which interpolating linear models exhibit out-of-distribution (OOD) generalization.
Additionally, our bounds let us analyze any sequence of eigenvalues for the target feature covariance matrix, which is not possible within their framework.

\citet{lei2021linearregressiondistshift} study linear regression under distribution shifts in the minimax setting.
Their minimax bounds consider the worst-case risk over an $\ell_2$-ball of target models, whereas we compute risk bounds specific to any model instantiation, with no restriction on the target model class.
Furthermore, their experimental results only consider the underparameterized regime.

Several other works study OOD generalization in more distant settings.
\citet{wang2021importance} study linear interpolators for classification, when trained with gradient descent and importance weighting, whereas we consider the closed-form MNI for linear regression.
\citet{pmlr-v202-simchowitz23a} study covariate shifts when the target function class is the sum of two other function classes, and shifts are defined with regard to metric entropy between classes, whereas we focus on well-specified linear models.
\citet{pmlr-v162-pathak22a,MaPathakWainwright2023,feng2023unified} consider covariate shift in kernel regression based on likelihood (``importance") ratios between source and target distributions while we consider source and target eigenvalue ratios which offer granular insights into feature scale changes whereas likelihood ratios capture shifts that affect the global data distribution.
\citet{pmlr-v162-pathak22a,MaPathakWainwright2023} also analyze worst-case, minimax risk for nonparametric function classes.
\citet{kausik2023generalization} work in the proportional asymptotic regime and consider the error in variables setting with noisy features and clean labels, while our work focuses on the linear regression setting with clean features and noisy labels.
Finally, we note that risk bounds in these prior works do not sufficiently account for the behavior of the high-rank covariance tail that benign overfitting requires.

\textbf{Experimental work on distribution shifts:}
\citet{HendrycksDietterich2019} propose the CIFAR-10C dataset as an OOD counterpart to CIFAR-10, featuring test set images corrupted by visual filters like blurs and noises.
\citet{Koh2021WILDS} present benchmarks on more realistic datasets with modern models that can be seen ``in-the-wild".
\citet{pmlr-v139-miller21b} experimentally show a linear relationship between ID accuracy and OOD accuracy for a wide range of modern neural networks and datasets, though their results show ID accuracy is almost always better than OOD accuracy.
On a subset of CIFAR-10C, we find settings in which OOD accuracy is \textit{better} than ID accuracy for linear interpolators.

\textbf{Benign overfitting ``in-distribution":}
\citet{bartlett2020benignpnas} propose benign overfitting, give a non-asymptotic analysis of the MNI, and show specific, necessary conditions under which the MNI achieves zero excess risk in-distribution.
\citet{tsigler2023benignjmlr} extend this work by considering benign overfitting in the case of ridge regression.
Our proof techniques follow most closely to the ideas presented in these two papers for the in-distribution setting.
\citet{pmlr-v178-frei22a} show benign overfitting in shallow non-linear MLPs trained with gradient descent on the logistic loss if the data dimension grows faster than the number of training samples.
\citet{Mallinar2022} experimentally show that interpolating neural networks do not benignly overfit due to the low input data dimension.
Our experiments build on this by looking at settings in which $n<p$ and $n>p$ where $n$ is the training sample size and $p$ is the input data dimension.
Other works study benign overfitting under a variety of conditions 
\citep{pmlr-v202-kou23a,chatterji2023deep,frei2023benign}.
 \section{Preliminaries}
We extend notations in \citet{bartlett2020benignpnas} and \citet{tsigler2023benignjmlr} to the transfer learning setting with OOD generalization risk as our performance metric.
Appendix \ref{apdx:formal_assumptions} formalizes our setting of linear regression under distribution shift, and we provide necessary details here.

\subsection{Linear Models for Source and Target Data}

Let $\dsource$ and $\dtarget$ be source and target distributions over $(x, y) \in \R^p \times \R$. We consider linear regression problems defined as follows.

\begin{definition}[Linear regression]\label{def:lin_reg_main} 
Let the training dataset be comprised of $n$ i.i.d. pairs $(\bx^i,y^i)_{i=1}^n \sim \dsource^n$  concatenated into a data matrix $\xsource\in \R^{n\times p}$ and a response vector $\ysource\in \R^n$, where $n < p$. We define
\begin{enumerate}
    \item the covariance matrix $\covsource=\E_{\dsource}[\bx\bx^\top]$,

    \item (\textit{centered rows}) $\E_{\dsource}[\bx] = 0,$

    \item (well-specified) the optimal parameter vector $\tsource \in \R^p$ such that $$y = \bx^T \tsource + \eps_\src$$ for $(\bx, y) \sim \dsource$, where $\beps_\src$ is a centered random variable with variance $v_{\varepsilon_{s}^2}$ and $\E_{\dsource}[y|\bx] = \bx^T \tsource$.
\end{enumerate}
\end{definition}

We test on $\dtarget$ with $\covtarget$, $\ttarget$, $\eps_\tgt$ defined in the same way. Note that $(\bx,y)$ is used to denote single observation pairs for both source and target data. We will differentiate between the two by explicitly denoting the distribution from which the pair is drawn. 

To facilitate our analysis, we introduce the following assumptions on the covariance matrices and the distribution of the data.

\begin{assumption}
    \label{asm:cov_assum}
    For linear regression problems (Def. \ref{def:lin_reg_main}), with source and target covariance matrices $\covsource$ and $\covtarget \in \R^{p \times p}$, we assume:
    \begin{enumerate}

        \item (simultaneously diagonalizability) $\covsource$ and $\covtarget$ commute; that is, there exists an orthogonal matrix $V\in\R^{p \times p}$ such that $V^\top\covsource V$ and $V^\top\covtarget V$ are both diagonal:
        \begin{align*}
            \covsource &= \E_{\bx \sim \dsource}[\bx\bx^T] = \textrm{diag}(\lsource_1, \lsource_2, ..., \lsource_p),\\
            \covtarget &= \E_{\bx \sim \dtarget}[\bx\bx^T] = \textrm{diag}(\ltarget_1, \ltarget_2, ..., \ltarget_p),
        \end{align*}
        where $\lambda_1 \geq \lambda_2 \cdots \geq \lambda_p$ and $\Tilde{\lambda}_i\lambda_i \geq 0$ for all $i$;

        \item (subgaussianity) the whitened observations $\bz=\bx^T\covsource^{-1/2}$ are centered i.i.d. vectors with independent coordinates and subgaussian norm $\sigma_x$; that is, for all $\gamma\in \R^p$,
        \begin{align*}
            \E[\exp(\gamma^\top \bz)] \leq \exp(\sigma_x^2 ||\gamma||^2/2)
        \end{align*}
    \end{enumerate}
\end{assumption}

Simultaneous diagonalizability is a common assumption in recent studies of high-dimensional linear regression \citep{lei2021linearregressiondistshift, kausik2023generalization, lejeune2024monotonic} and we show in Section \ref{sec:experiments} with experiments that our results hold even when this is violated.
Subgaussianity is also frequently used in statistical learning theory research and encompasses a wide array of distributions of interest \citep{vershynin-hdp}.

\subsection{Min-Norm Interpolator and Target Excess Risk}
Given a source data matrix $\xsource$, the minimum-norm interpolator (MNI) for any vector $\xi\in\R^n$ is defined as 
\begin{align*}
    \mni(\xi) &:= \argmin\left\{\norm{\theta}^2: X\theta = \xi\right\} \\
    &= X^T(XX^T)^{-1}\xi.
\end{align*}
If we consider $\xi=\ysource$, then we recover the MNI for the labels given by the response model, but our analysis will also involve implicit MNIs for different label vectors in $\R^n.$

The quantity that we seek to bound is the excess risk on the target distribution, which we define for an estimator $\theta \in \R^p$ as,
\begin{align}
R(\theta, \dtarget) := \E_{\dtarget}\l[\l(y-\bx^T\theta\r)^2 - \l(y - \bx^T\ttarget\r)^2\r].
\end{align}

We now derive bounds for the target excess risk and its expectation over the source response noise.
The proof of the following can be found in Appendix \ref{apdx:excess_risk_proof}.
\begin{restatable}{theorem}{excessriskdecomp}(Target excess risk decomposition)
    \label{thm:exc_risk_decomp} 
    The excess risk of the MNI trained on the source data, when evaluated on the target distribution, satisfies
\begin{align}
        R(\mni(\ysource), \dtarget) \leq &4 \modelshift + 4 \bias + 2 V_{\epssource},
    \end{align}
    and
\begin{align*}
        \E_{\epssource} R(\mni(\ysource), \dtarget) &= \modelshift + \bias + \E_{\epssource}V_{\epssource} \\
        &\quad + 2(\ttarget - \tsource)^\top\covtarget(\tsource-\mni(\xsource\tsource)),
    \end{align*}
    where we define
    \begin{align*}
        \modelshift &:= \snorm{\tsource - \ttarget}_{\covtarget}^2,\numberthis \label{eqn:model_shift}\\
        \bias &:= \snorm{\tsource - \mni(\xsource \tsource)}_{\covtarget}^2,\numberthis \label{eqn:bias_b2}\\
        V_{\epssource} &:= \snorm{\mni(\epssource)}_{\covtarget}^2 \numberthis \label{eqn:raw_variance},
    \end{align*}
    and $\norm{\bx}_M^2 := \bx^\top M \bx$.
\end{restatable}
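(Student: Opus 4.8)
The plan is to collapse the target excess risk into a single $\covtarget$-weighted squared norm, decompose the argument of that norm into three natural pieces using the linearity of the interpolation map, and then finish with the parallelogram-type inequality for the first claim and a one-line moment computation for the second. First I would show that for \emph{any} estimator $\theta$ that is a function of the source sample alone (hence independent of the target test pair $(\bx,y)\sim\dtarget$),
\begin{align*}
R(\theta,\dtarget) = \snorm{\ttarget - \theta}_{\covtarget}^2 .
\end{align*}
Indeed, by well-specification of the target model, $y = \bx^\top\ttarget + \eps_\tgt$ with $\E_{\dtarget}[\eps_\tgt\mid\bx]=0$, so $(y-\bx^\top\theta)^2 - (y-\bx^\top\ttarget)^2 = \bigl(\bx^\top(\ttarget-\theta)\bigr)^2 + 2\,\eps_\tgt\,\bx^\top(\ttarget-\theta)$; taking $\E_{\dtarget}$ kills the cross term (tower property, plus $\theta \perp (\bx,y)$) and turns the first term into $(\ttarget-\theta)^\top\E_{\dtarget}[\bx\bx^\top](\ttarget-\theta) = \snorm{\ttarget-\theta}_{\covtarget}^2$.

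Next I would use that $\xi\mapsto\mni(\xi) = \xsource^\top(\xsource\xsource^\top)^{-1}\xi$ is linear in $\xi$, so writing $\ysource = \xsource\tsource + \epssource$ gives
\begin{align*}
\ttarget - \mni(\ysource) = \bigl(\ttarget-\tsource\bigr) + \bigl(\tsource - \mni(\xsource\tsource)\bigr) - \mni(\epssource),
\end{align*}
and hence $R(\mni(\ysource),\dtarget) = \snorm{(\ttarget-\tsource) + (\tsource-\mni(\xsource\tsource)) - \mni(\epssource)}_{\covtarget}^2$. For the first (deterministic) bound, since $\covtarget$ is positive semidefinite the parallelogram inequality $\snorm{a+b}_{\covtarget}^2 \le 2\snorm{a}_{\covtarget}^2 + 2\snorm{b}_{\covtarget}^2$ holds; applying it once to peel off $\mni(\epssource)$ and once more to separate $\tsource-\ttarget$ from $\tsource-\mni(\xsource\tsource)$ yields $4\snorm{\ttarget-\tsource}_{\covtarget}^2 + 4\snorm{\tsource-\mni(\xsource\tsource)}_{\covtarget}^2 + 2\snorm{\mni(\epssource)}_{\covtarget}^2 = 4\modelshift + 4\bias + 2V_{\epssource}$.

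For the expectation identity I would condition on $\xsource$ and expand the square with $u := \ttarget-\tsource$ (deterministic), $w := \tsource-\mni(\xsource\tsource)$ ($\xsource$-measurable), and $\mni(\epssource)$:
\begin{align*}
\snorm{u+w-\mni(\epssource)}_{\covtarget}^2 = \snorm{u}_{\covtarget}^2 + \snorm{w}_{\covtarget}^2 + 2u^\top\covtarget w - 2(u+w)^\top\covtarget\,\mni(\epssource) + \snorm{\mni(\epssource)}_{\covtarget}^2 .
\end{align*}
Because $\epssource$ is centered and independent of $\xsource$, $\E_{\epssource}[\mni(\epssource)\mid\xsource] = \xsource^\top(\xsource\xsource^\top)^{-1}\E_{\epssource}[\epssource] = 0$, so the penultimate term vanishes in expectation; identifying $\snorm{u}_{\covtarget}^2 = \modelshift$, $\snorm{w}_{\covtarget}^2 = \bias$, $\E_{\epssource}\snorm{\mni(\epssource)}_{\covtarget}^2 = \E_{\epssource}V_{\epssource}$, and $2u^\top\covtarget w = 2(\ttarget-\tsource)^\top\covtarget(\tsource-\mni(\xsource\tsource))$ recovers the stated identity.

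The computations are essentially bookkeeping; the only place that needs care is the independence structure — that $\mni(\ysource)$ is a deterministic function of the source data and therefore independent of the target test point $(\bx,y)$, which annihilates the $\eps_\tgt$ cross term, and that within the source data $\epssource$ is independent of $\xsource$, which (after conditioning on $\xsource$) annihilates the $\mni(\epssource)$ cross term. I do not anticipate a genuine obstacle here; the real work of the paper is the subsequent control of $\bias$ and $V_{\epssource}$ under the benign-overfitting and commuting-covariance assumptions, not this decomposition.
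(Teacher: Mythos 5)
Your proposal is correct and follows essentially the same route as the paper: both first reduce the target excess risk to $\snorm{\ttarget-\theta}_{\covtarget}^2$ using well-specification and independence of $\theta$ from the target test pair, both decompose $\ttarget-\mni(\ysource)$ via linearity of the MNI map into $(\ttarget-\tsource) + (\tsource-\mni(\xsource\tsource)) - \mni(\epssource)$, both peel off $\mni(\epssource)$ first with a $2(a^2+b^2)$ bound and then split the remaining two pieces (the paper phrases this via Young's inequality rather than a parallelogram-type bound, but they are the same elementary estimate), and both obtain the expectation identity by expanding the $\covtarget$-square and killing the $\mni(\epssource)$ cross term using $\E_{\epssource}[\epssource\mid\xsource]=0$. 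The only cosmetic difference is that you state $\epssource$ is independent of $\xsource$ where the paper (and the stated assumptions) only require the conditional mean-zero property $\E[\epssource\mid\xsource]=0$, which is what actually gets used in both arguments.
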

We observe that $\modelshift$ is a deterministic model shift term and that no further analysis can improve its dependency on $\tsource$, $\ttarget$, or $\covtarget$.
The cross-term, $(\ttarget - \tsource)^\top \covtarget (\tsource - \mni(\xsource \tsource))$, is dominated by the bias and variance as evidenced by the upper bound. 
Therefore we focus our analysis on $\bias$ and $V_{\epssource}$.
A useful normalized version of $V_{\epssource}$ is defined by
\begin{align}
    \label{eqn:purified_variance}
    V = \E_{\epssource}\l[V_{\epssource}/v_{\epssource}^2\r]. 
\end{align}
Note that $\bias, V$ are reminiscent of the ID bias and variance in prior work \citep{bartlett2020benignpnas,tsigler2023benignjmlr}.

\subsection{Separation of Components and Effective Ranks}
\label{sec:effective_ranks}

For an index $k$, we define the following quantities related to the effective rank of the tail of $\covsource$ \citep{tsigler2023benignjmlr}: 
\begin{align*}
    \rho_k = \frac{\sum_{i > k} \lsource_i}{n\lsource_{k+1}}, \qquad R_k = \frac{(\sum_{i > k} \lsource_i)^2}{\sum_{i > k} \lsource_i^2}.
\end{align*}
 $\rho_k$ measures the ratio of the energy of the source covariance tail to the number of training data observations, after normalizing the tail eigenvalues. 
$R_k$ measures the quantity of noisy features and how evenly distributed their eigenvalues are. 
It is minimized when there is only one nonzero eigenvalue and maximized when there are many equal eigenvalues. 

Benign overfitting occurs if the MNI is overfit to noisy training labels and yet ID excess risk decays to zero.
The central finding of \citet{bartlett2020benignpnas} is that the only way benign overfitting happens for the MNI is if the following occurs: (1) there exists a $k^* = \min\{k \ : \ \rho_k \geq b\}$ for a universal constant $b > 1$, meaning that the last $p-k^*$ components of $\covsource$ have a high effective rank relative to the number of training samples, $n$; (2) the magnitudes of the bottom $p-k^*$ eigenvalues are small relative to the top $k^*$; and (3) $k^* \ll n$.
More formally, consider quantities $p = p(n)$, a sequence of source covariance matrices $\Sigma_n = \diag(\lsource_1, \cdots, \lsource_p)$,  $k^* = k^*(n)$ as defined above,  $R_{k^*} = R_{k*}(\Sigma_n)$, and $\rho_k = \rho_k(\Sigma_n)$. 
A sufficient condition for benign overfitting is,
\begin{align}
    \label{eqn:benign_limits}
    \lim_{n\rightarrow\infty}\rho_0 = \lim_{n \to \infty} k^*/n = \lim_{n \to \infty} n/R_{k^*} = 0.
\end{align}

If this occurs, then the MNI behaves similarly to an estimator with two components. 
One component has variance similar to the ordinary least squares (OLS) estimator in $k^*$ dimensions and bias similar to the ridge regression solution with ridge parameter proportional to $\sum_{i > k} \lsource_i$, a sort of data-induced regularization.
The other component is a high-dimensional component, which has vanishing variance when the data is sufficiently high-dimensional and a bias which is proportional to $\sum_{i > k} \lsource_i (\tsource)_i^2$ \citep{tsigler2023benignjmlr}.
From these conditions, we see that the top $k^*$ components are like ``signal" components of the data and the bottom $p-k^*$ components are ``noise" components. 

\subsection{Spiked Covariance Models}
\label{sec:k_eps}

We will consider a special case of the $(k,\epsilon)$-spike model, a canonical covariance structure that exhibits benign overfitting for the MNI \citep{Chatterji2022ImplicitBias, chatterji2023deep}, to experimentally show properties of interest.
\begin{definition}[$(k, \delta, \epsilon)$-spike model]
    For a source distribution $\dsource$, 
$\delta >0$ and $\epsilon >0$ such that $\delta \gg \epsilon$,
    let 
\begin{align*}
        \E_{\bx \sim \dsource}[\bx\bx^T] &= \diag(\underbrace{\lambda_1, \cdots, \lambda_k}_{= \delta}, \underbrace{\lambda_{k+1}, \cdots, \lambda_p}_{= \epsilon}).
    \end{align*}
In this simplified setting, there are $k$ high-energy ``signal'' directions and $p-k$ low-energy ``noise'' directions.
    For a target distribution $\dtarget$, we use different hyperparameters  $\Tilde{k}, \Tilde{\delta}, \Tilde{\epsilon}$ to similarly characterize a shifted covariance matrix.
\end{definition}

 \section{Main Theorems}
\label{sec:main_theorems}

This section provides upper and lower bounds for the variance and bias terms in Equation \ref{eqn:purified_variance} and Equation \ref{eqn:bias_b2}, respectively.
Appendix \ref{apdx:proof_sketch} gives a high-level overview of our proof techniques. 
Subsequent appendices provide complete proofs. 
The variance bounds are adapted from \citet{bartlett2020benignpnas}, while the bias lower bound is derived from \citet{tsigler2023benignjmlr}. 
Our contributions include a novel bias upper bound and a unique characterization of overparameterization degrees. 
We start with the bounds for the variance term.
Appendix \ref{apdx:variance_bound_proof} contains a proof of the following theorem.
\begin{restatable}{theorem}{varianceublb}(Upper and lower bounds for the variance term)
\label{thm:main_results_variance_bounds}
There exist universal constants $b, c_1 > 1$ given in Lemma \ref{lemma:bartlett_lemma10}, a universal constant $c_2$ given in Lemma \ref{cor:bartlett_cor24} and a constant $c > 1$ that only depends on $\sigma_x, c_1, c_2$, such that for $k \in (0, n/c)$, with probability at least $1 - 10e^{-n/c}$,
\begin{align}
    V &\geq \frac{1}{c n}\sum_{i=1}^p \frac{\ltarget_i}{\lsource_i} \min\left(1, \frac{\lsource_i^2}{\lambda_{k+1}^2(\rho_k + 1)^2}\right) =: \underline{V}.
\end{align}

If in addition $\rho_k \geq b$, with probability at least $1 - 7e^{-n/c}$,
\begin{align}
    V/c &\leq \frac{1}{n}\sum_{i=1}^k \frac{\ltarget_i}{\lsource_i} + n \frac{\sum_{i > k} \ltarget_i \lsource_i}{(\sum_{i > k} \lsource_i)^2} =: \overline{V}/c.
\end{align}
\end{restatable}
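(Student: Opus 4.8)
I would first put $V$ into a noise-free closed form. Since $\mni(\epssource)=\xsource^\top(\xsource\xsource^\top)^{-1}\epssource$ and the coordinates of $\epssource$ are uncorrelated with common variance $v_{\epssource}^2$, taking the expectation in \eqref{eqn:purified_variance} gives $V=\operatorname{tr}\!\big(\covtarget\,\xsource^\top(\xsource\xsource^\top)^{-2}\xsource\big)$. Whitening $\xsource=Z\covsource^{1/2}$ — by Assumption~\ref{asm:cov_assum} the columns $z_1,\dots,z_p\in\R^n$ of $Z$ are independent with i.i.d., mean-zero, unit-variance, $\sigma_x$-subgaussian entries — and using simultaneous diagonalizability so that $\covsource^{1/2}\covtarget\covsource^{1/2}=\diag(\lsource_i\ltarget_i)$, I obtain
\[
  V \;=\; \sum_{i=1}^{p}\lsource_i\ltarget_i\,z_i^\top A^{-2}z_i,\qquad A:=\xsource\xsource^\top=\sum_{i=1}^{p}\lsource_i z_iz_i^\top .
\]
The structural observation that drives everything is that the scalars $z_i^\top A^{-2}z_i$ are exactly the ones controlling the in-distribution variance in \citet{bartlett2020benignpnas,tsigler2023benignjmlr}, where they are weighted by $\lsource_i{}^2$ instead of by $\lsource_i\ltarget_i$. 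So I would reuse their per-coordinate, high-probability estimates essentially verbatim and only redo the weighted-sum bookkeeping. Writing $\mu_1(\cdot)\ge\cdots\ge\mu_n(\cdot)$ for eigenvalues, $A_k:=\sum_{i>k}\lsource_i z_iz_i^\top$ for the tail Gram matrix, and $Z_0$, $\xsource_0$ for the first $k$ columns of $Z$, $\xsource$, the facts I would pull from Lemmas~\ref{lemma:bartlett_lemma10} and \ref{cor:bartlett_cor24} are: $\|z_i\|^2\le cn$; $\mu_n(Z_0^\top Z_0)\ge n/c$ (valid since $k<n/c$); $\mu_1(A_k)\le c\,n\lambda_{k+1}(\rho_k+1)$ for any $k$; when $\rho_k\ge b$, also $\mu_n(A_k)\ge\tfrac1c\sum_{i>k}\lsource_i$ and $\mu_1(A_k)/\mu_n(A_k)\le c$; and two-sided concentration of quadratic forms $z_i^\top B z_i$ around $\operatorname{tr}(B)$.

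\emph{Lower bound.} For each $i$, set $A_{-i}:=A-\lsource_i z_iz_i^\top$ and $s_i:=z_i^\top A_{-i}^{-1}z_i$. Sherman--Morrison gives $z_i^\top A^{-2}z_i=z_i^\top A_{-i}^{-2}z_i\,(1+\lsource_i s_i)^{-2}$; Cauchy--Schwarz gives $z_i^\top A_{-i}^{-2}z_i\ge s_i^{2}/\|z_i\|^2$; and $t/(1+\lsource_i t)\ge\tfrac12\min(t,\lsource_i^{-1})$, so
\[
  z_i^\top A^{-2}z_i \;\ge\; \frac{1}{4\|z_i\|^2}\,\min\!\big(s_i^{2},\ \lsource_i^{-2}\big).
\]
With $\|z_i\|^2\le cn$ it then suffices to show $s_i\gtrsim\big(\lambda_{k+1}(\rho_k+1)\big)^{-1}$ for every $k<n/c$ (no $\rho_k\ge b$ needed here): the first $k$ summands of $A_{-i}$ contribute rank $\le k-1$ and the remaining block has top eigenvalue $\le c\,n\lambda_{k+1}(\rho_k+1)$, so at least $n-k$ eigenvalues of $A_{-i}$ lie below $c\,n\lambda_{k+1}(\rho_k+1)$ and hence $\operatorname{tr}(A_{-i}^{-1})\ge\tfrac{n-k}{c\,n\lambda_{k+1}(\rho_k+1)}\gtrsim\big(\lambda_{k+1}(\rho_k+1)\big)^{-1}$; the lower-tail concentration of $s_i$ around $\operatorname{tr}(A_{-i}^{-1})$ (using that $A_{-i}^{-1}$ has $\gtrsim n$ comparable eigenvalues) upgrades this to a high-probability bound. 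Substituting, multiplying by $\lsource_i\ltarget_i$, and summing yields $V\ge\underline V$.

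\emph{Upper bound ($\rho_k\ge b$, $k<n/c$).} Split $V=V_{\le k}+V_{>k}$ along the first $k$ coordinates. For $V_{>k}$, $A\succeq A_k$ gives $A^{-2}\preceq\mu_n(A_k)^{-2}I_n$, so $V_{>k}\le\mu_n(A_k)^{-2}\sum_{i>k}\lsource_i\ltarget_i\|z_i\|^2\le cn\,\mu_n(A_k)^{-2}\sum_{i>k}\lsource_i\ltarget_i$, and $\mu_n(A_k)\ge\tfrac1c\sum_{i>k}\lsource_i$ produces the tail term of $\overline V$. For $V_{\le k}$, set $M:=\xsource_0^\top A_k^{-1}\xsource_0$; applying the Woodbury identity to $A=A_k+\xsource_0\xsource_0^\top$ gives $\xsource_0^\top A^{-1}=(I_k+M)^{-1}\xsource_0^\top A_k^{-1}$, hence
\[
  \xsource_0^\top A^{-2}\xsource_0=(I_k+M)^{-1}(\xsource_0^\top A_k^{-2}\xsource_0)(I_k+M)^{-1}\preceq\mu_n(A_k)^{-1}(I_k+M)^{-1}M(I_k+M)^{-1}\preceq\mu_n(A_k)^{-1}M^{-1},
\]
the last step using $t/(1+t)^2\le 1/t$. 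Since $M\succeq\mu_1(A_k)^{-1}\xsource_0^\top\xsource_0\succeq\tfrac{n}{c\,\mu_1(A_k)}\diag(\lsource_1,\dots,\lsource_k)$, taking the trace of $\diag(\ltarget_1,\dots,\ltarget_k)\,\xsource_0^\top A^{-2}\xsource_0$ and using $\mu_1(A_k)/\mu_n(A_k)\le c$ gives $V_{\le k}\le\tfrac{c}{n}\sum_{i\le k}\ltarget_i/\lsource_i$, the head term of $\overline V$. A union bound over the finitely many invoked high-probability events gives the stated failure probabilities.

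The hard part is the head estimate $V_{\le k}$: it is the only genuinely new ingredient, since the in-distribution analyses never separate the roles of the source and target spectra. The decisive move is to bound $(I_k+M)^{-1}M(I_k+M)^{-1}\preceq M^{-1}$ rather than the cruder $(I_k+M)^{-1}M(I_k+M)^{-1}\preceq\tfrac14 I_k$: the former turns $V_{\le k}$ into $\operatorname{tr}\!\big(\diag(\ltarget_i)\,M^{-1}\big)$, which after lower-bounding $M$ by $\tfrac{n}{c\,\mu_1(A_k)}\diag(\lsource_i)$ yields precisely $\sum_{i\le k}\ltarget_i/\lsource_i$; the weaker bound would instead leave a spurious factor of the source head's condition number and destroy the correct $O(k/n)$ rate in the in-distribution case $\covtarget=\covsource$. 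The only other technical care is making the quadratic-form concentration in the lower bound uniform in $i$ so that the union bound does not degrade with $p$, handled exactly as in \citet{bartlett2020benignpnas}.
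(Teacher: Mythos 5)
Your closed-form reduction of $V$, the Sherman--Morrison rewriting, and the Cauchy--Schwarz-plus-Lemma~9 architecture for the lower bound are all exactly the paper's route, so that part checks out modulo one phrasing issue noted below. Where you genuinely depart from the paper is in the head term $V_{\le k}$ of the upper bound: you treat the first $k$ columns as a block via the Woodbury identity and the operator bound $(I_k+M)^{-1}M(I_k+M)^{-1}\preceq M^{-1}$, then push $\mathrm{diag}(\tilde\lambda_i)$ through $M^{-1}\preceq \tfrac{c\,\mu_1(A_k)}{n}\mathrm{diag}(\lambda_i^{-1})$. The paper instead handles each $i\le k$ separately, dropping the $1$ in $(1+\lambda_i z_i^\top A_{-i}^{-1}z_i)$ and using the projection $\Pi_{\mathcal{L}_i}$ to show each summand is $\le c/n$ uniformly. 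Your block argument is clean and avoids the per-coordinate projection gymnastics, but it does lean on $\mu_n(Z_0^\top Z_0)\ge n/c$, which is not in Lemma~\ref{lemma:bartlett_lemma10} or Corollary~\ref{cor:bartlett_cor24}; it is a standard subgaussian operator-norm fact, but you would need to import it explicitly.

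There is, however, a genuine gap in your tail upper bound. You write $V_{>k}\le\mu_n(A_k)^{-2}\sum_{i>k}\lambda_i\tilde\lambda_i\|z_i\|^2$ and then replace $\|z_i\|^2$ by $cn$ uniformly over $i>k$. That step is a union bound over up to $p-k$ events, and the failure probability then scales like $p\,e^{-n/c}$, which is not $O(e^{-n/c})$ unless $p\lesssim e^{n/c}$. The theorem makes no such assumption on $p$ (by contrast, the bias upper bound, Theorem~\ref{thm:main_results_bias_bounds}, explicitly adds the assumption that $p$ is at most exponential in $n$ — precisely because its proof does use a union bound over all $p$ coordinates). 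The paper avoids this by applying the Bernstein-type inequality of Corollary~\ref{cor:bernstein_any_seq} directly to the weighted centered sum $\sum_{i>k}\lambda_i\tilde\lambda_i(\|z_i\|^2-n)$, obtaining $\sum_{i>k}\lambda_i\tilde\lambda_i\|z_i\|^2\le cn\sum_{i>k}\lambda_i\tilde\lambda_i$ with probability $1-2e^{-t}$ independently of $p$. You would need to replace your per-$i$ bound with this aggregate concentration step for the tail.

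A smaller issue in the lower bound: you phrase the per-coordinate step as ``lower-tail concentration of $s_i$ around $\operatorname{tr}(A_{-i}^{-1})$.'' As stated this does not quite work, since $\mu_n(A_{-i})$ can be tiny (we are not assuming $\rho_k\ge b$ in the lower bound), so $\|A_{-i}^{-1}\|_{op}$ and $\|A_{-i}^{-1}\|_F$ are uncontrolled and Hanson--Wright gives nothing around $\operatorname{tr}(A_{-i}^{-1})$. What actually works, and what the paper does, is to discard the $k$ problematic directions before invoking concentration: project $z_i$ onto the span $\mathcal{L}_i$ of the bottom $n-k$ eigenvectors of $A_{-i}$, use $s_i\ge\|\Pi_{\mathcal{L}_i}z_i\|^2/\mu_{k+1}(A_{-i})$, and concentrate $\|\Pi_{\mathcal{L}_i}z_i\|^2$ (whose underlying matrix is a rank-$(n-k)$ projection, hence perfectly conditioned) via Corollary~\ref{cor:bartlett_cor24}. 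Your parenthetical gestures at this, but the object to concentrate is the projected norm, not $s_i$ itself, and making that precise is necessary for the argument to close.
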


We first note that the variance lower bound does not depend on $\rho_k \geq b$ and so it holds for any interpolating linear model, even when benign source conditions are not satisfied.
However, we will see that if $\rho_k \geq b$ for some $k$, then the upper and lower bounds are tight.
In the case where $\covtarget = \covsource$, these bounds reduce to their in-distribution counterparts \citep{bartlett2020benignpnas}. 
Our variance bounds show that the excess risk contribution of each feature is scaled by the ratio of the target and source eigenvalues, $\ltarget_i/\lsource_i$. 
We immediately see that scaling down the target eigenvalues will lessen the overall contribution to variance and that scaling up the target eigenvalues will increase the contribution.  
We investigate these scaling factors and the separation of the first $k$ components and last $p-k$ components in Section \ref{sec:taxonomy_shifts}.

We now state upper and lower bounds for the bias term, $\bias$, given in Equation \ref{eqn:bias_b2}. 
The proof of the following theorem can be found in Appendix \ref{apdx:bias_bound_proof}.
\begin{restatable}{theorem}{biasublb}(Upper and lower bounds for the bias term)
\label{thm:main_results_bias_bounds}
For the lower bound only, assume that random models $\overline{\theta}$ are obtained from the underlying $\tsource$ as $(\overline{\theta})_i = \gamma_i (\tsource)_i$, where each $\gamma_i$ is an independent Rademacher random variable. 
There exists a universal constant $b > 1$, constants $c, C$ that depend only on $b$ and $\sigma_x$, and $k < n/C$ such that if $\rho_k \geq b$, then with probability at least $1-10e^{-n/c},$
\begin{align*}
    \E_{\Bar{\theta}} [\bias] &\geq \frac{1}{c}\l( \sum_{i=1}^k \frac{\ltarget_i}{\lsource_i}\frac{\lsource_i(\tsource)_i^2}{(1 + \frac{\lsource_i}{\lambda_{k+1}\rho_k})^2} + \sum_{i > k}  \ltarget_i (\tsource)_i^2 \r)=:\underline{\bias}.
\end{align*}

If we assume that $p$ is at most exponential in $n$, then with probability at least $1 - 5e^{-n/c}$,
\begin{align*}
    \bias/c \leq \snorm{\tsource}^2\sum_{i=1}^p\frac{\ltarget_i}{\lsource_i}\frac{\lsource_i}{\big(1 + \frac{\lsource_i}{\lsource_{k+1}\rho_k}\big)}=:\overline{\bias}/c.
\end{align*}

\end{restatable}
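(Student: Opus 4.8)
The plan is to reduce both bounds to a coordinatewise analysis of the residual operator and then transfer the in-distribution estimates of \citet{bartlett2020benignpnas,tsigler2023benignjmlr} to the target covariance by a per-coordinate reweighting. Since $\covsource$ and $\covtarget$ commute (Assumption~\ref{asm:cov_assum}), I first rotate into their common eigenbasis and take both to be diagonal; this changes neither $\bias$ nor $\snorm{\tsource}$ nor the Rademacher construction of $\overline{\theta}$. Writing $P:=\xsource^\top(\xsource\xsource^\top)^{-1}\xsource$ for the orthogonal projection onto the row space of $\xsource$, one has $\mni(\xsource\theta)=P\theta$, so for any $\theta$ the bias-type quantity $\snorm{\theta-\mni(\xsource\theta)}_{\covtarget}^2$ equals $\theta^\top M\theta$ with $M:=(I-P)\covtarget(I-P)$; as $\covtarget$ is diagonal, $M_{ii}=\sum_l\ltarget_l(I-P)_{il}^2\ge\ltarget_i(I-P)_{ii}^2$. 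For the lower bound, substituting $\overline{\theta}$ with $(\overline{\theta})_i=\gamma_i(\tsource)_i$ and taking $\E_{\overline{\theta}}$ kills every off-diagonal term, leaving the exact identity $\E_{\overline{\theta}}[\bias]=\sum_i(\tsource)_i^2M_{ii}$ for every realization of $\xsource$. So the lower bound reduces to lower bounds on the diagonal entries $(I-P)_{ii}$, and the upper bound to controlling $\sum_i\ltarget_i\bigl((I-P)\tsource\bigr)_i^2$; in both, the only new ingredient relative to the in-distribution case is that coordinate $i$ is reweighted from $\lsource_i$ to $\ltarget_i=\tfrac{\ltarget_i}{\lsource_i}\lsource_i$, which is legitimate exactly because all the matrices are simultaneously diagonal.

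Next I split the coordinates into a head $\{1,\dots,k\}$ and a tail $\{k+1,\dots,p\}$, with the column split $\xsource=[\,\xsource_{\le k}\mid\xsource_{>k}\,]$ and $A:=\xsource_{>k}\xsource_{>k}^\top$, and write the head and tail blocks of $\tsource$ as $a\in\R^k$ and $b\in\R^{p-k}$. Conditioning on the high-probability events of Lemma~\ref{lemma:bartlett_lemma10} and Lemma~\ref{cor:bartlett_cor24} — available because $\rho_k\ge b$, $k<n/C$, and (for the statements needed uniformly over all $p$ coordinates in the upper bound) $p$ is at most exponential in $n$ — $A$ is well-conditioned with smallest and largest eigenvalues both of order $\sum_{i>k}\lsource_i=n\lsource_{k+1}\rho_k$, and the whitened head design is nearly isotropic (the cited lemmas control the singular values of $\xsource_{\le k}$ after whitening by $\covsource$). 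Applying the Woodbury identity to $\xsource\xsource^\top=\xsource_{\le k}\xsource_{\le k}^\top+A$ with $B:=\xsource_{\le k}^\top A^{-1}\xsource_{\le k}$ gives $\bigl((I-P)\tsource\bigr)_{\le k}=(I+B)^{-1}\bigl(a-\xsource_{\le k}^\top A^{-1}\xsource_{>k}b\bigr)$, and on the good event $B$ agrees, up to universal constants, with the diagonal matrix $\diag\bigl(\lsource_i/(\lsource_{k+1}\rho_k)\bigr)_{i\le k}$ — the data-induced ridge regularization of strength $\tfrac1n\sum_{i>k}\lsource_i$ familiar from the in-distribution theory — so $(I+B)^{-1}$ produces the shrinkage factor $1/(1+\lsource_i/(\lsource_{k+1}\rho_k))$.

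For the upper bound, on the tail $I-P$ is a contraction, so $\bigl((I-P)\tsource\bigr)_i^2\le\snorm{\tsource}^2$, and since $\lsource_i/(\lsource_{k+1}\rho_k)\le1/\rho_k\le1/b$ for $i>k$ the denominator is $\Theta(1)$, giving $\sum_{i>k}\ltarget_i\bigl((I-P)\tsource\bigr)_i^2\le2\snorm{\tsource}^2\sum_{i>k}\ltarget_i/(1+\lsource_i/(\lsource_{k+1}\rho_k))$. On the head, substitute the Woodbury expression, split the square coordinatewise, and bound each piece by $\snorm{\tsource}^2$ times a trace via $\theta^\top M'\theta\le\snorm{\theta}^2\operatorname{tr}(M')$ for positive semidefinite $M'$: the main piece is $\snorm{\tsource}^2\sum_{i\le k}\ltarget_i\,[(I+B)^{-2}]_{ii}\lesssim\snorm{\tsource}^2\sum_{i\le k}\ltarget_i/(1+\lsource_i/(\lsource_{k+1}\rho_k))$ using $B\approx\diag(\lsource_i/(\lsource_{k+1}\rho_k))$, and the cross-term is absorbed by $\snorm{\tsource}^2$ times the same kind of quantity using $\xsource_{>k}^\top A^{-1}\xsource_{>k}\preceq I$ and $(I+B)^{-1}B(I+B)^{-1}\preceq(I+B)^{-1}$. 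Adding the two regimes gives $\bias\lesssim\overline{\bias}$. For the lower bound I use $\E_{\overline{\theta}}[\bias]=\sum_i(\tsource)_i^2M_{ii}$ and $M_{ii}\ge\ltarget_i(I-P)_{ii}^2$: on the head, the elementary inequality $[(I+B)^{-1}]_{ii}\ge1/(1+B_{ii})$ (valid for positive semidefinite $B$) together with $B_{ii}=(\xsource e_i)^\top A^{-1}(\xsource e_i)\asymp\lsource_i/(\lsource_{k+1}\rho_k)$ gives $(I-P)_{ii}=[(I+B)^{-1}]_{ii}\gtrsim1/(1+\lsource_i/(\lsource_{k+1}\rho_k))$, whence $M_{ii}\gtrsim\tfrac{\ltarget_i}{\lsource_i}\cdot\lsource_i/(1+\lsource_i/(\lsource_{k+1}\rho_k))^2$; on the tail, $(I-P)_{ii}=1-(\xsource e_i)^\top(\xsource\xsource^\top)^{-1}(\xsource e_i)\ge1-\snorm{A^{-1}}\snorm{\xsource e_i}^2\ge1-c\lsource_i/(\lsource_{k+1}\rho_k)\ge1-c/b\ge\tfrac12$ once $b$ is a large enough universal constant, so $M_{ii}\gtrsim\ltarget_i$. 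Plugging these diagonal estimates into the identity yields $\E_{\overline{\theta}}[\bias]\ge\underline{\bias}$.

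The main obstacle is the sharp two-sided control of $I-P$ on the head coordinates — equivalently, that $B=\xsource_{\le k}^\top A^{-1}\xsource_{\le k}$ concentrates, up to universal constants and in both directions (including its off-diagonal part), around $\diag(\lsource_i/(\lsource_{k+1}\rho_k))$, so that the residual operator acts there like ridge regression with the data-induced regularization. This is precisely where the benign-overfitting hypotheses ($\rho_k\ge b$, $k<n/C$, high-effective-rank tail) must be pushed through the sub-Gaussian concentration of $\xsource_{\le k}$ and of $A^{-1}$ inside the Woodbury algebra, and it is imported essentially verbatim from the in-distribution analyses of \citet{bartlett2020benignpnas} and \citet{tsigler2023benignjmlr}; the covariate-shift content is then only the bookkeeping reweighting $\lsource_i\mapsto\ltarget_i$. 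The looseness of the upper bound — a single rather than a squared shrinkage factor, and $\snorm{\tsource}^2$ in place of $(\tsource)_i^2$ — is the harmless cost of discarding the head--tail cross-term, not a genuine barrier.
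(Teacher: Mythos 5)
Your proof takes a genuinely different route from the paper's. The paper handles each coordinate individually via the rank-one Sherman--Morrison update $1-\lsource_i z_i^\top A^{-1} z_i = (1+\lsource_i z_i^\top A_{-i}^{-1} z_i)^{-1}$ on the full Gram matrix $A = XX^\top$; you instead work with the block Woodbury decomposition of $(XX^\top)^{-1}$ around the tail Gram $A = \xsource_{>k}\xsource_{>k}^\top$ and the $k\times k$ matrix $B = \xsource_{\leq k}^\top A^{-1}\xsource_{\leq k}$, more in the spirit of \citet{tsigler2023benignjmlr}'s ridge analysis. For the upper bound the paper applies Cauchy--Schwarz to the full quadratic form early (producing $\snorm{\tsource}^2$ times a scalar), then shows that scalar equals $\sum_i\ltarget_i(1-\lsource_i z_i^\top A^{-1}z_i)$; you instead invoke a trace inequality and then chase head, tail, and cross-term pieces separately. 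Your identity $(I-P)_{\leq k,\leq k}=(I+B)^{-1}$ and the two-sided operator estimate $B\asymp\diag(\lsource_i/(\lsource_{k+1}\rho_k))_{i\leq k}$ are correct on the usual good event, but the cross-term control (you would need, e.g., $\tr\bigl((I+B)^{-1}\Lambda'(I+B)^{-1}B\bigr)\leq\tr\bigl(\Lambda'(I+B)^{-1}\bigr)$ with $\Lambda'=\diag(\ltarget_1,\dots,\ltarget_k)$, together with $A^{-1}\xsource_{>k}\xsource_{>k}^\top A^{-1}=A^{-1}$) is left implicit and is where the work lives. It appears to close, but it is strictly more bookkeeping than the paper's Cauchy--Schwarz-first route.

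There is, however, a genuine gap in your lower bound. For the tail you argue $(I-P)_{ii}\geq 1 - c\lsource_i/(\lsource_{k+1}\rho_k)\geq 1/2$ simultaneously for all $i>k$ on a single good event. That event contains $\snorm{z_i}^2\leq c n$ for every $i>k$, which requires a union bound over up to $p$ coordinates and hence the hypothesis $p\leq e^{n/c}$. But the theorem imposes that hypothesis only on the upper bound, not the lower bound. The paper avoids the union bound entirely: it shows the per-index inequality $\eta_i\geq t_i$ holds \emph{marginally} with probability at least $1-5e^{-n/c}$ for each $i$, and then applies Lemma~\ref{lem:bartlett_lemma15} (Lemma 9 of \citet{bartlett2020benignpnas}), which converts marginal per-index lower bounds on non-negative terms into a lower bound on their sum at only a factor-two cost in the constant and a factor-two cost in the failure probability, with no dependence on $p$. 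As written, your argument would prove a strictly weaker lower bound with an extra hypothesis. You need to replace "condition on the joint good event" with the Lemma~\ref{lem:bartlett_lemma15} device to recover the stated conclusion.
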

Note that while the lower bound is in expectation over the random models $\Bar{\theta}$, the resulting expression only depends on the ground-truth $\tsource$.
This Bayesian approach also appears in prior work, i.e. \citet{tsigler2023benignjmlr}.
In studying the bias lower bound, we observe a similar separation of signal and noise components and depence on eigenvalue ratios as in the variance bounds.

To show tightness of our bounds, we assume there exists a $k$ such that $\rho_k \geq b$ for some universal constant $b > 1$.
When this condition is satisfied, the variance bounds are tight up to constant factors.
The bias bounds leave a model-dependent and source covariance-dependent gap, which we discuss in the proof overview in Appendix \ref{apdx:proof_sketch} and in the complete proof found in Appendix \ref{apdx:tightness_proofs}.
\begin{restatable}{theorem}{variancebiastight}(Tightness of variance and bias bounds)
    \label{thm:tightness}
    Let the lower bound and upper bound of $V$ be given by $\underline{V}$ and $\overline{V}$, respectively. There exists a universal constant $b \geq 1$, and constant $c$ as defined in Theorem \ref{thm:main_results_variance_bounds}, and $k \in (0, n/c)$ such that if $\rho_k \geq b$, then 
    \begin{align*}
        \underline{V} / \overline{V} \in \left[b^{-2} (1 + b)^{-2}/c^2, 1\right]. 
    \end{align*}

    Let the lower bound and upper bound of $\bias$ be given by $\underline{\bias}$ and $\overline{\bias}$, respectively, and the assumptions of Theorem \ref{thm:main_results_bias_bounds} be satisfied.
    Then
    \begin{align*}
        \underline{\bias} / \overline{\bias}  &\in \l[ \frac{\min_i \l\{(\tsource)_i^2 \ : \ (\tsource)_i \neq 0\r\}}{\snorm{\tsource}^2 \l( 1 + b^{-1}\frac{\lambda_1}{\lambda_{k+1}}\r)}, 1 \r].
    \end{align*}
\end{restatable}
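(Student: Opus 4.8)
All four quantities $\underline{V},\overline{V},\underline{\bias},\overline{\bias}$ are deterministic functions of the eigenvalues and of $\tsource$, so both set memberships are deterministic facts about the closed forms in Theorems~\ref{thm:main_results_variance_bounds} and~\ref{thm:main_results_bias_bounds}; the plan is to compare those forms directly, splitting every sum into a head ($i\le k$) and a tail ($i>k$), bounding the two groups separately, then recombining with the mediant inequality $\frac{a_1+a_2}{A_1+A_2}\in\big[\min_i\frac{a_i}{A_i},\ \max_i\frac{a_i}{A_i}\big]$. I take $k=k^{*}:=\min\{j:\rho_j\ge b\}$, which the hypothesis places in $(0,n/c)$; minimality supplies the extra fact $\rho_{k-1}<b$, and I repeatedly use the identity $\sum_{i>k}\lsource_i=n\lambda_{k+1}\rho_k$, the orderings $\lsource_i\ge\lambda_{k+1}$ for $i\le k$ and $\lsource_i\le\lambda_{k+1}$ for $i>k$, and $b,c>1$.

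\emph{Variance.} The upper inclusion $\underline{V}/\overline{V}\le 1$ is termwise: on the head use $\min(1,\cdot)\le 1$ and $1/c\le c$; on the tail use $\min(1,\cdot)\le \lsource_i^2/(\lambda_{k+1}^2(\rho_k+1)^2)$ and substitute $\sum_{i>k}\lsource_i=n\lambda_{k+1}\rho_k$, which makes the tail of $\underline{V}$ exactly $\rho_k^2/(c^2(\rho_k+1)^2)<1$ times the tail of $\overline{V}$. For the matching lower bound, the tail ratio $\rho_k^2/(c^2(\rho_k+1)^2)$ is $\ge b^2/(c^2(b+1)^2)$ since $\rho\mapsto\rho^2/(\rho+1)^2$ increases and $\rho_k\ge b$. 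For the head, the crucial step converts minimality of $k^{*}$ into a lower bound on $\lsource_i/\lambda_{k+1}$: rearranging $\rho_{k-1}=\tfrac1n+\tfrac{\lambda_{k+1}}{\lsource_k}\rho_k<b$ gives $\lsource_i/\lambda_{k+1}\ge\lsource_k/\lambda_{k+1}>\rho_k/b$ for every $i\le k$, whence $\lsource_i^2/(\lambda_{k+1}^2(\rho_k+1)^2)>b^{-2}\big(\rho_k/(\rho_k+1)\big)^2\ge(b+1)^{-2}$, so $\min(1,\cdot)\ge(b+1)^{-2}$ on the head and the head ratio is $\ge (b+1)^{-2}/c^2$. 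Recombining, $\underline{V}/\overline{V}\ge\min\{(b+1)^{-2},\,b^2(b+1)^{-2}\}/c^2=(b+1)^{-2}/c^2\ge b^{-2}(1+b)^{-2}/c^2$ (degenerate cases where a whole group vanishes are handled by keeping only the surviving group, whose ratio is already $\ge b^2/(c^2(b+1)^2)$).

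\emph{Bias.} Write $d_i:=1+\lsource_i/(\lambda_{k+1}\rho_k)$; the same $d_i$ appears in $\underline{\bias}$ (squared) and $\overline{\bias}$ (to the first power). The upper inclusion is again termwise, using $(\tsource)_i^2\le\snorm{\tsource}^2$, $d_i^{-2}\le d_i^{-1}$ on the head, and $d_i\le 1+1/b$ on the tail (if the leftover constant is inconvenient, intersect instead the two high-probability events of Theorems~\ref{thm:main_results_variance_bounds}--\ref{thm:main_results_bias_bounds}, on which $\underline{\bias}\le\E_{\overline{\theta}}[\bias]\le\overline{\bias}$, both endpoints being deterministic). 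For the matching lower bound the obstacle is the square in $\underline{\bias}$; the resolution is that $d_i$ is \emph{uniformly} pinched, $1\le d_i\le\Delta:=1+b^{-1}\lambda_1/\lambda_{k+1}$ for all $i$ (on the head from $\lsource_i\le\lambda_1$ and $\rho_k\ge b$; on the tail even $d_i\le1+1/b\le\Delta$ since $\lambda_1\ge\lambda_{k+1}$), so $d_i^{-2}\ge\Delta^{-1}d_i^{-1}$ on the head and $1\ge\Delta^{-1}d_i^{-1}$ on the tail, giving $\underline{\bias}\gtrsim\Delta^{-1}\sum_{i}\ltarget_i(\tsource)_i^2/d_i$. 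Replacing each nonzero $(\tsource)_i^2$ by $m:=\min\{(\tsource)_j^2:(\tsource)_j\ne0\}$ and dividing by $\overline{\bias}\asymp\snorm{\tsource}^2\sum_i\ltarget_i/d_i$ yields $\underline{\bias}/\overline{\bias}\gtrsim m/(\Delta\snorm{\tsource}^2)$, matching $m/\big(\snorm{\tsource}^2(1+b^{-1}\lambda_1/\lambda_{k+1})\big)$ up to the universal constants $c$.

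\emph{Main obstacle.} The delicate part is the bias lower bound, and its delicacy is intrinsic: unlike the variance, $\bias$ depends on \emph{how} the signal mass is spread across coordinates, so no absolute-constant two-sided bound is possible — a single coordinate holding a tiny but nonzero share of $\snorm{\tsource}^2$ makes its contribution to $\underline{\bias}$ negligible while $\overline{\bias}$ is still charged the full $\snorm{\tsource}^2$. The substance of the argument is therefore to verify that this model factor $m/\snorm{\tsource}^2$, together with the benign-spectrum attenuation $\Delta=1+b^{-1}\lambda_1/\lambda_{k+1}$, is the \emph{only} source of looseness beyond universal constants, and the uniform pinching $1\le d_i\le\Delta$ is exactly what makes that bookkeeping close; the variance bounds, by contrast, are tight up to absolute constants.
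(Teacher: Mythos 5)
Your proposal is correct and follows essentially the same route as the paper's Appendix G: split $\underline{V},\overline{V}$ (resp. $\underline{\bias},\overline{\bias}$) into head/tail, compare the two sums term by term, and close via the mediant inequality, with the minimality of $k^*$ (giving $\lsource_k>\lambda_{k+1}\rho_k/b$) supplying the crucial head lower bound for the variance ratio. Your bookkeeping is in fact a little cleaner than the paper's — you use $\rho_k^{-1}\le b^{-1}$ where the paper wastes a factor, and you explicitly flag that the per-term bias ratio exceeds $1$ in the tail and must be absorbed into the constants $c$ (or handled via $\underline{\bias}\le\E_{\bar\theta}[\bias]\le\overline{\bias}$), whereas the paper's bias proof asserts without comment that every per-term ratio equals $(\tsource)_i^2\|\tsource\|^{-2}d_i^{-1}$, which is actually the reciprocal of the true ratio for $i>k$; the lower bound still survives since the true tail ratio is larger, but your version of the argument is more careful on this point.
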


Note that the gap between our bias upper and lower bounds is independent of the target distribution.

\subsection{A Taxonomy of Shifts}
\label{sec:taxonomy_shifts}
We now present a taxonomy of covariate shifts on the target distribution inspired by our prior analysis.
We first consider OOD generalization and formally categorize shifts as \textit{beneficial} or \textit{malignant}.
\begin{definition}[Beneficial and Malignant shifts]
    For a source distribution, ${\calD}_s$, a target distribution, ${\calD}_t$, excess risk, $R$, and MNI, $\mni$, we say that a shift is 
    \begin{enumerate}
        \item \textbf{beneficial} if $R(\mni, \calD_s) > R(\mni, {\calD}_t)$,
        \item \textbf{malignant} if $R(\mni, \calD_s) < R(\mni, {\calD}_t)$.
    \end{enumerate}
\end{definition}

We define these shifts for excess risk and note in Appendix \ref{apdx:additional_exps_synthetic_mni} that, empirically, the variance is the primary contributor to excess risk and the bias contributions are negligible when $\covsource$ satisfies benign overfitting conditions.
This is in keeping with prior literature that focuses on studying variance in interpolating methods \citep{bartlett2020benignpnas}.
We will thus focus on variance in the following discussion.

Prior work shows that if $n, p \to \infty$ at the same rate, $\tr(\covsource) < \tr(\covtarget)$ results in malignant shifts on excess risk and $\tr(\covsource) > \tr(\covtarget)$ results in beneficial shifts on excess risk \citep{tripuraneni2021overparam}.
In this section we generalize these conditions by considering differing rates of $n, p \to \infty$ and measuring overparameterization by the modified ``effective rank" measure $R_k/n$ rather than $p$.
This leads us to a novel characterization of the role of overparameterization in covariate shifts.
For completeness, we describe their trace conditions in terms of our shifts in Appendix \ref{apdx:trace_conditions_simple_shifts}.

We first consider a simplified example with separate multiplicative shifts on the signal and noise components to facilitate intuition. 
While this is a special case, it provides valuable insights into the dynamics of overparameterization and covariate shift that are relevant to practice. 
Our general results, which allow for arbitrary multiplicative shifts in every direction, are presented in Appendix \ref{apdx:sufficient_arbitrary_shifts}.

Let $\covsource$ be a covariance matrix that satisfes benign source conditions and denote the ID variance term by $V_{id}$. Define $\covtarget$ by
$\ltarget_i = \alpha \lsource_i $ for $ i \leq k$, and 
    $\ltarget_i= \beta \lsource_i $ for $i > k$
with $\alpha, \beta \geq 0$. Let $V_{ood}$ denote the OOD variance term. By Theorem \ref{thm:main_results_variance_bounds}, up to constants, \begin{align}
    V_{id} &\approx k/n + n/R_k, \\
    \label{eqn:var_ood_expression}
    V_{ood} &\approx \alpha (k/n) + \beta (n/R_k),
\end{align}
where $R_k = (\sum_{i > k} \lsource_i)^2 / (\sum_{i > k} \lsource_i^2)$.

It is clear that if $V_{ood} - V_{id} > 0$ then we have a malignant shift on the variance, and if $V_{ood} - V_{id} < 0$ then we have a beneficial shift on the variance.
Observe that,
\begin{align}
    \label{eqn:variance_difference}
    V_{ood} - V_{id} \approx (\alpha - 1) (k/n) + (\beta - 1) (n/R_k).
\end{align}
In this expression, we see that the scales of signal and noise shifts, $\alpha$ and $\beta$, are important, as is the relationship between $k/n$ (the ``classical" rate) and $n/R_k$ (the ``high-dimensional" rate).
The quantity $n/R_k$ can be interpreted as an inverse measure of overparameterization, where smaller values correspond to higher levels of overparameterization. 
The rate of overparameterization relative to the classical rate of $k/n$ determines whether the shift on the first $k$ components ($\alpha$) or the shift on the last $p-k$ components ($\beta$) contributes more to the difference in excess risk. 

\begin{figure}
\centering
\subfigure{\includegraphics[width=.4\textwidth]{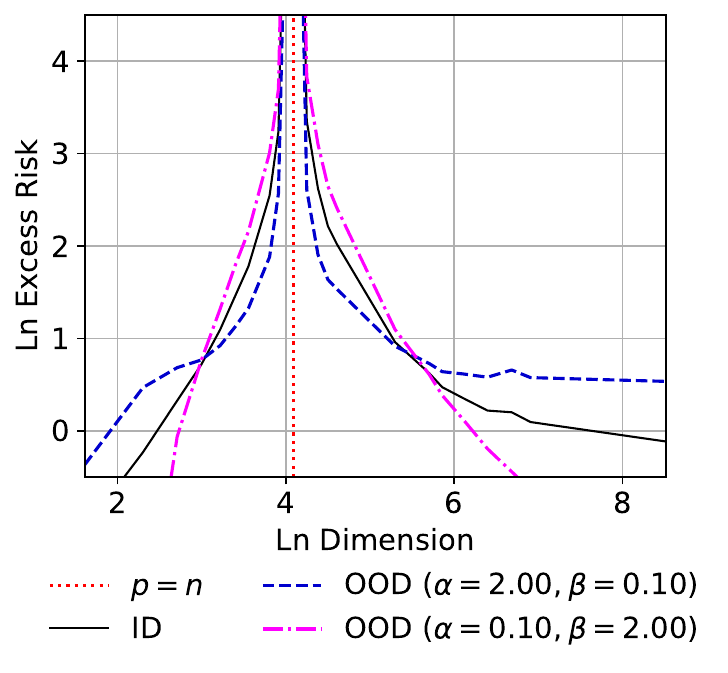}}

\caption{We experiment with the $(k, \delta, \epsilon)$ spiked covariance models and examine conditions for beneficial and malignant shifts as given in Theorem \ref{thm:benficial_malignant_shifts}. We take $n=60, k=10, \delta=1.0, \epsilon=1e^{-6}, \Tilde{\delta}=2.0, \Tilde{\epsilon} = 1e^{-7}$, and vary $p$. 
We see a cross-over from mild to severe overparameterization on the right side of $p=n$ where both OOD shifts swap between beneficial and malignant. For both ID and OOD curves, we observe that excess risk is a decreasing function if input dimension. Curves are averaged over 100 independent runs.}
\label{fig:k_eps_over_under}
\end{figure}

Based on this intuition, we define two regimes of overparameterization: \textit{mild} and \textit{severe}.

\begin{figure*}[htp]
    \centering
    \includegraphics[width=0.8\linewidth]{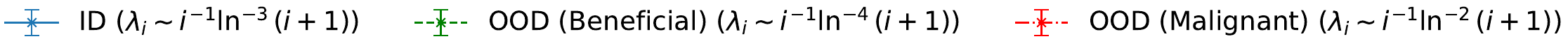}\\
    \subfigure[$n=200, p=20, h=512$]{\includegraphics[width=0.24\linewidth]{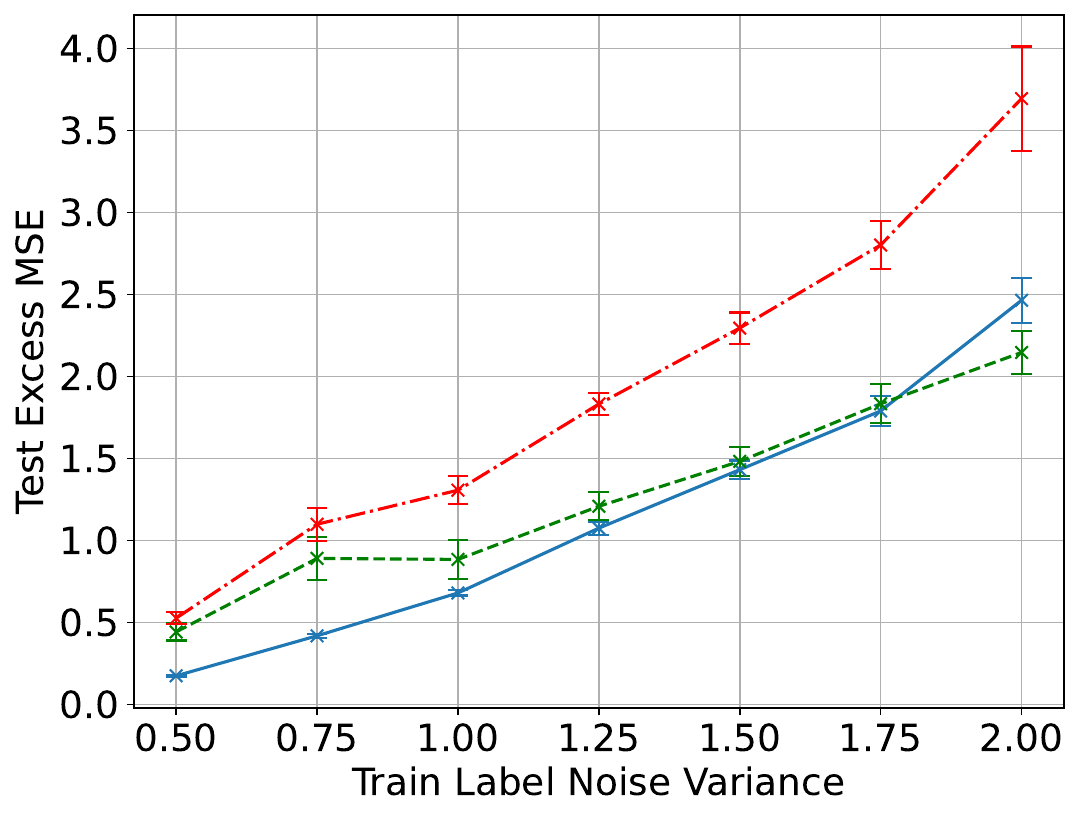}}\hfill
    \subfigure[$n=200, p=20, h=2048$]{\includegraphics[width=0.24\linewidth]{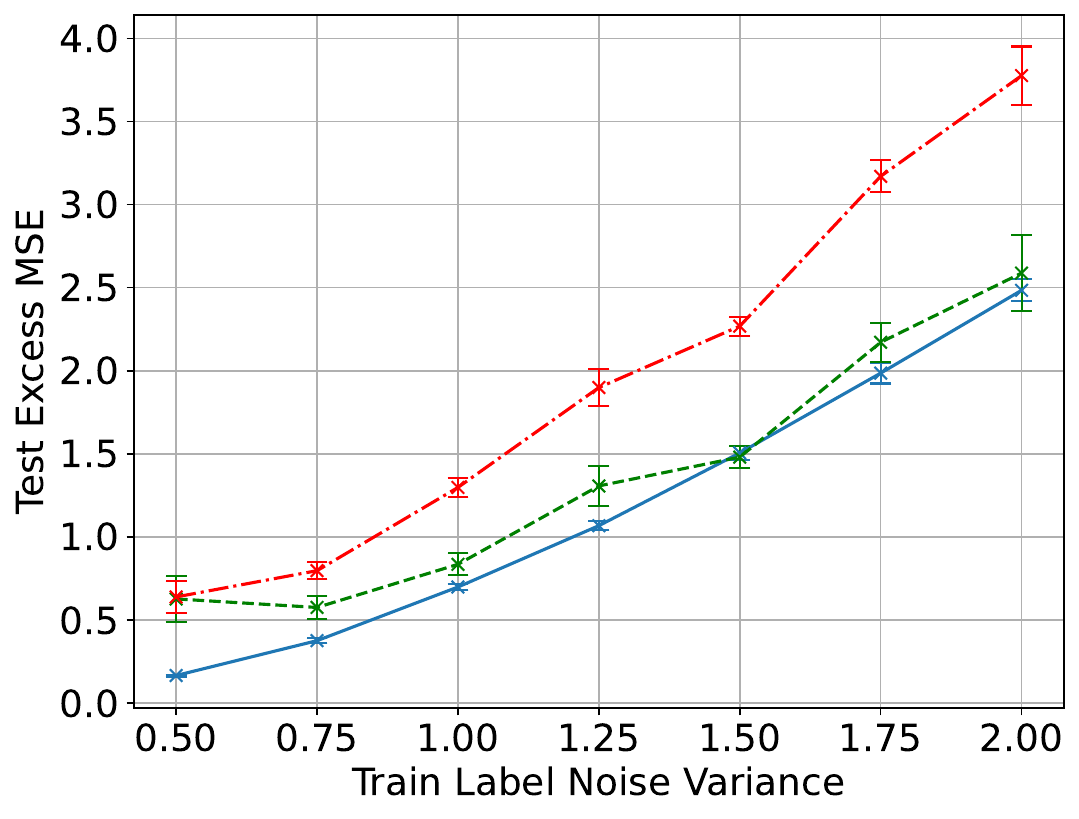}}\hfill
     \subfigure[$n=200, p=2k, h=512$]{\includegraphics[width=0.24\linewidth]{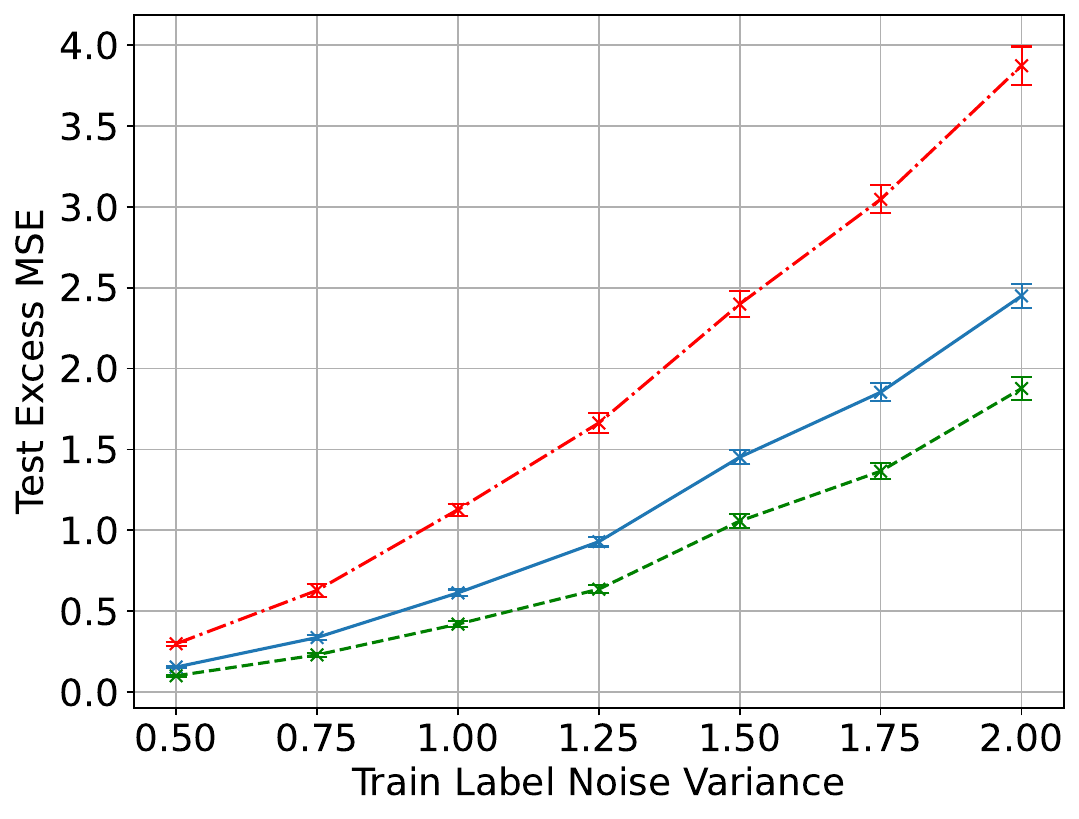}}
     \hfill
     \subfigure[$n=200, p=2k, h=2048$]{\includegraphics[width=0.24\linewidth]{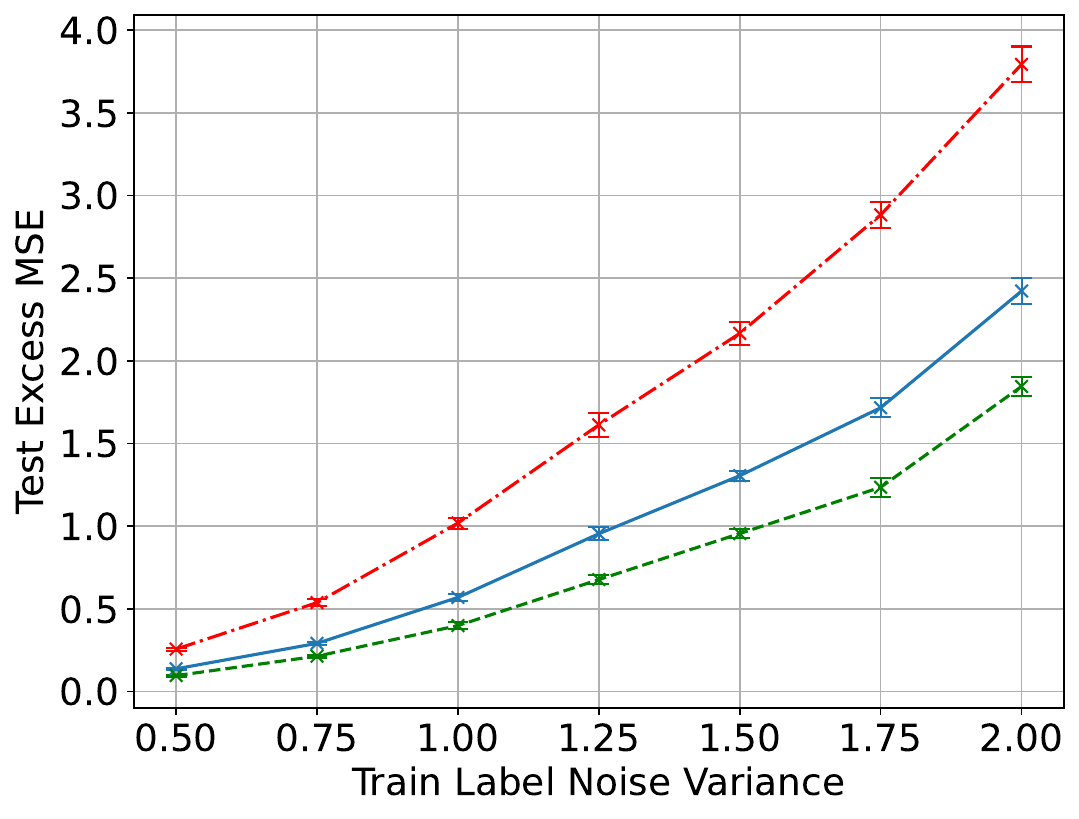}}
    \caption{We train 3 layer ReLU dense neural networks with hidden width, $h$, on $n$ samples from $p$-dimensional Gaussians. ID test data is sampled from the same distribution and OOD test sets are constructed based on beneficial and malignant covariate shifts in our theory. Ground truth models are sampled as $\tsource \sim \calS^{p-1}$, no model shift is invoked. For training data, $\xsource$, train labels are given by $\ysource = \xsource \tsource + \epssource$ with label noise $\epssource \sim \calN(0, \sigma^2)$. All runs reach train loss $< 5e^{-6}$. Points are averaged over 20 independent runs with standard error bars reported.}
    \label{fig:mlp_synth_regr}
\end{figure*}

\newpage
\begin{definition}[Mild and severe overparameterization for multiplicative shifts]
    Let $\covsource$ be a source covariance that satisfies benign source conditions and let $k \leq n$. Define $\covtarget$ as, $\ltarget_i = \alpha \lsource_i$ for $i \leq k$ and $\ltarget_i = \beta \lsource_i$ for $i > k$, with $\alpha, \beta \geq 0$.  
    Let $C_{\alpha\beta}:=\l|\frac{\alpha - 1}{1 - \beta}\r|$.

    We are in the \textbf{mildly overparameterized} regime if 
    \begin{equation*}
        n/R_k =\omega\l(C_{\alpha\beta}\cdot k/n \r).
    \end{equation*}
    We are in the \textbf{severely overparameterized} regime if
    \begin{equation*}      
        n/R_k =o\l(C_{\alpha\beta} \cdot k/n \r).
    \end{equation*}
    For $\beta = 1$ we define $C_{\alpha\beta} = \infty$ and thus are effectively in the severely overparameterized regime with regard to the types of shift we observe.
\end{definition}
It is clear that $k$ is important in defining regimes of overparameterization.
The aforementioned definitions hold for any $k < n$, however we derive our taxonomy of shifts in the case in which $\exists \ k < n$ such that $\rho_k \geq b$ for a universal constant $b > 1$.
We note that even for non-linear models or settings that do not exhibit benign overfitting we can still think about a notion of a ``$k$" akin to the intrinsic dimension of the data.
We empirically show in Figures \ref{fig:under_and_over} and \ref{fig:under_and_over_crossover_vary_n} that our taxonomy of shifts is reflective of shift behavior in realistic settings by heuristically taking $k$ small enough to sufficiently capture the low-dimensional signal in the data.

In Equation \ref{eqn:variance_difference}, we see that the limit of the severe overparameterization regime would take $R_k \to \infty$ first, while holding other problem parameters fixed.
In this case, we are only left with $\alpha$ shifts on the top $k$ components, as any shift on the bottom components is suppressed by the high rank covariance tail. 
This leads to behaviors akin to classical intuitions for an underparameterized linear regression estimator where $k = p < n$.
In this case, $\alpha > 1$ leads to more variance and thus harder learning, whereas $\alpha < 1$ leads to less variance and thus easier learning.
These notions of hard vs. easy learning naturally correspond to $\tr(\covtarget) > \tr(\covsource)$ and $\tr(\covtarget) < \tr(\covsource)$, respectively.
This is shown experimentally in Figs. \ref{fig:k_eps_over_under} and \ref{fig:under_and_over_crossover_vary_n} by looking at the left and right sides of the figures.

On the other hand, in the mildly overparameterized regime covariance tail shifts are not sufficiently suppressed and 
lead to non-negligible interactions with shifts on the signal components.
An increase in energy in the signal components can be counteracted by a decrease in energy in the noise components, effectively increasing the contrast between signal and noise in favor of the signal. Similarly, a decrease in energy in the signal components can be harmfully counteracted by an increase in energy in the noise components.
This is visible in Figs. \ref{fig:k_eps_over_under} and \ref{fig:under_and_over_crossover_vary_n} just above the threshold of interpolation.
Interestingly, in the mildly overparameterized regime we can also define settings in which $\tr(\covtarget) > \tr(\covsource)$ and yet still obtain a beneficial shift, and settings in which $\tr(\covtarget) < \tr(\covsource)$ and yet still obtain malignant shifts.

We formalize these observations in the following theorem, the proof of which can be found in Appendix \ref{apdx:proof_beneficial_malignant_simple_shifts}.
\begin{restatable}{theorem}{beneficialmalignantshifts}(Beneficial and Malignant Multiplicative Shifts on Variance)
    \label{thm:benficial_malignant_shifts}
    Let $\covsource$ be a source covariance that satisfies benign source conditions. That is, $\exists \ k$ such that $\rho_k \geq b$ for a universal constant $b > 1$. Define $\covtarget$ as $\ltarget_i = \alpha \lsource_i$ for $i \leq k$ and $\ltarget_i = \beta \lsource_i$ for $i > k$, with $\alpha, \beta \geq 0$.
    \begin{enumerate}
        \item If $\alpha < 1, \beta \leq 1$ or $\alpha \leq 1, \beta < 1$ then we obtain a beneficial shift in variance.
        \item If $\alpha > 1, \beta \geq 1$ or $\alpha \geq 1, \beta > 1$ then we obtain a malignant shift in variance.
        \item If we are in the mildly overparameterized regime:
        \begin{itemize}
            \item $\alpha > 1$ and $\beta < 1$ leads to beneficial shifts;
            \item $\alpha < 1$ and $\beta > 1$ leads to malignant shifts.
        \end{itemize}
        \item If we are in the severely overparameterized regime:
        \begin{itemize}
            \item $\alpha > 1$ and $\beta < 1$ leads to malignant shifts;
            \item $\alpha < 1$ and $\beta > 1$ leads to beneficial shifts.
        \end{itemize}
    \end{enumerate}
\end{restatable}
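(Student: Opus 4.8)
The plan is to compare the OOD and ID variance terms $V_{ood}$ and $V_{id}$ through a single exact identity, settle the two ``same-sign'' cases (parts 1 and 2) by inspection, and reduce the two ``opposite-sign'' cases (parts 3 and 4) to an asymptotic comparison fed by Theorems~\ref{thm:main_results_variance_bounds} and \ref{thm:tightness}. The starting point is that $\mni(\epssource)=\xsource^\top(\xsource\xsource^\top)^{-1}\epssource$ depends only on $\xsource$, so $V=\E_{\epssource}[\snorm{\mni(\epssource)}_{\covtarget}^2/v_{\epssource}^2]=\sum_{i=1}^p \ltarget_i\,w_i$ with $w_i:=v_{\epssource}^{-2}\E_{\epssource}[(\mni(\epssource))_i^2]=\snorm{(\xsource\xsource^\top)^{-1}\xsource e_i}^2\ge 0$ a function of $\xsource$ alone ($e_i$ the $i$-th standard basis vector). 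Setting $A:=\sum_{i\le k}\lsource_i w_i$ and $B:=\sum_{i>k}\lsource_i w_i$, the multiplicative shift gives $V_{ood}=\alpha A+\beta B$ and $V_{id}=A+B$, hence $V_{ood}-V_{id}=(\alpha-1)A+(\beta-1)B$. On a high-probability event $A>0$ and $B>0$ (e.g.\ by the variance lower bound of Theorem~\ref{thm:main_results_variance_bounds}, which does not require $\rho_k\ge b$, using $\lsource_1>0$ and $\lsource_{k+1}>0$; or simply because $\xsource$ is a.s.\ of full row rank with non-vanishing columns). Parts 1 and 2 then follow at once from the sign of $(\alpha-1)A+(\beta-1)B$: $\alpha<1,\beta\le 1$ (or $\alpha\le 1,\beta<1$) makes it strictly negative, and $\alpha>1,\beta\ge 1$ (or $\alpha\ge 1,\beta>1$) strictly positive; neither case uses the effective-rank comparisons of parts 3--4.

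For parts 3 and 4 the coefficients $(\alpha-1)$ and $(\beta-1)$ have opposite signs, so magnitudes must be compared, and the key point is that $A$ is exactly the variance term of $\covsource$ paired with the truncated target $\diag(\lsource_1,\dots,\lsource_k,0,\dots,0)$ and $B$ the variance term paired with $\diag(0,\dots,0,\lsource_{k+1},\dots,\lsource_p)$, again because $w_i$ ignores the target. Applying Theorem~\ref{thm:main_results_variance_bounds} to these two targets (legitimate since $\rho_k\ge b$), and evaluating the lower-bound $\min(1,\cdot)$ using $\lsource_i\le\lsource_{k+1}$ for $i>k$ together with $\rho_k\ge b$, yields on a high-probability event $A\le c\,k/n$ and $c^{-1}b^2(b+1)^{-2}\,n/R_k\le B\le c\,n/R_k$, so that $B\asymp n/R_k$ and $A\lesssim k/n$. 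In the severe regime, where additionally $n/R_k=o(C_{\alpha\beta}\,k/n)=o(k/n)$, the tightness statement of Theorem~\ref{thm:tightness} upgrades this to $A\gtrsim k/n$: writing $C':=b^2(1+b)^2c^2$, one has $A=V_{id}-B\ge \underline{V}_{id}-c\,n/R_k\ge \overline{V}_{id}/C'-c\,n/R_k\gtrsim k/n$, using $\overline{V}_{id}=c(k/n+n/R_k)$ and $n/R_k=o(k/n)$. Substituting into $V_{ood}-V_{id}=(\alpha-1)A+(\beta-1)B$, the two terms have magnitudes of order $|\alpha-1|\,k/n$ and $|\beta-1|\,n/R_k$, whose ratio is, up to constants, $C_{\alpha\beta}^{-1}\,(n/R_k)/(k/n)$. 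In the mild regime this ratio is $\omega(1)$, so the $(\beta-1)$-term dominates and $V_{ood}-V_{id}$ takes the sign of $\beta-1$, giving $\alpha>1,\beta<1\Rightarrow$ beneficial and $\alpha<1,\beta>1\Rightarrow$ malignant; in the severe regime it is $o(1)$, so the $(\alpha-1)$-term dominates and $V_{ood}-V_{id}$ takes the sign of $\alpha-1$, giving the opposite two conclusions. Each statement holds for all sufficiently large $n$ on the intersection of the relevant high-probability events.

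The step I expect to be the main obstacle is the lower bound $A\gtrsim k/n$: for an arbitrary index $k$ with $\rho_k\ge b$, the sum $\tfrac1n\sum_{i\le k}\min(1,\lsource_i^2/(\lambda_{k+1}^2(\rho_k+1)^2))$ need not be of order $k/n$, so one must either take $k=k^{*}=\min\{k:\rho_k\ge b\}$---then $\rho_{k-1}<b$ forces $\lsource_i/\lambda_{k+1}\gtrsim\rho_k$ for $i\le k$, making each $\min$ at least $(b+1)^{-2}$---or, as above, combine Theorem~\ref{thm:tightness} with the definition of the severe regime; in the mild regime only the unconditional bound $A\le c\,k/n$ is needed, so the difficulty is confined to the severe case. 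The remaining bookkeeping is routine: the high-probability events of Theorems~\ref{thm:main_results_variance_bounds} and \ref{thm:tightness} applied to the two truncated targets intersect at a cost $O(e^{-n/c})$, and the $\omega/o$ comparisons are invoked only after passing to large $n$.
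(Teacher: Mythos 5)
Your proof is correct, and for parts 1--2 it takes a genuinely cleaner route than the paper's. The paper compares $V_{ood}$ to $V_{id}$ only through the two-sided bounds of Theorem~\ref{thm:main_results_variance_bounds}: it writes $V_{ood}\le c_1\big(\alpha\,k/n+\beta\,n/R_k\big)$ and asserts this equals $V_{id}+c_1\big((\alpha-1)k/n+(\beta-1)n/R_k\big)$, which silently identifies $V_{id}$ with its upper bound $c_1(k/n+n/R_k)$; to actually conclude $V_{ood}<V_{id}$ from the sign of the bracket one would need the upper- and lower-bound constants $c_1,c_2$ to coincide (or enough slack to absorb the gap), a point the paper glosses over. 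Your opening identity sidesteps this entirely: $\mni(\epssource)$ is independent of the target, so $V=\sum_i\ltarget_i w_i$ with $w_i=\snorm{(\xsource\xsource^\top)^{-1}\xsource e_i}^2\ge 0$ a function of $\xsource$ alone, hence $V_{ood}-V_{id}=(\alpha-1)A+(\beta-1)B$ with $A=\sum_{i\le k}\lsource_i w_i$, $B=\sum_{i>k}\lsource_i w_i$, exactly and conditionally on $\xsource$. Parts 1 and 2 (and the boundary cases $\alpha=1$ or $\beta=1$, which the paper treats separately) then follow by sign alone, with no constants to track. For parts 3--4 you use the same ingredients as the paper, via the nice observation that $A$ and $B$ are themselves variance terms for truncated diagonal targets, so Theorem~\ref{thm:main_results_variance_bounds} gives $A\lesssim k/n$ and $B\asymp n/R_k$; you are also more careful about the one step that genuinely needs care, namely the lower bound $A\gtrsim k/n$, which is not automatic for an arbitrary $k$ with $\rho_k\ge b$. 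Your two suggested fixes both work: taking $k=k^*$ so that $\rho_{k-1}<b$ forces $\lambda_i/\lambda_{k+1}\gtrsim\rho_k$ for $i\le k$ (this is exactly the argument in the paper's proof of Theorem~\ref{thm:tightness}), or invoking the severe-regime hypothesis directly. The price of your bookkeeping is that the mild/severe dichotomy in parts 3--4 holds only for $n$ large enough to swamp the $b$- and $c$-dependent constants, but that is consistent with the $\omega/o$ form of the paper's definition of the two regimes, which the paper's own proof also implicitly relies on. In short: same bound ingredients, but a sharper algebraic skeleton that makes parts 1--2 exact and makes visible where constants actually matter in parts 3--4.
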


Figs. \ref{fig:k_eps_over_under} and \ref{fig:k_eps_extreme_overparam} demonstrate the relationship between the $n/R_k$ and $k/n$ rates in the case of $C_{\alpha\beta} = 1.11, C_{\alpha\beta} = 1$, respectively, for spiked covariance models.
In both, we clearly see a cross-over from beneficial to malignant shifts when we transition from mild to severely overparameterized.

\paragraph{Overparameterization improves OOD robustness}

Focusing on just the target excess risk, let $\alpha = \alpha(n)$ and $\beta = \beta(n, p)$.
We say that the benignly-overfit MNI is robust if its excess risk decays to zero despite the presence of multiplicative covariate shifts. 
In order for the variance upper bound to decay to 0, it is sufficient to have the shifts in the signal and noise components satisfy,
$\alpha = o(n/k),\; \beta = o(R_k/n).$
The condition $\beta = o(R_k/n)$ allows the shift strength to increase at a rate determined by the level of overparameterization, so we conclude that increasing the amount of overparameterization improves robustness to multiplicative distribution shifts.
Note that $\alpha$ has no dependence on $R_k$ and so robustness to shifts on the signal components is independent of the degree of overparameterization. 

\section{Experiments}
\label{sec:experiments}
Our theoretical results have provided insight into distribution shifts in high-dimensional linear regression.
We now present experiments with linear models and neural networks, relaxing many of the assumptions used for theoretical results.
Specifically,
we empirically: (1) observe beneficial and malignant shifts on synthetic and real data for linear models (benignly overfit and otherwise) and even high-dimensional dense neural networks; (2) show the benefit of overparameterization in covariate shift for interpolating linear estimators; (3) validate that our findings hold when the source and target covariance matrices are not simultaneously diagonalizable, as well as under model misspecification; (4) provide experimental insight that high-dimensional neural networks, i.e. when the input data dimension is large relative to the training sample size, act similarly to the MNI whereas low-dimensional neural networks do not, regardless of the level of overparameterization. Details of experimental setup, data, and models are given in Appendix \ref{apdx:experiment_details}.
We now discuss key observations and takeaways from the experiments.

\begin{figure*}[t]
    \centering
    \includegraphics[width=0.8\linewidth]{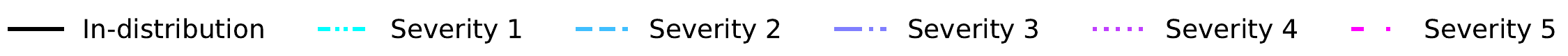}\\
    \subfigure[Noise then Blur Covariance]{\includegraphics[width=0.24\linewidth]{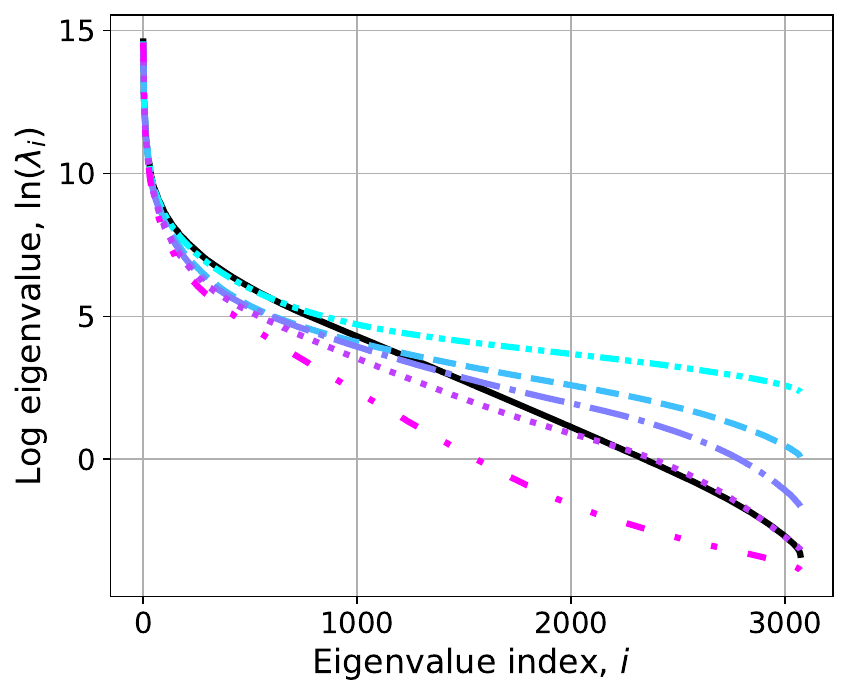}}\qquad
    \subfigure[Noise then Blur $n = 500$]{\includegraphics[width=0.24\linewidth]{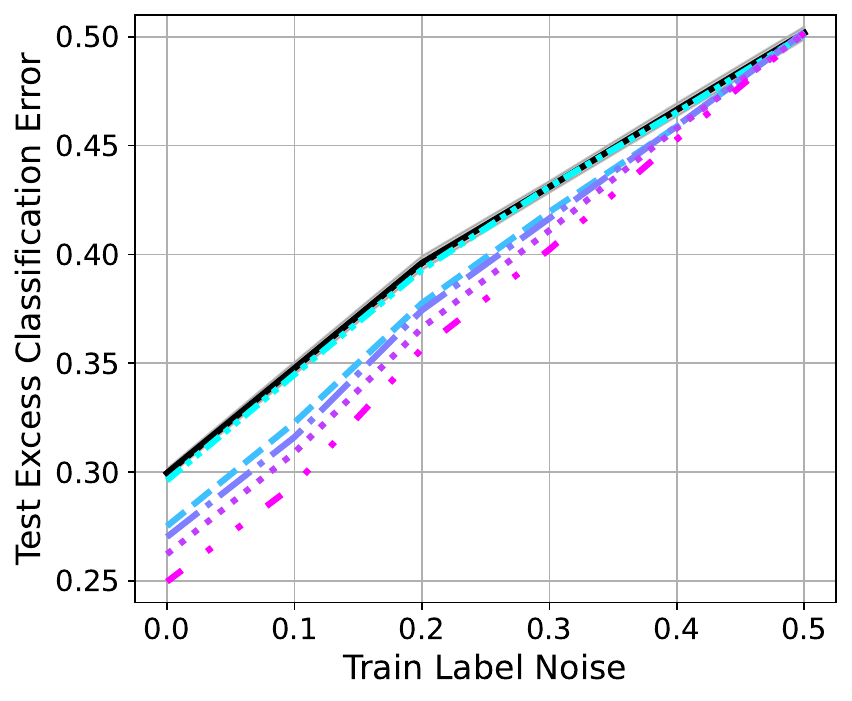}}\qquad
     \subfigure[Noise then Blur $n=2k$]{\includegraphics[width=0.24\linewidth]{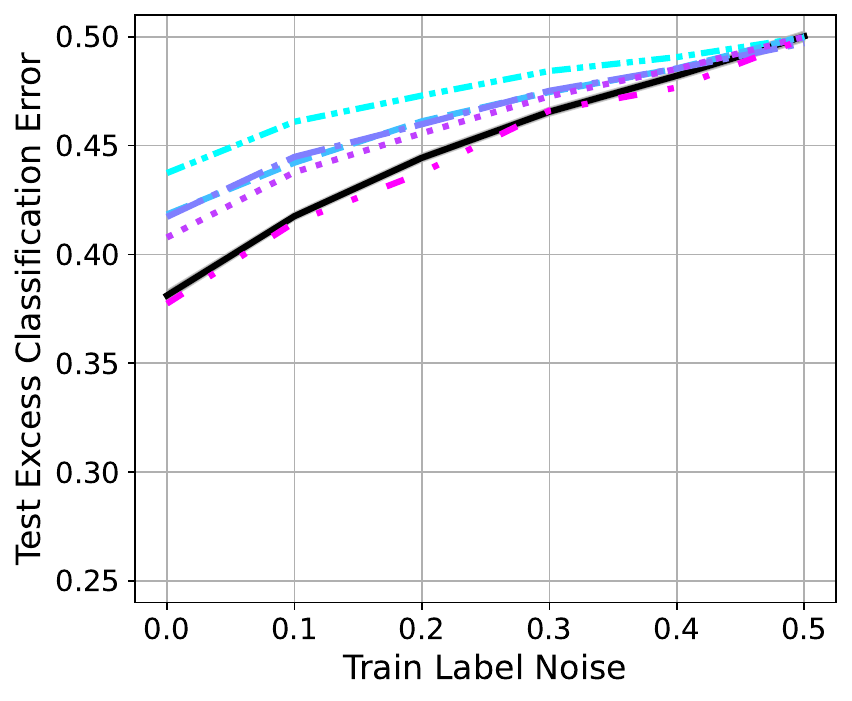}}\\
     \subfigure[Blur then Noise Covariance]{\includegraphics[width=0.24\linewidth]{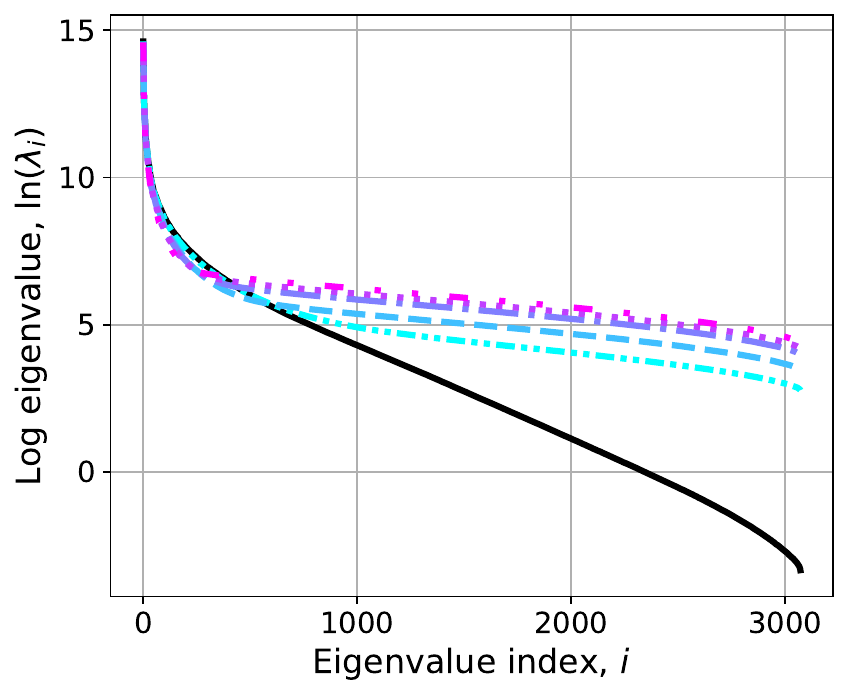}}\qquad
    \subfigure[Blur then Noise $n = 500$]{\includegraphics[width=0.24\linewidth]{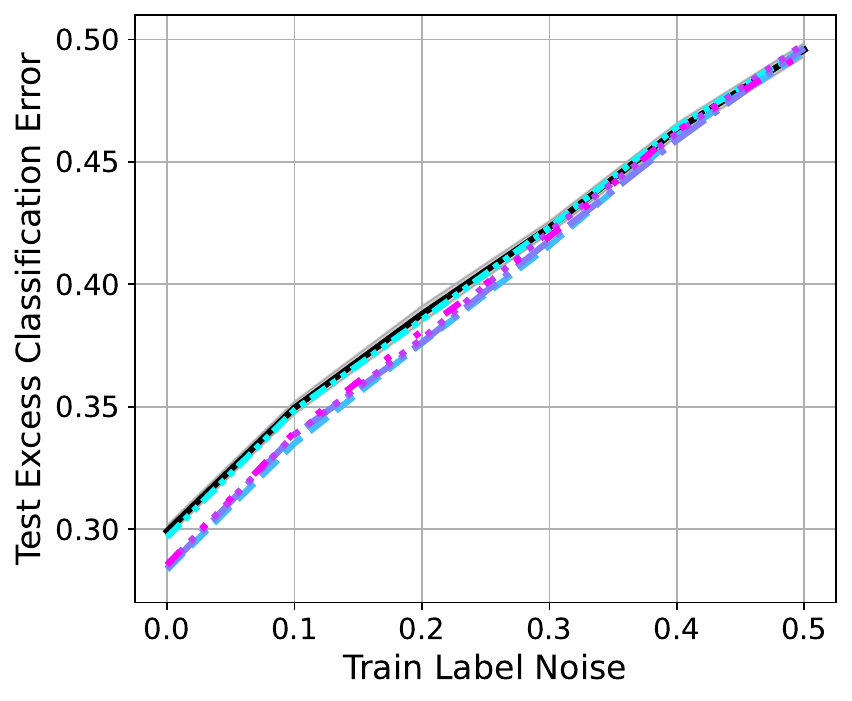}}\qquad
     \subfigure[Blur then Noise $n=2k$]{\includegraphics[width=0.24\linewidth]{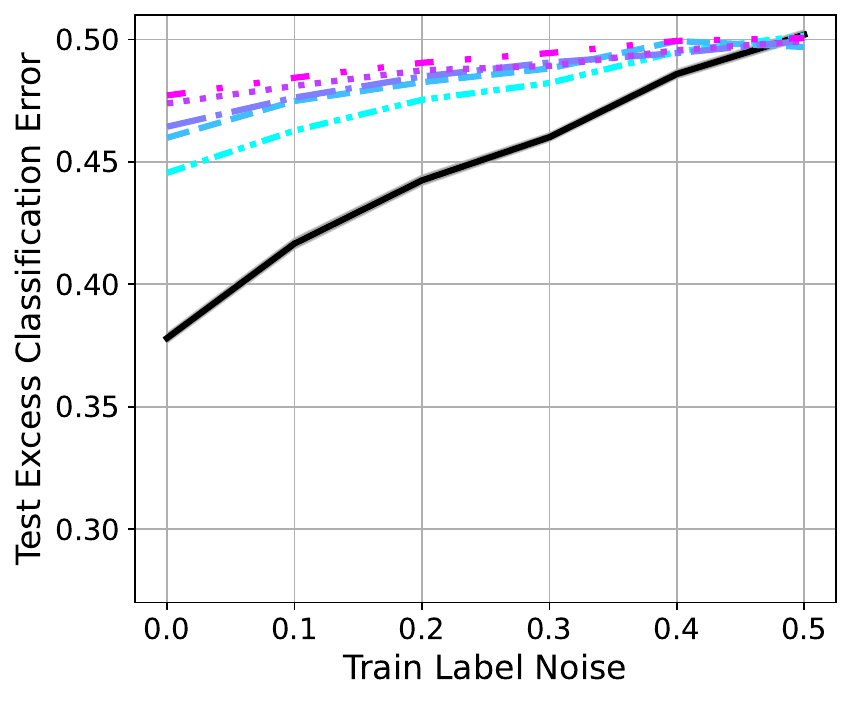}}
    \caption{We experiment with a custom variant of CIFAR-10C in which we apply the blur and noise image filters directly to the test set images of CIFAR-10 at each severity level, e.g. Severity 1 means that we add a small amount of noise \textit{and} a small amount of blurring to the image.
    In the top row we first use the noise filter and then the blur filter. In the bottom row we first use the blur filter and then the noise filter.
In (a) and (d), we observe that the eigenvalue decay of the shifts are non-monotonic and mirror the $\alpha < 1, \beta > 1$ setting in our taxonomy. Indeed, we also see in (b) and (e) that when we are severely overparameterized the noisy tail effects appear to be suppressed and we still obtain beneficial shifts. 
    On the other hand, in (c) and (f) we are in the mildly overparameterized regime and observe that the noisy tail effects hurt generalization, even for severity 4 in the top row which only adds a small amount of noise in the tail. 
    These results are exactly in keeping with our taxonomy for the $\alpha < 1, \beta > 1$ case.
    All curves are averaged over 50 independent runs.
    }
    \label{fig:cf10c_blur_and_noise_mni}
\end{figure*}

\subsection{Synthetic Data Experiments}

Fig. \ref{fig:k_eps_over_under} shows excess risk vs. input dimension for data sampled from the $(k, \delta, \epsilon)$-spike covariance model with $k=10$, $\delta=1.0$, and $\epsilon=1e^{-6}$. 
Beneficial and malignant shifts are seen in the setting of Theorem \ref{thm:benficial_malignant_shifts} with $\alpha=2.0, \beta=0.1$. 
That is, we see \textit{two} cross-over points: one in the underparameterized regime and one in the overparameterized regime (going from mild to severe).
This suggests that non-negligible covariance tail effects are a property of shifting when a model is in a region around the double descent peak.
The further we are from the double descent peak, the more ``classical" our behavior is, in that the top $k$ components are the only ones that influence shift and the bottom $p-k$ components either don't exist or have negligible effects.
Appendix \ref{apdx:additional_exps_synthetic_mni} explores this setup for different values of $\alpha, \beta$ and interpolating linear models for eigendecay rates that lead to harmful interpolation, i.e. \textit{tempered} or \textit{catastrophic} overfitting \citep{Mallinar2022}.

Figs. \ref{fig:mlp_synth_regr} and \ref{fig:mlp_log_plaw_k10_synthetic} show similar results for 3-layer dense ReLU neural networks trained until near-interpolation (train MSE $< 5e^{-6}$) on synthetic data with benign overfitting eigendecay rates \citep{bartlett2020benignpnas}.
For the neural network, we consider $p$ to be the dimension of the input data, rather than the number of network parameters.
From Fig. \ref{fig:mlp_synth_regr} we observe similar trends predicted by our theory for beneficial and malignant shifts when $p > n$, indicating that while our theory is developed for linear models we are able to extrapolate to more complex models.
In both the $p > n$ and $p < n$ experiments, our results are agnostic to the hidden width of the network, further suggesting that overparameterization is qualitatively different from high-dimensionality.
When $p > n$, a neural network appears to act like the interpolating MNI under distribution shifts.
For $p < n$ the interpolating dense net does not exhibit the properties of an interpolating MNI under distribution shift and the ID excess error is better than both ``beneficial" and ``malignant" OOD excess errors.
However, the relative difference between beneficial and malignant shifts is still preserved.
Note that we observe the exact same behavior in Fig. \ref{fig:cf10_resnets} when training ResNets to interpolation on CIFAR-10 and testing on CIFAR-10C blur and noise corruptions \citep{HendrycksDietterich2019}.

\subsection{CIFAR-10 Experiments}
Next, we consider experiments with linear interpolators on a binarized CIFAR-10 and CIFAR-10C with Gaussian noise and blur corruptions at varying levels of corruption severity.
For details, see Appendix \ref{apdx:experiment_details}.
This experiment breaks the assumption of simultaneous diagonalizability, and the well-specified assumption as the labels for CIFAR are not obtained by a ground-truth linear model.

Fig. \ref{fig:cf10c_mni} shows empirical results on the MNI fit to this problem. 
We plot the eigenspectra of the blur and noise covariances from CIFAR-10C compared to the eigenspectra of CIFAR-10 in Figs. \ref{fig:cf10c_mni}a, \ref{fig:cf10c_mni}b.
We identified these two shifts due to their eigenspectra reflecting what we expect to lead to beneficial and malignant shifts based on Theorem \ref{thm:benficial_malignant_shifts}.
We observe a tight relationship between changes in the eigenspectra of the target data and excess classification error when evaluated by the MNI.
We notice that blurs reduce covariance energy with increased blurring, like a ``denoising"-style operation.
Experimentally this leads to improved OOD accuracy.
In contrast, noise corruptions add energy to the covariance tail and lead to worsened OOD accuracy. Fig. \ref{fig:cf10c_mni_overparam} also shows that further overparameterization in this setting leads to improved behavior of the MNI on both corruptions.

Fig. \ref{fig:cf10c_blur_and_noise_mni} extends these results to a more realistic setting with custom variants of CIFAR-10C that involves applying both noise and blur filters on test set images.
Using both blur and noise filters together lead to covariate shifts that feature non-monotonic behaviors when comparing source to target eigenvalues.
This experiment highlights the $\alpha < 1, \beta > 1$ case and we see that the OOD accuracy matches the predictions of our taxonomy based on whether we are in the mildly overparameterized regime ($n=500$) or the severely overparameterized regime ($n=2k)$.
In Fig. \ref{fig:cf10c_artificial_blur_and_noise} we show plots for an artificially constructed version of this same experiment in which Gaussian noise is injected into the high variance directions of the blur test set, simulating the equivalent of $\alpha > 1, \beta < 1$ in our taxonomy. 
We similarly find the OOD accuracy to match the predictions of our taxonomy.

 \section{Conclusion and Future Work}
\label{sec:conclusion}
Our work provides the first finite-sample, instance-wise analysis of the MNI under transfer learning with high-dimensional linear models.
We show a taxonomy of beneficial and malignant covariate shifts depending on whether we are in a \textit{mild} or \textit{severely} overparameterized regime.
In the mildly overparameterized regime, variance contributions on the top $k$ components interact with that of the bottom $p-k$ components in non-negligible ways, leading to non-standard shifts. 
In the severely overparameterized regime, the high-rank covariance tail suppresses variance contributions in the bottom $p-k$ components and so OOD generalization acts more ``classical", akin to underparameterized linear regression where $k=p < n$.

Benign overfitting literature commonly claims to be motivated by ``overparameterized" neural networks, referring to the number of parameters in the network rather than the dimension of the data.
However recent works have challenged this, suggesting the role of the ambient dimension and source covariance is more important than parameter count in determining whether overfitting is benign or catastrophic in neural networks \citep{frei2023random,kornowski2023tempered}.
Prior work has also shown that gradient descent on 2-layer neural networks has an implicit bias towards linear decision boundaries when $p \gg n$, independent of the degree of overparameterization \citep{frei2022implicit}.

Our experiments further support the view that high-dimensional neural networks behave similarly to high-dimensional linear models, whereas low-dimensional neural networks do not.
They provide a new and important perspective on the difference between high-dimensionality and overparameterization in neural networks in the case of distribution shift, which has yet to be appreciated in the literature.
While dimensionality and degree of overparameterization are inextricably linked in linear regression, practical deep learning tends to operate in the overparameterized setting, not the high-dimensional one.

An important future direction is to investigate the extent to which our results hold for distribution shifts on more complex high-dimensional datasets. 
It is also of interest to extend our finite-sample theoretical analysis to shallow ReLU neural networks, other nonlinear models, and learning algorithms that overfit in a \textit{tempered} manner \citep{Mallinar2022}.
Finally, future work might seek to extend our understanding of neural networks by carefully studying the interplay between the data dimension, number of network parameters, number of training samples, and the optimization algorithm and loss function, and how this interplay can affect ID \& OOD generalization.

Another important future direction is to relax key assumptions in this work.
These results rely on a simultaneous diagonalizability assumption on the source and target covariance matrices which frequently appears in related works on high-dimensional linear regression \citep{lei2021linearregressiondistshift, kausik2023generalization, lejeune2024monotonic}. 
This enables us to highlight the different effects of covariate shifts in the ``signal" components vs. in the ``noise" components. 
In contrast, prior works (including those which can tolerate target covariances which do not satisfy simultaneous-diagonalizability \citep{tripuraneni2021overparam, lei2021linearregressiondistshift}) all rely on averages or traces over matrices involving the target covariance. 
Exploring techniques that can effectively handle non-simultaneously diagonalizable covariance matrices while preserving the insights gained from separating the impact of covariate shifts in signal and noise components is a promising avenue for future work.

\section*{Acknowledgements}

The authors thank Alexander Tsigler, Peter Bartlett, Emmanuel Abbe and Libin Zhu for useful discussion and feedback on the manuscript.

We gratefully acknowledge partial support from NSF grants DMS-2209975, 2015341, NSF grant 2023505 on Collaborative Research: Foundations of Data Science Institute (FODSI), the NSF and the Simons Foundation for the Collaboration on the Theoretical Foundations of Deep Learning through awards DMS-2031883 and 814639, and NSF grant MC2378 to the Institute for Artificial CyberThreat Intelligence and OperatioN (ACTION).
NM gratefully acknowledges funding and support for this research from the Eric and Wendy Schmidt Center at The Broad Institute of MIT \& Harvard. 
AZ additionally acknowledges support from NSF RTG Grant \#1745640.

This work used Delta GPU compute nodes at NCSA and HPE and Expanse GPU compute nodes at Dell and SDSC through allocation CIS220009 from the Advanced Cyberinfrastructure Coordination Ecosystem: Services \& Support (ACCESS) program, which is supported by National Science Foundation grants \#2138259, \#2138286, \#2138307, \#2137603, and \#2138296 \citep{access_grant}.

\printbibliography

\newpage
\appendix
\onecolumn

\section{Formal assumptions}
\label{apdx:formal_assumptions}

\begin{definition}[Linear regression under distribution shift]\label{def:lin_reg} 
We consider a training dataset comprised of $n$ i.i.d. pairs $(\bx^i,y^i)_{i=1}^n \sim \dsource^n$  concatenated into a data matrix $\xsource\in \R^{n\times p}$ and a response vector $\ysource\in \R^n$. The setting is overparameterized, meaning we have more input features than training samples, or $n < p$ .

We define
\begin{enumerate}
    \item the covariance matrix $\covsource=\E_{\dsource}[\bx\bx^\top]$,
    \item the optimal parameter vector $\theta_{src}^*\in \R^p$, satisfying 
    \[\E_{\dsource}\big[(y-x^\top\theta^*_{src})^2\big] = \textrm{min}_\theta\E_{\dsource}\big[(y-x^\top\theta)^2\big].\]
\end{enumerate}
We test on the distribution $\dtarget$ with $\covtarget$ and $\ttarget$ defined in the same way. We assume

\begin{enumerate}
    \item (\textit{centered rows}) $\E_{\dsource}[\bx] = 0;$
    \item (\textit{well-specified - source}) For $(X, \boldsymbol{y}) \subseteq \dsource$, $\boldsymbol{y} = X \tsource + \beps_\src$. We assume that the components of the source noise vector $\beps_\src$ are i.i.d. centered random variables with positive variance $v_{\varepsilon_{s}^2}$ and that $\E_{\dsource}[y|x] = x^T \tsource$;
    \item (\textit{well-specified - target}) For $(X, \boldsymbol{y}) \subseteq \dtarget$, $\boldsymbol{y} = X \ttarget + \beps_\tgt$. We assume that the components of the target noise vector $\beps_\tgt$ are i.i.d. centered random variables with noise variance, $v_{\varepsilon_{t}^2}$, and that $\E_{\dtarget}[y|x] = x^T \ttarget$;
    \item (\textit{simultaneously diagonalizability}) $\covsource$ and $\covtarget$ commute; that is, there exists an orthogonal matrix $V\in\R^p$ such that $V^\top\covsource V$ and $V^\top\covtarget V$ are both diagonal. This allows us to fix an orthonormal basis in which we can express the covariance matrices as
    \begin{align*}
        \covsource &= \E_{x \sim \dsource}[xx^T] = \textrm{diag}(\lsource_1, \lsource_2, ..., \lsource_p),\\
        \covtarget &= \E_{x \sim \dtarget}[xx^T] = \textrm{diag}(\ltarget_1, \ltarget_2, ..., \ltarget_p),
    \end{align*}
    where the source eigenvalues are a non-increasing sequence, $\lambda_1 \geq \lambda_2 \cdots \geq \lambda_p$. Note that we do not require the target eigenvalues to be a non-increasing sequence, however we require that $\Tilde{\lambda_i}\lambda_i \geq 0$ for all $i$;
    \item (\textit{subgaussianity}) the whitened data matrix, denoted $Z=\xsource\covsource^{-1/2}$, has centered i.i.d. row vectors with independent coordinates. We assume that the rows are subgaussian with subgaussian norm $\sigma_x$; that is, for all $\gamma\in \R^p,$
    \[\E[\exp(\gamma^\top z)] \leq \exp(\sigma_x^2 ||\gamma||^2/2). \]
\end{enumerate}
\end{definition} \section{Key results from prior work and technical lemmas}

For ease on the reader, we replicate some key lemma statements from \citet{bartlett2020benignpnas} and \citet{tsigler2023benignjmlr} and provide new lemmas and corollaries that we use in our work. 

Recall that $\rho_k = \frac{1}{n\lsource_{k+1}}\sum_{i > k} \lsource_i$, $\xsource \in \R^{n \times p}$, and $\covsource \in \R^{p \times p} = \text{diag}(\lsource_1, \cdots, \lsource_p)$. Let $X_{0:k} \in \R^ {n\times k}$ denote the matrix comprised of the first $k$ feature columns. 
Similarly, $X_{k:p} \in \R^{n\times (p-k)}$ denote the matrix of the last $p-k$ feature columns. The Gram matrix of the data, denoted here by 
\begin{align*}
    A = XX^T,
\end{align*} plays a central role in the investigation of high-dimensional linear regression.
Analogous to the above, we express $A_{0:k} = X_{0:k} X_{0:k}^T \in \R^{n \times n}$ and similarly for $A_{k:p} \in \R^{n \times n}$.

Letting $Z = X\covsource^{-1/2} \in \R^{n \times p}$ and denoting the independent column vectors of $Z$ by $z_i \in \R^n$, we have the following expressions:
\begin{align*}
    A = \sum_i \lsource_i z_i z_i^T, && A_{-i} = \sum_{j \neq i} \lsource_j z_j z_j^T, && A_k = \sum_{i > k} \lsource_i z_i z_i^T.
\end{align*}

The following lemma from \citet{bartlett2020benignpnas} is key in controlling the largest and smallest eigenvalues of the data Gram matrix and its variants $A_{-i}$ and $A_k$.
Importantly, it also shows that if the energy in the bottom $p-k$ components of the covariance matrix is sufficiently large ($\rho_k$ is lower bounded by a constant), then the largest and smallest eigenvalues of $A_k$ are equal up to constants.
\begin{lemma}[Lemma 5 from \citet{bartlett2020benignpnas}]
    \label{lemma:bartlett_lemma10}
    There are constants $b, c \geq 1$ such that for any $k \geq 0$, with probability at least $1 - 2e^{-n/c}$,

    \begin{enumerate}
        \item for all $i \geq 1$,
        \[\mu_{k+1}(A_{-i}) \leq \mu_{k+1}(A) \leq \mu_1(A_k) \leq c \Bigg(\sum_{j>k}\lambda_j + \lambda_{k+1}n\Bigg),\]
        \item for all $1 \leq i \leq k$,
        \[\mu_n(A) \geq \mu_n(A_{-i}) \geq \mu_n(A_k) \geq \frac{1}{c}\sum_{j>k}\lsource_j - c\lsource_{k+1}n,\]
        \item if $\rho_k \geq b$, then 
        \[\frac{1}{c}\lsource_{k+1}\rho_kn\leq \mu_n(A_k) \leq \mu_1(A_k) \leq c \lambda_{k+1}\rho_kn.\]
    \end{enumerate}
\end{lemma}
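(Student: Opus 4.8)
The statement is quoted verbatim from \citet{bartlett2020benignpnas} (their Lemma~5), so strictly it needs no new argument; for completeness I outline the proof I would give. It has two ingredients: deterministic matrix inequalities that yield the three eigenvalue orderings, and a single subgaussian concentration estimate controlling $\|A_k - \E A_k\|$.

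\emph{Deterministic orderings.} Write $A = \sum_{i=1}^p \lsource_i z_i z_i^\top$, $A_{-i} = A - \lsource_i z_i z_i^\top$, and $A_k = \sum_{i>k}\lsource_i z_i z_i^\top$, all $n\times n$. Each $\lsource_i z_i z_i^\top \succeq 0$, so $A \succeq A_{-i}$ for all $i$, and $A_{-i} \succeq A_k$ whenever $1 \le i \le k$ (the difference is the sum of the PSD terms indexed by $\{1,\dots,k\}\setminus\{i\}$). Courant--Fischer then gives $\mu_j(A) \ge \mu_j(A_{-i}) \ge \mu_j(A_k)$ for every $j$, which is exactly the monotone chains in parts 1 and 2. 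For $\mu_{k+1}(A) \le \mu_1(A_k)$, apply Weyl's inequality to $A = A_{0:k} + A_k$: since $\mathrm{rank}(A_{0:k}) \le k$ we have $\mu_{k+1}(A_{0:k}) = 0$, hence $\mu_{k+1}(A) \le \mu_{k+1}(A_{0:k}) + \mu_1(A_k) = \mu_1(A_k)$.

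\emph{Concentration.} The probabilistic core is that, with probability at least $1 - 2e^{-n/c}$,
\[
\Bigl\| A_k - \Bigl(\textstyle\sum_{i>k}\lsource_i\Bigr) I_n \Bigr\| \le c\Bigl( \sqrt{\lsource_{k+1}\, n\, \textstyle\sum_{i>k}\lsource_i} + \lsource_{k+1} n\Bigr).
\]
I would prove this by a net argument: for fixed unit $v\in\R^n$, $v^\top(A_k - \E A_k)v = \sum_{i>k}\lsource_i(\langle z_i,v\rangle^2 - 1)$ is a sum of independent, mean-zero subexponential variables with $\psi_1$-norms $O(\sigma_x^2\lsource_i)$, so Bernstein's inequality bounds its deviations in terms of $\sum_{i>k}\lsource_i^2$ and $\max_{i>k}\lsource_i = \lsource_{k+1}$; a $1/4$-net of $S^{n-1}$ (cardinality $\le 9^n$), the estimate $\|M\| \le 2\max_{v\in\mathcal N}|v^\top M v|$ for symmetric $M$, and a union bound turn the net size into a factor $e^{O(n)}$, forcing the deviation threshold $t \gtrsim \sigma_x^2(\sqrt{n\sum_{i>k}\lsource_i^2} + n\lsource_{k+1})$; finally $\sum_{i>k}\lsource_i^2 \le \lsource_{k+1}\sum_{i>k}\lsource_i$ puts this in the displayed form. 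This is the content of the corresponding concentration lemmas of \citet{bartlett2020benignpnas}, which rely on standard results in \citet{vershynin-hdp}.

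\emph{Assembling the bounds and the main obstacle.} Parts 1 and 2 follow from $\mu_1(A_k) \le \sum_{i>k}\lsource_i + \|A_k-\E A_k\|$ and $\mu_n(A_k) \ge \sum_{i>k}\lsource_i - \|A_k-\E A_k\|$, using weighted AM--GM to split $\sqrt{\lsource_{k+1}n\sum_{i>k}\lsource_i}$ between an arbitrarily small multiple of $\sum_{i>k}\lsource_i$ and a multiple of $\lsource_{k+1}n$, then absorbing constants. For part 3, substitute $\sum_{i>k}\lsource_i = n\lsource_{k+1}\rho_k$: then $\lsource_{k+1}n = \rho_k^{-1}\sum_{i>k}\lsource_i \le b^{-1}\sum_{i>k}\lsource_i$, so part 1 gives $\mu_1(A_k) \le c(1+b^{-1})n\lsource_{k+1}\rho_k$, and part 2 gives $\mu_n(A_k) \ge (c^{-1} - c\rho_k^{-1})n\lsource_{k+1}\rho_k \ge \tfrac1{2c}n\lsource_{k+1}\rho_k$ once $b$ is chosen large enough (depending only on the $c$ from the concentration step); relabelling constants yields the stated $b,c$. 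The only nonroutine point is getting the concentration estimate in precisely this form — the linear term $\lsource_{k+1}n$ and the mixed term $\sqrt{\lsource_{k+1}n\sum_{i>k}\lsource_i}$, rather than a cruder bound proportional to $\sum_{i>k}\lsource_i$ — which is where the eigenvalue ordering enters through $\sum_{i>k}\lsource_i^2 \le \lsource_{k+1}\sum_{i>k}\lsource_i$; the rest is Weyl / Courant--Fischer bookkeeping and constant chasing.
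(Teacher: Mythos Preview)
Your proposal is correct: the paper does not prove this lemma at all (it is quoted from \citet{bartlett2020benignpnas} as a black box), and the argument you sketch --- deterministic Weyl/Courant--Fischer orderings plus a Bernstein-on-a-net concentration bound for $\|A_k - (\sum_{i>k}\lsource_i)I_n\|$ --- is exactly the approach used in the original source. There is nothing to compare against in the present paper, and your outline has no gaps.
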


A consequence of the prior eigenvalue bounds is that when $\rho_k$ is lower bounded by a constant, the condition number of $A_k$ is upper bounded by a constant.
Therefore even as problem parameters such as training sample size and input dimension grow to $\infty$, $A_k$ is still well-conditioned.
This is important as non-benign overfitting occurs when the condition number bound on $A_k$ grows with problem parameters.
This would happen if the lower bound on the smallest eigenvalue of $A_k$ decays to zero too quickly which would cause the condition number of $A_k$ to diverge.
If this occurs then the excess risk of the MNI would be lower bounded.
This is shown for the in-distribution case in \citet{bartlett2020benignpnas}.
\begin{corollary}\label{cor:condnum}
     Following from Lemma \ref{lemma:bartlett_lemma10}, there are constants $b, c \geq 1$ such that for any $k \geq 0$, with probability at least $1 - 2e^{-n/c}$, if $\rho_k \geq b$ then
     \begin{equation}
         \frac{\mu_1(A_k)}{\mu_n(A_k)} \leq c^2
     \end{equation}
     which is the equivalent of the assumption \textit{CondNum}$(k, 2e^{-n/c}, c^2)$ as defined in \citet{tsigler2023benignjmlr}.
\end{corollary}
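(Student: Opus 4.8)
The plan is to read this off directly from part~3 of Lemma~\ref{lemma:bartlett_lemma10}. Fix the constants $b,c\geq 1$ supplied by that lemma and condition on the event $\mathcal{E}$ of probability at least $1-2e^{-n/c}$ on which its three conclusions hold. On $\mathcal{E}$, whenever $\rho_k\geq b$ we simultaneously have the two-sided control
\begin{align*}
    \mu_n(A_k) \;\geq\; \tfrac{1}{c}\,\lsource_{k+1}\rho_k n,
    \qquad
    \mu_1(A_k) \;\leq\; c\,\lsource_{k+1}\rho_k n .
\end{align*}

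Since $\lsource_{k+1}\rho_k n > 0$, dividing the upper bound on $\mu_1(A_k)$ by the lower bound on $\mu_n(A_k)$ yields $\mu_1(A_k)/\mu_n(A_k)\leq c^2$ on $\mathcal{E}$, which is the claimed inequality with the claimed probability. Finally, I would note that the assumption \textit{CondNum}$(k,\delta,L)$ of \citet{tsigler2023benignjmlr} asks precisely for an event of probability at least $1-\delta$ on which the condition number of $A_k$ is bounded by $L$; taking $\delta = 2e^{-n/c}$ and $L = c^2$ identifies our conclusion with \textit{CondNum}$(k,2e^{-n/c},c^2)$, completing the argument.

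There is essentially no obstacle here: the corollary is a one-line quotient of the two bounds already established in Lemma~\ref{lemma:bartlett_lemma10}. The only point requiring minor care is bookkeeping of constants and the defining event --- namely, that the same $b$ and $c$ (equivalently, the same high-probability event) are used for both the lower bound on $\mu_n(A_k)$ and the upper bound on $\mu_1(A_k)$, so that the ratio bound holds on a single event of the stated probability --- and checking that the form of the statement matches the \textit{CondNum} predicate of \citet{tsigler2023benignjmlr} verbatim.
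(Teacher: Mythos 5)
Your argument is correct and is exactly the intended derivation: the paper presents Corollary~\ref{cor:condnum} as an immediate consequence of part~3 of Lemma~\ref{lemma:bartlett_lemma10}, and dividing the upper bound on $\mu_1(A_k)$ by the lower bound on $\mu_n(A_k)$ on the same high-probability event is precisely how it follows. Your bookkeeping of the single event and the identification with the \textit{CondNum} predicate matches the paper's intent.
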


The following definition and lemma omit all references to \textit{NonCritReg} and the ridge parameter in \citet{tsigler2023benignjmlr}.

\begin{definition}[StableLowerEig$(k, \delta, L)$ from \citet{tsigler2023benignjmlr}]
    \label{def:stablowereig}
    Assume that for any $j \in \{1, 2, \cdots, p\}$ with probability (separate for every $j$) at least $1 - \delta$,
    \begin{equation}
        \mu_n(A_{-j}) \geq \mu_n(\E A_k) / L = (\sum_{i > k} \lsource_i) / L.
    \end{equation}
\end{definition}

We now state key assumptions that are necessary in order to obtain an explicit bias lower bound.
Exchangeable coordinates (\textit{ExchCoord}) is a weaker assumption than independent components of the data vector.
It is used in \citet{tsigler2023benignjmlr} instead of independent components.
We assume that components of $Z$ are independent and so we immediately satisfy the \textit{ExchCoord}, which we define here.
\begin{definition}[ExchCoord]
    \label{def:exchcoord}
    Assume the sequence of coordinates of $\covsource^{-1/2}x$, for any $x \in \xsource$, is exchangable (any deterministic permutation of the coordinates of whitened data vectors doesn't change their distribution).
\end{definition}

The \textit{PriorSigns} assumption is necessary to obtain lower bounds on the bias term. 
It allows us to use bounds on the expectation of a quadratic form, $\E_v [v^T M v]$, in order to separately analyze the contributions of $v$ and $M$.
As the bias takes the form $\tsource^T (I - X^T A^{-1}X) \covtarget (I - X^T A^{-1}X) \tsource$ we see that such a bound would separate the contributions of the model from that of data-dependent matrix expressions.
\begin{definition}[PriorSigns]\label{def:priorsigns} Assume that $\theta^*$ is sampled from a prior distribution in the following way: one starts with vector $\overline{\theta}$ and flips signs of all its coordinates with probability 0.5 independently.
\end{definition}
Under \textit{PriorSigns}, the random model vector is obtained by flipping signs on the components of the ground-truth model vector.
This does not affect our bounds as we see in Theorem \ref{thm:main_results_bias_bounds} that our bias lower bound only relies on squared components of the random model vector which are equivalent to the squared components of the ground truth model.

An important consequence of having a bounded condition number and independent coordinates is that with high probability the smallest eigenvalue of $A_{-i}$ for all $i \geq 1$ is lower bounded by $n \lambda_{k+1} \rho_k$ up to constants. These assumptions allow \citet{bartlett2020benignpnas} to prove Lemma \ref{lemma:bartlett_lemma10}, which in turn allows us to derive the \textit{StableLowerEig} condition. This is a simple consequence of B.1 and we provide details here for completeness.

\begin{corollary}[Our variant of StableLowerEig from \citet{tsigler2023benignjmlr}]
\label{lem:our_stab_lower_eig}
    For all $i \geq 1$, with probability at least $1-2e^{-n/c_2}$
\[\mu_n(A_{-i}) \geq \frac{1}{c_2}\mu_n( \E A_k) = \frac{1}{c_2}\sum_{j>k}\lambda_j.\]
\end{corollary}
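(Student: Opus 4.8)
The goal is to prove Corollary \ref{lem:our_stab_lower_eig}, which establishes the \textit{StableLowerEig} condition in our notation: for all $i \geq 1$, with probability at least $1 - 2e^{-n/c_2}$, we have $\mu_n(A_{-i}) \geq \frac{1}{c_2} \sum_{j > k} \lsource_j$. The plan is to derive this as a direct consequence of Lemma \ref{lemma:bartlett_lemma10} (Lemma 5 from \citet{bartlett2020benignpnas}), specifically the lower bound on $\mu_n(A_k)$, together with the benign source condition $\rho_k \geq b$.

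First, I would invoke part (2) of Lemma \ref{lemma:bartlett_lemma10}, which gives that with probability at least $1 - 2e^{-n/c}$, for all $1 \leq i \leq k$, $\mu_n(A_{-i}) \geq \mu_n(A_k) \geq \frac{1}{c} \sum_{j > k} \lsource_j - c \lsource_{k+1} n$. However, this only handles $1 \leq i \leq k$; to handle all $i \geq 1$, I would note that for $i > k$, $A_{-i}$ still contains all the tail columns indexed by $j > k$ except possibly column $i$ itself, so by the same interlacing/monotonicity reasoning used in the original lemma, $\mu_n(A_{-i}) \geq \mu_n(A_k)$ continues to hold (removing a column outside the tail only increases the smallest eigenvalue relative to $A_k$, since $A_{-i} \succeq A_k$ when $i \le k$, and for $i>k$ we have $A_{-i} = A_k + \text{(signal terms)} - \lsource_i z_i z_i^\top \succeq A_k - \lsource_i z_i z_i^\top$, which requires slightly more care — this is the one place a short argument is needed, but it is essentially contained in the proof of Lemma \ref{lemma:bartlett_lemma10}). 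Alternatively, and more cleanly, I would just cite that part (1) and (2) of Lemma \ref{lemma:bartlett_lemma10} as stated already cover $A_{-i}$ for the relevant range, and appeal to part (3) for the tail behavior.

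Second, I would use part (3) of Lemma \ref{lemma:bartlett_lemma10}: under $\rho_k \geq b$, with probability at least $1 - 2e^{-n/c}$, $\mu_n(A_k) \geq \frac{1}{c} \lsource_{k+1} \rho_k n$. Unwinding the definition $\rho_k = \frac{1}{n \lsource_{k+1}} \sum_{j > k} \lsource_j$, this reads $\mu_n(A_k) \geq \frac{1}{c} \lsource_{k+1} \cdot \frac{1}{n \lsource_{k+1}}\left(\sum_{j > k}\lsource_j\right) \cdot n = \frac{1}{c}\sum_{j > k} \lsource_j$. Combining with $\mu_n(A_{-i}) \geq \mu_n(A_k)$ from part (2) (or its extension to all $i$), and setting $c_2$ to be the constant $c$ from Lemma \ref{lemma:bartlett_lemma10}, yields $\mu_n(A_{-i}) \geq \frac{1}{c_2} \sum_{j > k} \lsource_j = \frac{1}{c_2} \mu_n(\E A_k)$ on the same high-probability event, since $\E A_k = \sum_{j > k} \lsource_j \E[z_j z_j^\top] = (\sum_{j > k} \lsource_j) I_n$ and hence $\mu_n(\E A_k) = \sum_{j > k} \lsource_j$.

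The main obstacle, such as it is, is bookkeeping rather than mathematics: one must make sure the constant $c_2$ and the failure probability $2e^{-n/c_2}$ are chosen consistently with the (possibly different) constants appearing in the three parts of Lemma \ref{lemma:bartlett_lemma10}, and that the statement is claimed "for all $i \geq 1$" on a \emph{single} event (one cannot naively union-bound over $i$ since $p$ can be exponential in $n$). This is handled because Lemma \ref{lemma:bartlett_lemma10} parts (1)--(3) are each uniform over the relevant index ranges on a single event of probability $1 - 2e^{-n/c}$ — the uniformity over $i$ is already built into the cited lemma, so no additional union bound is needed, and the corollary follows by simply chaining the inequalities on that event. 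I would close by remarking that this recovers exactly the \textit{StableLowerEig}$(k, 2e^{-n/c_2}, c_2)$ condition of \citet{tsigler2023benignjmlr}, which is what subsequent bias-bound arguments require.
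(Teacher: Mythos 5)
The core of your argument — using Lemma \ref{lemma:bartlett_lemma10} with $\rho_k \geq b$ to get $\mu_n(A_k) \geq \frac{1}{c}\sum_{j>k}\lsource_j$ and then passing to $\mu_n(A_{-i})$ — is the right outline, and your use of part (3) to shortcut the algebra that the paper carries out via part (2) is a legitimate (indeed, slightly cleaner) variant. However, your handling of $i > k$ has a genuine gap. For $i > k$ you write $A_{-i} = A_k + (\text{signal terms}) - \lsource_i z_i z_i^\top \succeq A_k - \lsource_i z_i z_i^\top$, but this inequality is unhelpful: $A_k - \lsource_i z_i z_i^\top$ has \emph{smaller} minimum eigenvalue than $A_k$ (you have removed a PSD summand), so nothing transfers. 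Your ``alternatively, and more cleanly'' fallback — that parts (1) and (2) of Lemma \ref{lemma:bartlett_lemma10} already cover $A_{-i}$ for the relevant range — is simply false: part (2) is explicitly stated only for $1 \leq i \leq k$, and part (1) bounds $\mu_{k+1}$, not $\mu_n$.

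The missing idea is a distributional, not a pointwise, argument. The paper drops all signal summands except the first to get
\[
A_{-i} = \sum_{j\leq k}\lsource_j z_j z_j^\top + \sum_{j>k,\,j\neq i}\lsource_j z_j z_j^\top \;\succeq\; \lsource_1 z_1 z_1^\top + \sum_{j>k,\,j\neq i}\lsource_j z_j z_j^\top \;\succeq\; \lsource_i z_1 z_1^\top + \sum_{j>k,\,j\neq i}\lsource_j z_j z_j^\top,
\]
using $\lsource_1 \geq \lsource_i$, and then observes that the right-hand side is \emph{equal in distribution} to $A_k = \sum_{j>k}\lsource_j z_j z_j^\top$ because the $z_j$ are i.i.d.\ (swapping $z_1$ for $z_i$ leaves the law unchanged). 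Hence the high-probability bound $P\bigl(\mu_n(A_k) \geq \frac{1}{c_2}\sum_{j>k}\lsource_j\bigr) \geq 1-2e^{-n/c_2}$ transfers, giving the same probability bound for $\mu_n(A_{-i})$, though on a possibly different event for each $i$. This is fine because the \textit{StableLowerEig} condition (Definition \ref{def:stablowereig}) is stated separately for each index — which also means your worry about a union bound over $i$ is moot; the corollary is a ``for each $i$'' statement, not a single-event statement. Without the distributional step the proof is incomplete for the entire range $i > k$, which is the range that actually matters for controlling the tail bias contributions.
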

\begin{proof}
    By Lemma \ref{lemma:bartlett_lemma10}, for some absolute constant $c_1\geq 1$ with probability at least $1-2e^{-n/c_1}$
    \[\mu_n(A_k)\geq \frac{1}{c_1}\sum_{i>k}\lambda_i - c_1\lambda_{k+1}n.\]
The assumption $\rho_k \geq b$ for some $b\geq 1$ gives us
\begin{align*}
     \frac{1}{c_1}\sum_{i>k}\lambda_i - c_1\lambda_{k+1}n &= \frac{1}{c_1}\lambda_{k+1}n\rho_k - c_1\lambda_{k+1}n\\
     & \geq \bigg(\frac{1}{c_1} - \frac{c_1}{b}\bigg)\lambda_{k+1}n\rho_k \\
     & = \bigg(\frac{1}{c_1} - \frac{c_1}{b}\bigg)\sum_{i>k}\lambda_i.
\end{align*}
Choosing $b>c_1^2$ and $c_2 = \max\{c_1, (1/c_1 - c_1/b)^{-1}\}$, we get that with probability at least $1-2e^{-n/c_2}$
\[\mu_n(A_k) \geq \frac{1}{c_2}\sum_{i>k}\lambda_i.\]
The next step is to extend this result to $A_{-i}$ for all $i$. 

For $i\leq k$, observe that $A_{-i}\succeq A_k$ gives us $\mu_n(A_{-i}) \geq \mu_n(A_k).$ For the case of $i>k,$ we have
\begin{align*}
    A_{-i} & = \sum_{j\neq i}\lsource_j z_j z_j^\top \\
    &= \sum_{j\leq k}\lsource_j z_j z_j^\top + \sum_{j> k, j\neq i}\lsource_j z_j z_j^\top\\
    &\succeq \lsource_1 z_1 z_1^\top + \sum_{j> k, j\neq i}\lsource_j z_j z_j^\top \\
    &\succeq \lsource_i z_1 z_1^\top + \sum_{j> k, j\neq i}\lsource_j z_j z_j^\top.
\end{align*}
We assume that the features are independent and $z_i$ is centered and whitened, so $\lsource_i z_1 z_1^\top + \sum_{j> k, j\neq i}\lsource_j z_j z_j^\top$ has the same distribution as $A_k = \sum_{j> k}\lsource_j z_j z_j^\top$. Therefore,
\begin{align*}
    & \P\Bigg(\mu_n(A_{-i})\geq \frac{1}{c_2}\sum_{i>k}\lambda_i\Bigg) \\
    \geq & \P\Bigg(\mu_n\Bigg(\lsource_i z_1 z_1^\top + \sum_{j> k, j\neq i}\lsource_j z_j z_j^\top\Bigg)\geq \frac{1}{c_2}\sum_{i>k}\lambda_i\Bigg) \\
    = & \P\Bigg(\mu_n(A_k)\geq \frac{1}{c_2}\sum_{i>k}\lambda_i\Bigg) \\
    \geq & 1-2e^{-n/c}.
\end{align*}
\end{proof}

The following corollaries provide high-probability bounds on random subgaussian vectors with independent coordinates. 
\begin{corollary}[Corollary 1 from \citet{bartlett2020benignpnas}]
    \label{cor:bartlett_cor24}
    There is a universal constant $c$ such that for any centered random vector $z \in \R^n$ with independent $\sigma^2$-subgaussian coordinates with unit variances, any random subspace $\mathscr{L}$ of $\R^n$ of codimension $k$ that is independent of $z$, and any $t > 0$, with probability at least $1 - 3e^{-t}$,
    \begin{align*}
        \snorm{z}^2 &\leq n + c\sigma^2(t + \sqrt{nt}),\\
        \snorm{\Pi_{\mathscr{L}} z}^2 &\geq n - c\sigma^2(k + t + \sqrt{nt}),
    \end{align*}
    where $\Pi_{\mathscr{L}}$ is the orthogonal projection on $\mathscr{L}$.
\end{corollary}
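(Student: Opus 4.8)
Both claimed inequalities are concentration statements for quadratic forms $z^\top M z$ in a vector with independent, mean-zero, unit-variance, $\sigma^2$-subgaussian coordinates, so the plan is to reduce both to the Hanson--Wright inequality (equivalently, for the first one, to Bernstein's inequality for the sum of independent subexponential variables $z_i^2$) and then take a union bound. The unifying observation is that both relevant matrices are symmetric and positive semidefinite with operator norm $1$ and squared Frobenius norm at most $n$: for the first inequality the matrix is $M = I_n$ and $\|z\|^2 = z^\top I_n z$, while for the second it is $M = \Pi_{\mathscr{L}}$, which (since $\mathscr{L}$ has codimension $k$) is an orthogonal projection of rank $n-k$. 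Moreover, by independence, centering and unit variances, $\E[z^\top M z] = \tr(M)$, so $\E[\|z\|^2] = n$ and $\E[\|\Pi_{\mathscr{L}} z\|^2] = \dim \mathscr{L} = n - k$.

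First I would handle $\|z\|^2$: applying the upper one-sided Hanson--Wright (or Bernstein) tail bound with $\|I_n\|_F^2 = n$ and $\|I_n\|_{\mathrm{op}} = 1$, and collapsing the two tail regimes into a single bound on the deviation, yields a universal constant $c$ such that for every $t > 0$, with probability at least $1 - e^{-t}$, $\|z\|^2 \le n + c\sigma^2(t + \sqrt{nt})$. Next I would handle $\|\Pi_{\mathscr{L}} z\|^2$ by first conditioning on $\mathscr{L}$, which is legitimate because $\mathscr{L}$ is independent of $z$; conditionally, $\Pi_{\mathscr{L}}$ is a fixed symmetric projection with $\|\Pi_{\mathscr{L}}\|_F^2 = n - k \le n$ and $\|\Pi_{\mathscr{L}}\|_{\mathrm{op}} = 1$, so the lower one-sided Hanson--Wright tail bound gives, for the same universal $c$ and uniformly over the realization of $\mathscr{L}$, with conditional probability at least $1 - e^{-t}$,
\[
\|\Pi_{\mathscr{L}} z\|^2 \ge (n - k) - c\sigma^2\big(t + \sqrt{nt}\big) \ge n - c\sigma^2\big(k + t + \sqrt{nt}\big),
\]
where the last inequality uses $\sigma^2 \ge 1$ (forced because a mean-zero, unit-variance, $\sigma^2$-subgaussian coordinate has $\mathrm{Var} = 1 \le \sigma^2$) to absorb the bare $k$ into $c\sigma^2 k$. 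Because this conditional bound is uniform in $\mathscr{L}$, it holds unconditionally with probability at least $1 - e^{-t}$.

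A union bound over the two events just constructed (each failing with probability at most $e^{-t}$) shows that both displayed inequalities hold simultaneously with probability at least $1 - 2e^{-t} \ge 1 - 3e^{-t}$, which is exactly the claim. I expect no real obstacle here — the result is a packaging of standard subgaussian quadratic-form concentration — but the one point that deserves care is the conditioning step for the projection term: one must verify that the Hanson--Wright constant depends only on the absolute constant in that inequality and on $\sigma$, and not on the realization of $\mathscr{L}$, so that the high-probability bound survives integrating out $\mathscr{L}$; this is clear from the form of the inequality. If one wishes to avoid quoting Hanson--Wright, the $\|z\|^2$ bound follows from Bernstein's inequality applied to $\sum_{i=1}^n (z_i^2 - 1)$ (each summand being mean-zero subexponential with $\psi_1$-norm $\lesssim \sigma^2$), and the projection bound can instead be obtained either from the decomposition $\|\Pi_{\mathscr{L}} z\|^2 = \|z\|^2 - \|(I_n - \Pi_{\mathscr{L}})z\|^2$ together with an upper Bernstein/Hanson--Wright tail on the rank-$k$ form $\|(I_n - \Pi_{\mathscr{L}})z\|^2$ (this route uses three events and accounts for the constant $3$ in the probability), or from an $\varepsilon$-net argument over the unit sphere of $\mathscr{L}$ using the subgaussianity of $\langle z, u\rangle$.
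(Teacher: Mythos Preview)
Your proposal is correct. Note, however, that the paper does not actually prove this statement: it is quoted verbatim as ``Corollary 1 from \citet{bartlett2020benignpnas}'' in the section collecting results from prior work, so there is no in-paper proof to compare against.

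That said, your primary route (direct Hanson--Wright on $\Pi_{\mathscr{L}}$, conditionally on $\mathscr{L}$) is slightly slicker than the argument in the original Bartlett et al.\ supplement, which proceeds via the decomposition $\|\Pi_{\mathscr{L}} z\|^2 = \|z\|^2 - \|\Pi_{\mathscr{L}^\perp} z\|^2$ and thus uses three events (a two-sided Bernstein bound on $\|z\|^2$ plus an upper Hanson--Wright tail on the rank-$k$ form $\|\Pi_{\mathscr{L}^\perp} z\|^2$). You correctly identify this alternative and that it accounts for the constant $3$ in the failure probability; your direct approach yields $1 - 2e^{-t}$, which is of course stronger. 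The only delicate point --- that the Hanson--Wright constant is uniform over realizations of $\mathscr{L}$ so that conditioning can be undone --- you have already flagged and handled. The observation $\sigma^2 \ge 1$ to absorb the bare $k$ is also correct and needed.
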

In our proofs, we will need to control the norm of $z_i$ for all $i\leq p$ on the same high-probability event.
In these cases we need to apply a union bound over the events in the summation.
The following corollary shows how to invoke a union bound over $\ell$ of these events in such a way that the probability over the union of all such events holds with high probability that depends $n$.
\begin{corollary}
    \label{cor:z_projz_corollary}
    There is a universal constant $c$ as defined in Corollary \ref{cor:bartlett_cor24}.
    Let $z \in \R^n$ be a centered random vector with $\sigma^2$-subgaussian coordinates and unit variances.
    Let $\mathscr{L}$ be a random subspace of $\R^n$ of codimension $k$ that is independent of $z$.

    For $0 < t < n/c_0$ and $k \in (0, n/c_1)$ for $c_1 > c_0$ with $c_0$ sufficiently large, with probability $1 - 3e^{-t}$,
    \begin{align*}
        \snorm{z}^2 &\leq c_2 n \\
        \snorm{\Pi_{\mathscr{L}} z}^2 &\geq n/c_3
    \end{align*}
    where $c_2, c_3$ only depends on $c, c_0, \sigma$.\\

    We obtain a union bound over the intersection of $\ell$ of these events so long as $\ln(\ell) \leq n/c_0 \Rightarrow \ell \leq e^{n/c_0}$.
    Then for $k \in (0, n/c_1)$ for $c_1 > c_0$ with $c_0$ sufficiently large, if $\ell \leq e^{n/c_0}$, with probability at least $1 - 3e^{-n/c_0}$, $\ell$ of the above events independently hold.
\end{corollary}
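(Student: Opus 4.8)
The plan is to treat this as a bookkeeping refinement of Corollary~\ref{cor:bartlett_cor24}: first replace its ``raw'' deviation terms by clean multiples of $n$ using the hypotheses on $t$ and $k$, and then upgrade the single high-probability event to a simultaneous bound over $\ell$ copies via a union bound. For the first part, I would apply Corollary~\ref{cor:bartlett_cor24} with parameter $t$ and invoke $t < n/c_0$, $\sqrt{nt} < n/\sqrt{c_0}$, and (for the projection bound) $k < n/c_1 < n/c_0$ to get
\begin{align*}
\snorm{z}^2 &\le n + c\sigma^2(t+\sqrt{nt}) \le \big(1 + c\sigma^2(1/c_0 + 1/\sqrt{c_0})\big)\,n, \\
\snorm{\Pi_{\mathscr{L}}z}^2 &\ge n - c\sigma^2(k+t+\sqrt{nt}) \ge \big(1 - c\sigma^2(2/c_0 + 1/\sqrt{c_0})\big)\,n.
\end{align*}
The first line defines $c_2$ for any $c_0 \ge 1$. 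For the second line to be a positive multiple of $n$ one must choose $c_0$ large enough, \emph{as a function of the universal constant $c$ and of $\sigma$}, that $c\sigma^2(2/c_0 + 1/\sqrt{c_0}) \le 1/2$, say; then $\snorm{\Pi_{\mathscr{L}}z}^2 \ge n/2 =: n/c_3$. Both $c_2,c_3$ then depend only on $c,\sigma,c_0$, as claimed, and the event holds with probability at least $1 - 3e^{-t}$.

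For the second part, suppose we are given $\ell$ pairs $(z^{(j)},\mathscr{L}^{(j)})$, each satisfying the hypotheses. The standard move (as in the proof of Lemma~\ref{lemma:bartlett_lemma10}) is to apply the refined bound above to the $j$-th pair but with deviation parameter $t+\ln\ell$ in place of $t$: each pair then fails one of its two bounds with probability at most $3e^{-(t+\ln\ell)} = 3e^{-t}/\ell$, so a union bound over $j=1,\dots,\ell$ gives that \emph{all} $\ell$ pairs satisfy both bounds with probability at least $1-3e^{-t}$. The constraint this requires is $t+\ln\ell < n/c_0$, which is why the hypothesis caps $\ell$ at $e^{n/c_0}$ (so $\ln\ell \le n/c_0$); taking $t$ to be a fixed fraction of $n/c_0$ and absorbing the loss into a relabeling of $c_0$ yields the stated $1-3e^{-n/c_0}$ form.

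The only genuine subtlety, and the step I expect to require the most care, is this interplay of constraints in the union-bound step: absorbing $\ln\ell$ into the deviation parameter must keep it inside the admissible window $(0,n/c_0)$ of Corollary~\ref{cor:bartlett_cor24}, which forces the exponential cap on $\ell$ and a slight enlargement of the constant in the final exponent. Everything else is elementary — the simplification is just bounding $t$, $k$, and $\sqrt{nt}$ by multiples of $n$ and checking that $c_0$ can be taken large enough (depending only on $c$ and $\sigma$) that the projection lower bound stays a positive fraction of $n$.
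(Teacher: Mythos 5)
Your proposal is correct and follows essentially the same route as the paper: apply Corollary~\ref{cor:bartlett_cor24}, then bound $t$, $\sqrt{nt}$, and $k$ by multiples of $n$ under the hypotheses $t<n/c_0$ and $k<n/c_1<n/c_0$, choose $c_0$ large enough (depending only on $c$ and $\sigma$) so the projection lower bound remains a positive fraction of $n$, and then union-bound over $\ell$ copies by shifting the per-event deviation parameter by $\ln\ell$. One small point worth noting: your treatment of the union-bound step is actually a bit more careful than the paper's own. You explicitly write the per-event parameter as $t+\ln\ell$, observe that the constraint $t+\ln\ell < n/c_0$ together with the desired form $1-3e^{-n/c_0}$ forces a relabeling of $c_0$, and say so; the paper instead manipulates $3\ell e^{-t}=3e^{-(t-\ln\ell)}$ and then asserts the result by ``taking $t=n/c_0$'', which literally yields $1-3e^{-(n/c_0-\ln\ell)}$ rather than $1-3e^{-n/c_0}$ and so implicitly requires the same constant relabeling you spelled out. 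So your write-up is, if anything, slightly cleaner on the part of the argument that requires the most care.
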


\begin{proof}
    Let Corollary \ref{cor:bartlett_cor24} hold with universal constant $c$.
    Then, with probability $1 - 3e^{-t}$ for $t > 0$,
    \begin{align*}
    \snorm{z}^2 &\leq n + c\sigma^2(t + \sqrt{nt}) \\
        \snorm{\Pi_{\mathscr{L}} z}^2 &\geq n - c\sigma^2(k + t + \sqrt{nt}).
    \end{align*}
    Let $t \leq \frac{n}{c_0}$. 
    Then we have that,
    \begin{align*}
        -\frac{n}{c_0} \leq -t \qquad -\frac{n}{\sqrt{c_0}} \leq -\sqrt{nt}.
    \end{align*}
    Plugging in for $\snorm{z}^2$,
    \begin{align*}
        \snorm{z}^2 &\leq n + c\sigma^2(t + \sqrt{nt}) \\
        &\leq n + c\sigma^2(\frac{n}{c_0} + \frac{n}{\sqrt{c_0}}) \\
        &= n(1 + c\sigma^2(c_0^{-1} + c_0^{-1/2})) \\
        &= c_1 n
    \end{align*}
    for $c_1$ only dependent on $c, c_0, \sigma$.
    Now, plugging in for $ \snorm{\Pi_{\mathscr{L}} z}^2$,
    \begin{align*}
         \snorm{\Pi_{\mathscr{L}} z}^2 &\geq n - c\sigma^2(k + t + \sqrt{nt}) \\
         &\geq n - c\sigma^2(k + \frac{n}{c_0} + \frac{n}{\sqrt{c_0}}) \\
        &= n(1 - c\sigma^2(\frac{k}{n} + c_0^{-1} + c_0^{-1/2})).
    \end{align*}
    Let $k < \frac{n}{c_2}$ for $c_2 > c_0$. Then it is clear that $-\frac{k}{n} > -\frac{1}{c_2}$ and,
    \begin{align*}
        n(1 - c\sigma^2(\frac{k}{n} + c_0^{-1} + c_0^{-1/2})) &\geq n(1 - c\sigma^2(c_2^{-1} + c_0^{-1} + c_0^{-1/2})) \\
        &= n/c_3
    \end{align*}
    for constant $c_3$ that only depends on $c, \sigma^2, c_0$.
    We finally require that $1 - c\sigma^2(c_1^{-1} + c_0^{-1} + c_0^{-1/2}) > 0$ which we can achieve by taking $c_0$ sufficiently large.

    We now proceed to bound the union of $\ell$ of the complement events, in order to obtain a bound over the intersection of $\ell$ of these events.

    For multiple $z_i$'s, define by $A_i$ the events shown above, that $\snorm{z_i}^2 \leq c_2 n$ and $\snorm{\Pi_{\mathscr{L}_i} z_i}^2 \geq n/c_3$ where $z_i$ and $\mathscr{L}_i$ are defined analogous to $z, \mathscr{L}$ above.
    Then
    \begin{align*}
        P(\cup_{i=1}^\ell (A_i)^{c}) &\leq \sum_{i=1}^\ell P((A_i)^c) \\
        &\leq \sum_{i=1}^\ell 3e^{-t} \\
        &= 3\ell e^{-t}.
    \end{align*}
    Then $P(\cap_{i=1}^\ell A_i) \geq 1 - 3\ell e^{-t}$.
    Observing that $3\ell e^{-t} = 3e^{\ln(\ell)}e^{-t} = 3e^{-t + \ln(\ell)} = 3e^{-(t - \ln(\ell))}$ we can set the per-event $t$ accordingly and obtain the necessary bound.
    We want $0 < t - \ln(\ell) \leq n/c_0$ to complete the bound.
    Therefore, we need that, per-event, $\ln(\ell) < t \leq n/c_0 + \ln(\ell)$.
    If $\ln(\ell) \leq n/c_0$ then this reduces to needing $\ln(\ell) < t \leq 2n/c_0$.
    Since each event is defined for $t \in (0, n/c_0]$ the union bound proof is complete by taking $t = n/c_0$ and requiring that $\ln(\ell) \leq n/c_0$.

\end{proof}

The following lemma is necessary in order to extend a summation over random variables, each lower bounded by a real number with equal probability, to a unified lower bound over the entire summation.
\begin{lemma}[Lemma 9 from \citet{bartlett2020benignpnas}]
    \label{lem:bartlett_lemma15}
    Suppose $n \leq \infty$ and $\{\eta_i\}_{i=1}^n$ is a sequence of non-negative random variables, $\{t_i\}_{i=1}^n$ is a sequence of non-negative real numbers (at least one of which is strictly positive) such that for some $\delta \in (0, 1)$ and any $i \leq n$, $P(\eta_i > t_i) \geq 1-\delta$. Then,
    \begin{align*}
        P\left(\sum_{i=1}^n \eta_i \geq \frac{1}{2}\sum_{i=1}^n t_i\right) \geq 1 - 2\delta.
    \end{align*}
\end{lemma}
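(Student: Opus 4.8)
The plan is to reduce the statement to a single application of Markov's inequality, applied not to the $\eta_i$ themselves but to a ``shortfall'' random variable that quantifies how far $\sum_i \eta_i$ can dip below $\sum_i t_i$. A naive union bound over the events $\{\eta_i > t_i\}$ does not suffice, since $P(\bigcap_i \{\eta_i > t_i\})$ can be as small as $(1-\delta)^n$ (or even zero under adversarial dependence); the content of the lemma is precisely that a Markov-type averaging argument survives this.

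Concretely, I would set $Y_i := \max(t_i - \eta_i,\, 0) = (t_i - \eta_i)\mathbf{1}\{\eta_i \le t_i\}$, a non-negative random variable, and record two elementary facts. First, pointwise $\eta_i \ge t_i - Y_i$: on $\{\eta_i \le t_i\}$ one has $t_i - Y_i = \eta_i$, and on $\{\eta_i > t_i\}$ one has $Y_i = 0$, so $t_i - Y_i = t_i < \eta_i$. Second, since $t_i - \eta_i \le t_i$ on $\{\eta_i \le t_i\}$ (as $\eta_i \ge 0$), we get $Y_i \le t_i \mathbf{1}\{\eta_i \le t_i\}$, hence $\mathbb{E}[Y_i] \le t_i\, P(\eta_i \le t_i) \le \delta t_i$, using the hypothesis $P(\eta_i > t_i) \ge 1-\delta$. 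Summing the first fact over $i$ gives $\sum_i \eta_i \ge \sum_i t_i - \sum_i Y_i$, so the event $\{\sum_i \eta_i < \tfrac12\sum_i t_i\}$ is contained in $\{\sum_i Y_i > \tfrac12 \sum_i t_i\}$. Because at least one $t_i$ is strictly positive (and all are non-negative), $\sum_i t_i \in (0,\infty]$, and Markov's inequality together with $\mathbb{E}[\sum_i Y_i] = \sum_i \mathbb{E}[Y_i] \le \delta \sum_i t_i$ (the interchange justified by Tonelli, since $Y_i \ge 0$) yields
\[
P\!\left(\sum_i Y_i > \tfrac12 \sum_i t_i\right) \le \frac{\delta \sum_i t_i}{\tfrac12 \sum_i t_i} = 2\delta,
\]
and taking complements gives $P(\sum_i \eta_i \ge \tfrac12 \sum_i t_i) \ge 1-2\delta$, as claimed.

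I expect the only genuinely delicate point — and the one I would write out carefully — to be the bookkeeping around the indicator: the hypothesis is stated with a strict inequality $P(\eta_i > t_i)\ge 1-\delta$, so one must consistently use the complementary (non-strict) event $\{\eta_i \le t_i\}$ in the definition of $Y_i$ and in $\eta_i \ge t_i - Y_i$, otherwise the pointwise inequality can fail on $\{\eta_i = t_i\}$. The secondary point is the case $n = \infty$ with $\sum_i t_i = \infty$, where the displayed Markov bound is vacuous: there I would instead apply the finite-$n$ conclusion to each partial sum, $P(\sum_{i=1}^N \eta_i \ge \tfrac12\sum_{i=1}^N t_i) \ge 1-2\delta$, and pass to the limit via reverse Fatou on the complement events, $P(\liminf_N A_N) \ge 1 - \limsup_N P(A_N^c) \ge 1-2\delta$, noting that on $\liminf_N A_N$ the partial sums of $\eta_i$ dominate $\tfrac12\sum_{i\le N}t_i \to \infty$, so $\sum_i \eta_i = \infty$. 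Everything else is routine.
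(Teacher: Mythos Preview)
Your proof is correct, and it is precisely the standard Markov-inequality argument. Note, however, that the present paper does not itself prove this lemma: it is stated as Lemma~9 from \citet{bartlett2020benignpnas} and simply invoked where needed, so there is no in-paper proof to compare against. Your argument matches the original proof in that reference (define the shortfall $Y_i = (t_i-\eta_i)_+$, bound $\E[Y_i]\le \delta t_i$, then apply Markov to $\sum_i Y_i$), and your additional care with the strict/non-strict inequality and the $n=\infty$, $\sum_i t_i = \infty$ case is more than what is typically written out.
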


We now provide a minor generalization of Corollary S.6 in \citet{bartlett2020benignpnas} that comes from replacing $a_1$ in a non-increasing sequence of non-negative numbers $\{a_i\}_{i=1}^p$ with $\max_i a_i$ and only requiring that $\{a_i\}_{i=1}^p$ is a sequence of non-negative numbers.
\begin{corollary}
    \label{cor:bernstein_any_seq}
    There is a universal constant $c$ such that for any sequence $\{a_i\}_{i=1}^p$ of non-negative numbers such that $\sum_{i=1}^p a_i < \infty$, and any independent, centered, $\sigma$-subexponential random variables $\{\xi_i\}_{i=1}^p$, and any $x > 0$, with probability at least $1 - 2e^{-cx}$,
    \begin{align*}
        |\sum_{i} a_i \xi_i| \leq \sigma \max \l( x \max_i a_i, \sqrt{x\sum_{i=1}^p a_i^2} \r).
    \end{align*}
\end{corollary}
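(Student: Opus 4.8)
The plan is to reduce the statement directly to the standard Bernstein inequality for weighted sums of independent, centered, $\sigma$-subexponential random variables, exactly as in the proof of Corollary~S.6 of \citet{bartlett2020benignpnas}; the only change is that we never invoke monotonicity of $\{a_i\}$, so the role played there by $a_1$ is played here by $\max_i a_i$. Recall that Bernstein's inequality provides a universal constant $c_0>0$ such that for every $t>0$,
\begin{align*}
    \P\l(\l|\sum_i a_i\xi_i\r| \geq t\r) \leq 2\exp\l(-c_0\min\l(\frac{t^2}{\sigma^2\sum_i a_i^2},\ \frac{t}{\sigma\max_i a_i}\r)\r),
\end{align*}
because the sub-exponential norm of $a_i\xi_i$ is $a_i\sigma$ and the weight vector enters only through $\big(\sum_i a_i^2\big)^{1/2}$ and $\max_i a_i$, both of which are permutation-invariant. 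Also $\sum_i a_i^2 \leq (\max_i a_i)\sum_i a_i < \infty$, so the series is absolutely convergent almost surely and every quantity above is finite; this covers the case $p=\infty$ as well.

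Next I would set $t := \sigma\max\big(x\max_i a_i,\ \sqrt{x\sum_i a_i^2}\big)$ and verify that the exponent $\min\big(t^2/(\sigma^2\sum_i a_i^2),\ t/(\sigma\max_i a_i)\big)$ is at least $x$, splitting on which term attains the maximum. If $x\max_i a_i \geq \sqrt{x\sum_i a_i^2}$, then $t=\sigma x\max_i a_i$, so $t/(\sigma\max_i a_i)=x$; squaring the hypothesis and dividing by $x$ gives $\sum_i a_i^2 \leq x(\max_i a_i)^2$, whence $t^2/(\sigma^2\sum_i a_i^2)=x^2(\max_i a_i)^2/\sum_i a_i^2 \geq x$. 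In the complementary case $t=\sigma\sqrt{x\sum_i a_i^2}$, so $t^2/(\sigma^2\sum_i a_i^2)=x$; the reversed hypothesis gives $\sum_i a_i^2 \geq x(\max_i a_i)^2$, i.e. $\sqrt{x\sum_i a_i^2}\geq x\max_i a_i$, whence $t/(\sigma\max_i a_i)\geq x$. In both cases the exponent is $\geq c_0 x$, so $\P(|\sum_i a_i\xi_i|\geq t)\leq 2e^{-c_0 x}$, and taking complements with $c:=c_0$ finishes the proof. (If $\max_i a_i=0$ all $a_i$ vanish and the claim is trivial.)

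There is no real obstacle here: the entire content is the observation that Bernstein's tail depends on $\{a_i\}$ only through $\max_i a_i$ and $\sum_i a_i^2$, neither of which depends on the ordering, so the monotonicity hypothesis in the original corollary is cosmetic. The only points requiring a modicum of care are that the constant $c$ emerging from Bernstein must be genuinely universal --- independent of $\sigma$, $p$, and the $a_i$ --- which is ensured by normalizing by $\sigma$ and by the two weight-vector norms before applying the tail bound, and that the series $\sum_i a_i\xi_i$ is well-defined when $p=\infty$, which is handled by the bound $\sum_i a_i^2\leq(\max_i a_i)\sum_i a_i<\infty$ noted above.
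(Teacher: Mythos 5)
Your proposal is correct and takes essentially the same route the paper implies: the paper offers no explicit proof, describing the corollary only as Bartlett et al.'s Corollary~S.6 with $a_1$ replaced by $\max_i a_i$ once monotonicity is dropped, and your argument --- applying Bernstein's inequality for weighted sub-exponential sums, observing the tail depends on $\{a_i\}$ only through the permutation-invariant quantities $\max_i a_i$ and $\sum_i a_i^2$, and verifying the exponent exceeds $x$ by the case split on which term attains the max --- is exactly the proof of that original corollary adapted in that way. Your added remarks on absolute convergence when $p=\infty$ and on the degenerate $\max_i a_i = 0$ case are harmless and sound.
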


Lastly, the following identity will allow us to use the \textit{PriorSigns} assumption to derive a new form for the bias term, which will be used for the proof of the lower bound.

\begin{lemma}[Identity for expectation of a quadratic form]
\label{lem:kend_quad_form}
    Assume $M\in \R^{p \times p}$ is a symmetric matrix. For a random vector $x\in\R^p$ with mean $\E[x]$ and covariance $\mathrm{Cov}(x)$, 
    \begin{align*}
        \E_x[x^T M x] = \E[x]^T M \E[x] + \tr(M \textrm{Cov}(x)).
    \end{align*}
\end{lemma}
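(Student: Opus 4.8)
The plan is to prove this standard identity by the ``center-and-expand'' argument combined with the trace trick. Write $\mu := \E[x]$ and decompose $x = \mu + u$, where $u := x - \mu$ satisfies $\E[u] = 0$ and $\E[uu^\top] = \mathrm{Cov}(x)$ by the definition of the covariance matrix.

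First I would expand the quadratic form by bilinearity:
\[
x^\top M x = (\mu + u)^\top M (\mu + u) = \mu^\top M \mu + \mu^\top M u + u^\top M \mu + u^\top M u.
\]
Taking expectations over $x$ (equivalently over $u$), the first term is deterministic and remains $\mu^\top M \mu = \E[x]^\top M \E[x]$. The two middle terms are linear in $u$, so $\E[\mu^\top M u] = \mu^\top M \E[u] = 0$ and likewise $\E[u^\top M \mu] = 0$; hence both cross terms vanish. For the last term I would apply the trace trick: since $u^\top M u$ is a scalar, $u^\top M u = \tr(u^\top M u) = \tr(M u u^\top)$, and linearity of the trace and of the expectation give $\E[u^\top M u] = \tr\big(M\, \E[uu^\top]\big) = \tr\big(M\, \mathrm{Cov}(x)\big)$. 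Summing the three contributions yields $\E_x[x^\top M x] = \E[x]^\top M \E[x] + \tr(M\,\mathrm{Cov}(x))$, as claimed.

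There is essentially no obstacle here: the only point requiring (routine) care is that every term above is integrable, which follows from the hypothesis that $\mathrm{Cov}(x)$ is well defined (finite second moments), so interchanging expectation with the finite sum and with the trace is legitimate. I note in passing that symmetry of $M$ is not actually used in this derivation of the identity, but assuming it is harmless and matches how the lemma is later invoked.
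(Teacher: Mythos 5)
Your proof is correct and uses the same two ingredients as the paper's, just in the opposite order. You center first, writing $x = \E[x] + u$ with $u := x - \E[x]$, expand the quadratic form so the cross terms vanish by $\E[u]=0$, and then apply the trace trick only to $\E[u^\top M u] = \tr(M\,\E[uu^\top])$. The paper instead applies the trace trick directly to the uncentered form, $\E[x^\top M x] = \E[\tr(Mxx^\top)] = \tr(M\,\E[xx^\top])$, and then substitutes the second-moment decomposition $\E[xx^\top] = \mathrm{Cov}(x) + \E[x]\E[x]^\top$. Your version makes the vanishing of the cross terms explicit; the paper's bundles that cancellation into the moment identity. Both are equally short and rigorous, and your parenthetical observation that symmetry of $M$ is not actually needed for the identity is accurate --- the paper's derivation does not use it either.
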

\begin{proof}
    \begin{align*}
        \E_x[x^T M x] &=  \E[\tr(x^T M x)] \\
        &= \E[\tr(Mxx^T)] \\
        &= \tr(M\E[xx^T]) \\
        &= \tr(M\textrm{Cov}(x) + M\E[x]\E[x]^T ) \\
        &= \tr(M\textrm{Cov}(x)) + \tr(\E[x]M\E[x]^T ) \\
        &= \tr(M\textrm{Cov}(x)) + \E[x]M\E[x]^T.
    \end{align*}

\end{proof}
 \section{Proof of excess risk bound}
\label{apdx:excess_risk_proof}

We start by restating Theorem \ref{thm:exc_risk_decomp}.
\excessriskdecomp*

\begin{proof}
Let us begin by noting that the excess risk of any $\theta$ is given by,
\begin{align*}
R(\theta) &= \extarget {(x,y)} \l[ \l( y - x^\top \theta\r)^2 \r] - \extarget {(x,y)} \l[ \l(y - x^\top \ttarget\r)^2\r] \\
&= \extarget {(x,y)} \l[ \l( y - x^\top \ttarget + x^\top \ttarget - x^\top \theta \r)^2\r] - \extarget {(x,y)} \l[ \l(y - x^\top \ttarget\r)^2\r] \\
&= \extarget {(x,y)} \l[ \l( x^\top \ttarget - x^\top \theta \r)^2\r] +  2\extarget {(x,y)} \l[ \l(y - x^\top \ttarget\r) \l( x^\top \ttarget - x^\top \theta\r) \r] \\
&\overset{(i)}= \extarget {x} \l[ \l( x^\top \ttarget - x^\top \theta \r)^2\r].\numberthis \label{eq:excess.risk.target.modelshift}
\end{align*}
Equality $(i)$ uses that, conditional on $x$, $y-x^\top \ttarget|x$ is mean-zero, which is given in Assumption 3 (\textit{well-specified - target}).
So that
\[ \extarget {(x,y)} \l[ \l(y - x^\top \ttarget\r) \l( x^\top \ttarget - x^\top \theta\r) \r] = \E\l[  \l( x^\top \ttarget - x^\top \theta\r)\E \l[ \l(y - x^\top \ttarget\r) \big|x \r] \r]=0.\]

We now note that the source-data MNI can be decomposed as follows,
\begin{align*}
    \mnisource(\ysource) &= \xsource^\top (\xxtsource)^{-1} \ysource \\
    &= \xsource^\top (\xxtsource)^{-1} (\xsource\tsource + \epssource) \\
    &= \mnisource(\xsource\tsource) + \mnisource(\epssource)
\end{align*}

We can thus continue from~\eqref{eq:excess.risk.target.modelshift} to characterize the excess risk of the source-data MNI as
\begin{align*}
    R(\mnisource(\ysource)) &= \extarget x \l[ \l( x^\top \ttarget - x^\top \mnisource(\ysource) \r)^2 \r] \\
    &= \extarget x \l[ \l( x^\top \ttarget - x^\top (\mnisource(\xsource\tsource) + \mnisource(\epssource)) \r)^2 \r] \\
    &= \extarget x \l[ \l( x^\top (\ttarget -  \mnisource(\xsource\tsource)) - x^\top \mnisource(\epssource) \r)^2 \r] \\
    &\overset{(i)}\leq \extarget x \l[ 2\l( x^\top (\ttarget -  \mnisource(\xsource\tsource)) \r)^2 + 2\l( x^\top \mnisource(\epssource) \r)^2 \r] \\
    &= \extarget x \l[ 2\l( x^\top (\ttarget - \tsource + \tsource - \mnisource(\xsource\tsource)) \r)^2 + 2\l( x^\top \mnisource(\epssource) \r)^2 \r] \\
    &\overset{(ii)}\leq \extarget x \l[ 4\l( x^\top (\ttarget - \tsource) \r)^2 + 4\l( x^\top(\tsource - \mnisource(\xsource\tsource)) \r)^2 + 2\l( x^\top \mnisource(\epssource) \r)^2 \r].\numberthis 
\label{eq:excess.risk.ub.modelshift}
\end{align*}
In inequalities $(i)$ and $(ii)$, we have used Young's inequality, which implies $(a-b)^2 \leq 2(a-c)^2 + 2(b-c)^2$ for any $a,b,c\in \R$. 
Recalling that
\[ \snorm{x}_M^2 := x^\top M x,\] it is apparent that the first term is just the weighted distance between the source and target vectors,
\begin{align*} \label{eq:distance.target.source.modelshift}
  \extarget x \l[ \l( x^\top (\ttarget - \tsource) \r)^2\r] = (\ttarget - \tsource)^\top \extarget x \l[ x x^\top \r](\ttarget - \tsource) = \snorm{\tsource - \ttarget}_{\covtarget}^2.\numberthis 
\end{align*}

The second term looks quite similar to the bias term, $B$, in \citet{bartlett2020benignpnas} and \citet{tsigler2023benignjmlr}. 
\begin{align*}
&\extarget x \l[ \l( x^\top (\tsource - \mnisource(\xsource\tsource)) \r)^2 \r]\\
&\qquad = \l(\tsource - \mnisource\l(\xsource \tsource\r)  \r) \extarget x[xx^\top ] \l(\tsource -  \mnisource \l(\xsource \tsource\r) \r)\\
&\qquad= \snorm{\tsource - \mnisource\l(\xsource \tsource \r) }_{\covtarget}^2.\label{eq:bias.term.modelshift} \numberthis 
\end{align*}
The key difference with the standard supervised setting is that now the quantitiy in the middle is $\covtarget$, not $\covsource$.  Equivalently, the norm on $\tsource - \mnisource(\xsource \tsource)$ is induced by $\covtarget$ rather than $\covsource$. 

And finally, the third term is similar to the variance term, $C$, in ~\citet{bartlett2020benignpnas}: 
\begin{align*}
\extarget x \l[ \l( x^\top \mnisource(\epssource) \r)^2 \r] &= \mnisource(\epssource)^\top \extarget x [xx^\top] \mnisource(\epssource) \\
&= \mnisource(\epssource)^\top \covtarget \mnisource(\epssource) \\
&= \snorm{\mnisource(\epssource)}_{\covtarget}^2.\numberthis\label{eq:variance.term.modelshift}
\end{align*}
As in the bias term, the only difference is that the middle term is $\covtarget$ rather than $\covsource$.  Equivalently, the norm on $\mnisource(\epssource)$ is induced by $\covtarget$ rather than $\covsource$. 

Putting it all together, we get the following upper bound for the excess risk of the minimum-norm interpolator on the training data,
\begin{align*}
R(\mnisource(\ysource)) &\leq 4 \snorm{\tsource - \ttarget}_{\covtarget}^2 + 4 \snorm{\tsource - \mnisource(\xsource \tsource)}_{\covtarget}^2 + 2 \snorm{\mnisource(\epssource)}_{\covtarget}^2.
\end{align*}
This completes the upper bound for the risk.

For the lower bound, we have
\begin{align*}
    \E_{\epssource} R(\mnisource(\ysource)) &= \E_{\epssource, x \sim \dtarget} \l[ \l( x^\top \ttarget - x^\top \mnisource(\ysource) \r)^2 \r] \\
    &= \E_{\epssource, x \sim \dtarget} \l[ \l( x^\top (\ttarget -  \mnisource(\xsource\tsource)) - x^\top \mnisource(\epssource) \r)^2 \r] \\
    &= \E_{\epssource, x \sim \dtarget} \bigg[ \l( x^\top (\ttarget -  \mnisource(\xsource\tsource)) \r)^2 - 2\cdot x^\top (\ttarget -  \mnisource(\xsource\tsource)) \cdot x^\top \mnisource(\epssource) \\
    &\qquad + \l( x^\top \mnisource(\epssource) \r)^2 \bigg] \\
    &\overset{(i)}= \extarget x \l[ \l( x^\top (\ttarget -  \mnisource(\xsource\tsource)) \r)^2 \r] + \E_{\epssource, x \sim \dtarget} \l[\l( x^\top \mnisource(\epssource) \r)^2 \r]
\end{align*}
The equality $(i)$ uses that, conditional on $\xsource$, $\epssource$ is zero-mean.  
Note that the second term above is just $\E_{\epssource} \snorm{\mnisource(\epssource)}_{\covtarget}^2$, so we need only deal with the first term.  
Adding and subtracting $\tsource$ inside the square and expanding, we have
\begin{align*}
& \E_{x\sim \dtarget} \l[ \l( x^\top (\ttarget -  \mnisource(\xsource\tsource)) \r)^2 \r]\\
&\quad = \E_{x\sim \dtarget} \l[ \l( x^\top (\ttarget - \tsource) \r)^2\r] + \E_{x\sim \dtarget} \l[ \l(x^\top(\tsource - \mnisource(\xsource\tsource)) \r)^2 \r]\\
&\qquad + 2 \E_{x\sim \dtarget} \l[ (\ttarget - \tsource)^\top x x^\top (\tsource - \mnisource(\xsource\tsource)) \r]\\
&\quad =  \snorm{\ttarget-\tsource}_{\covtarget}^2 + \snorm{\tsource - \mnisource(\xsource \tsource)}_{\covtarget}^2 + 2(\ttarget - \tsource)^\top \covtarget (\tsource - \mnisource(\xsource \tsource)). 
\end{align*}

\end{proof}

 \section{Overview of variance and bias proof techniques}
\label{apdx:proof_sketch}

The central pillar of both proofs is controlling the eigenvalues of $A_k$, which in turn provides certain bounds on the eigenvalues of $A$ and $A_{-i}$. A key finding of \citet{bartlett2020benignpnas} is that once $\rho_k$ is large enough, all eigenvalues of $A_k$ are identical up to a constant factor. Specifically,
\begin{align*}
    z^T A z \approx n^2\lsource_{k+1}\rho_k, && z^T A^{-1} z \approx n(n\lsource_{k+1}\rho_k)^{-1}.
\end{align*}
\subsection{Variance}
Due to independence between the components of $\epssource$, the variance term from Eqn. \ref{eqn:purified_variance} can be expressed as 
\begin{align*}
    V &= \E_{\varepsilon_\src} [V_{\varepsilon_\src} / v_\varepsilon^2] \\
    &= \tr(A^{-1}\xsource \Tilde{\Sigma} \xsource^\top A^{-1}) \\
    &= \sum_{i=1}^p \Tilde{\lambda_i} \lambda_i z_i^T A^{-2} z_i.
\end{align*}
Now that we are dealing with a sum of quadratic forms, we consider the first $k^*$ signal and last $p-k^*$ noise components separately. Using the Sherman-Morrison formula the former can be written as
\begin{align*}
    \sum_{i\leq k^*} \Tilde{\lambda_i} \lambda_i z_i^T A^{-2} z_i &= \sum_{i\leq k^*} \frac{\ltarget_i}{\lsource_i} \frac{\lsource_i^2 z_i^T A_{-i}^{-2}z_i}{(1 + \lambda_i z_i^T A_{-i}^{-1}z_i)^2} \\
    &\approx \sum_{i\leq k^*} \frac{\ltarget_i}{\lsource_i} \frac{\lsource_i^2 n (n\lsource_{k+1}\rho_k)^{-2}}{\lsource_i^2n^2(n\lsource_{k+1}\rho_k)^{-2}} \\
    & =  \sum_{i\leq k^*} \frac{\ltarget_i}{\lsource_i} \frac{1}{n},
\end{align*}
where $\lambda_i z_i^T A_{-i}^{-1}z_i$ dominates $1$ for $i\leq k^*$. For the sum over the noise components the $1$ in the denominator dominates the other term and so we directly analyze the tail contributions as,
\begin{align*}
    \sum_{i>k^*}\frac{\ltarget_i}{\lsource_i} \lsource_i^2 z_i^T A^{-2} z_i &\approx \sum_{i>k^*}\frac{\ltarget_i}{\lsource_i} \lsource_i^2 n(n\lsource_{k+1}\rho_k)^{-2}. 
\end{align*}

The result is that the variance term is upper and lower bounded by
\begin{align*}
    \frac{1}{n}\sum_{i=1}^k \frac{\ltarget_i}{\lsource_i} + \sum_{i > k} \frac{\ltarget_i}{\lsource_i} \l( \frac{\lsource_i^2}{n\lambda_{k+1}^2 \rho_k^2} \r)
\end{align*}
times constant factors.
\subsection{Bias} 
As in Eqn. \ref{eqn:bias_b2}, the bias term is given by
\begin{align*}
    \bias &= \snorm{\tsource - X^TA^{-1}X\tsource}_{\covtarget}^2 \\
    &= \tr(\tsource^T(I-X^TA^{-1}X)\covtarget(I-X^TA^{-1}X)\tsource) \\
    &\leq \tr(\tsource\tsource^T)\cdot \tr((I-X^TA^{-1}X)\covtarget(I-X^TA^{-1}X)) \\
    &= \snorm{\tsource}^2\sum_{i=1}^p\frac{\ltarget_i}{\lsource_i}\sum_{j=1}^p\Big(e_i[j]-\sqrt{\lambda_i\lambda_j}z_i^\top A^{-1}z_j\Big)^2,
\end{align*}
where we use the Cauchy-Scharwz inequality to separate the parameter vector from the quadratic form. A quick application of the Sherman-Morrison formula allows us to write
\begin{align*}
    \bias &\leq \snorm{\tsource}^2\sum_{i=1}^p\Tilde{\lambda}_i \frac{1}{1 + \lsource_i z_i^TA_{-i}^{-1}z_i}.
\end{align*}
From here, we once again exert control over the eigenvalues of $A_{-i}$ to get
\begin{align*}
     \frac{1}{1 + \lsource_i z_i^TA_{-i}^{-1}z_i} &\approx  \frac{1}{1 +\frac{\lsource_i}{\lsource_{k+1}\rho_k}},
\end{align*}
which completes the upper bound proof sketch.

Note that the looseness of the bias bounds largely stems from the application of the Cauchy-Schwarz inequality. The only situations in which the bound becomes an equality are when 
\begin{align*}
    c\tsource = (I-X^TA^{-1}X)\covtarget^{1/2}
\end{align*}
for some scalar $c\in\R$ or when $\tsource$ is the zero vector.

Between the upper and lower bounds, the latter is likely tighter due to the use of the \textit{PriorSigns} assumption. As detailed in Appendix \ref{apdx:bias_lower}, it allows us to write

\begin{align*}
    B \geq \tsource^T(I-\textrm{diag}(X^TA^{-1}X))\covtarget(I-\textrm{diag}(X^TA^{-1}X))\tsource,
\end{align*}
where for a matrix $Q\in\R^{m\times m}$, we use $\textrm{diag}(Q)\in\R^{m\times m}$ to denote zeroed off-diagonal entries. The contribution of the off-diagonal entries is non-negative and dominated by the diagonals, so they can be dropped in the lower bound while preserving tightness under the $\textit{PriorSigns}$ assumption. In general, non-negative terms cannot be discarded in the proof of an upper bound, so we resort to the Cauchy-Schwarz inequality in order to avoid addressing the off-diagonals directly. However, decoupling the model vector $\tsource$ from the matrix $(I-X^TA^{-1}X)\covtarget^{1/2}$ introduces another degree of looseness, contributing to the gap between our bounds. Improving our upper bound will require controlling the off-diagonals of this matrix product with a technique more appropriate than Cauchy-Schwarz. \section{Proof of variance bounds}
\label{apdx:variance_bound_proof}

\varianceublb*

\begin{proof}
    We derive the variance terms necessary here and finish the proof of the upper bound in Appendix \ref{apdx:variance_upper} and the lower bound in Appendix \ref{apdx:variance_lower}.

    We follow the proof techniques in \citet{bartlett2020benignpnas,tsigler2023benignjmlr}.
    Observe that we can express the variance term as follows,
    \begin{align*}
        V &= \E_{\varepsilon_\src} [V_{\varepsilon_\src} / v_\varepsilon^2] \\
        &= \E_{\varepsilon_\src} [\snorm{\xsource^\top(\xsource \xsource^\top)^{-1}\varepsilon_\src}_{\covtarget}^2 / v_\varepsilon^2].
    \end{align*}
    Defining $A = \xsource \xsource^\top$,
    \begin{align*}
        V &= \E_{\varepsilon_\src} [\snorm{\xsource^\top A^{-1}\varepsilon_\src}_{\covtarget}^2 / v_\varepsilon^2] \\
        &= \E_{\varepsilon_\src} [(\varepsilon_\src^\top A^{-1}\xsource \covtarget \xsource^\top A^{-1}\varepsilon_\src) / v_{\varepsilon}^2] \\
        &= \E_{\varepsilon_\src} [\tr(\varepsilon_\src^\top A^{-1}\xsource \covtarget \xsource^\top A^{-1}\varepsilon_\src) / v_{\varepsilon}^2].
    \end{align*}
    Using the trace trick,
    \begin{align*}
        V &= \tr(A^{-1}\xsource \covtarget \xsource^\top A^{-1}\E_{\varepsilon_\src}[\varepsilon_\src \varepsilon_\src^\top])/v_\epsilon^2 \\
         &= \tr(A^{-1}\xsource \covtarget \xsource^\top A^{-1} v_\epsilon^2 I_n)/v_\epsilon^2 \\
         &= \tr(A^{-1}\xsource \covtarget \xsource^\top A^{-1}) \\
         &= \tr(\xsource \covtarget \xsource^\top A^{-2}) \\
         &= \tr((\sum_{i=1}^p \Tilde{\lambda_i} x^i (x^i)^\top) A^{-2}) \\
         &= \tr((\sum_{i=1}^p \Tilde{\lambda_i} \lambda_i z_i z_i^\top) A^{-2})
    \end{align*}
    where $x^i \in \R^n$ and $\frac{x^{i}}{\sqrt{\lambda_i}} = z_i \in \R^n$ are columns of $\xsource \in \R^{n \times p}$ and $\xsource \covsource^{-1/2} \in \R^{n \times p}$, respectively. 
    Continuing the calculation, we have that
    \begin{align*}
        V &= \sum_{i=1}^p \Tilde{\lambda_i} \lambda_i \tr(z_i^T A^{-2} z_i) \\
        &= \sum_{i=1}^p \Tilde{\lambda_i} \lambda_i \tr(z_i^T (A_{-i} + z_i z_i^T \lambda_i)^{-2} z_i) \\
        &= \sum_{i=1}^p \Tilde{\lambda_i} \lambda_i \frac{z_i^T A_{-i}^{-2}z_i}{(1 + \lambda_i z_i^T A_{-i}^{-1}z_i)^2}
    \end{align*}
    where $A_{-i} = XX^T - \lambda_i z_i z_i^T = \sum_{j \neq i} \lambda_j z_j z_j^T$.
    This expression will serve as the starting point for the variance term, which we will now proceed to upper and lower bound.

\end{proof}

\subsection{Upper bound}
\label{apdx:variance_upper}

After isolating the contribution of $\frac{\Tilde{\lambda_i}}{\lambda_i}$, most of the components of this proof are as given in the proof of Lemma 6 in \citet{bartlett2020benignpnas}.
For completeness, we replicate them here and refer the reader to their paper for further details and intuitions.

We start by separating the variance term into the top $k$ components and the bottom $p - k$ components as follows,
\begin{align*}
    V = \sum_{i=1}^k \frac{\Tilde{\lambda_i}}{\lambda_i} \frac{\lambda_i^2 z_i^T A_{-i}^{-2}z_i}{(1 + \lambda_i z_i^T A_{-i}^{-1}z_i)^2} + \sum_{i > k} \frac{\Tilde{\lambda_i}}{\lambda_i} (\lambda_i^2 z_i^T A^{-2} z_i).
\end{align*}

Fix constants $b, c_1 \geq 1$ as defined in Lemma \ref{lemma:bartlett_lemma10}. 
Then, with probability $1 - 2e^{-n/c_1}$, if $\rho_k \geq b$ then for all $z \in \R^n$ and $i \in [1, k]$,
\begin{align*}
    z_i^T A_{-i}^{-2}z_i &\leq \mu_1(A_{-i}^{-2}) \snorm{z_i}^2 \\
    &\leq \mu_n(A_{-i})^{-2} \snorm{z_i}^2 \\
    &\leq \frac{c_1^2\snorm{z_i}^2}{(n \lambda_{k+1}\rho_k)^2}
\end{align*}
and on the same event,
\begin{align*}
    z_i^T A_{-i}^{-1}z_i &\geq (\Pi_{\calL_i} z_i)^T A_{-i}^{-1} (\Pi_{\calL_i} z_i) \\
    &\geq \mu_n(A_{-i}^{-1}) \snorm{\Pi_{\calL_i} z_i}^2 \\
    &\geq \mu_{k+1}(A_{-i})^{-1} \snorm{\Pi_{\calL_i} z_i}^2 \\
    &\geq \frac{\snorm{\Pi_{\calL_i} z_i}^2}{nc_1 \lambda_{k+1}\rho_k}
\end{align*}
where $\Pi_{\calL_i}$ is the orthogonal projection onto the span of the bottom $n-k$ eigenvectors of $A_{-i}$.
It is important to use the projection onto the bottom eigenvectors of $A_{-i}$ in lower bounding the denominator term because we have to use the fact that $\mu_n(A_{-i}^{-1}) \geq \mu_1(A_{-i})^{-1}$.
When we don't do the projection, then $z_i$ is affected by all of $A_{-i}$ and so the largest eigenvalue that affects this expression is $\mu_1(A_{-i}) = \lambda_1$.
After doing this projection, we no longer have contributions from the top $k$ eigenvectors / eigenvalues in the summation of $z_i^T A_{-i}^{-1} z_i$. 
Therefore, the largest eigenvalue that affects this summation is now $\lambda_{k+1}$ instead of $\lambda_1$, and so we can use this in our lower bound instead, as desired.

Putting it together, for $i \leq k$,
\begin{align*}
    \frac{\lambda_i^2 z_i^T A_{-i}^{-2} z_i}{(1 + \lambda_i z_i^T A_{-i}^{-1}z_i)^2} &\leq \frac{z_i^T A_{-i}^{-2}z_i}{(z_i^T A_{-i}^{-1}z_i)^2} \\
    &\leq c_1^4 \frac{\snorm{z_i}^2}{\snorm{\Pi_{\calL_i} z_i}^4}.
\end{align*}

We now invoke Corollary \ref{cor:z_projz_corollary} with a union bound over $k$ events.
Let $t < n/c_0$ and $k \in (0, n/c)$ for $c > c_0$ and $c_0$ sufficiently large.
Since $k < n/c$ we also satisfy the union bound condition that $\ln(k) < n/c$.
Then, with probability at least $1 - 3e^{-t}$,
\begin{align*}
    \snorm{z_i}^2 &\leq c_2 n \\
    \snorm{\Pi_{\calL_i} z_i}^2 &\geq n/c_3
\end{align*}
for constants $c_2, c_3$ that only depend on $\sigma_x, c_0$, and a universal constant $c$ as defined in Corollary \ref{cor:bartlett_cor24}.

Altogether, with probability $1 - 5e^{-n/c_0}$ for $c_0$ sufficiently large,
\begin{align*}
    \sum_{i=1}^k \frac{\Tilde{\lambda_i}}{\lambda_i} \Bigg( \frac{\lambda_i^2 z_i^T A_{-i}^{-2} z_i}{(1 + \lambda_i z_i^T A_{-i}^{-1}z_i)^2} \Bigg) &\leq \sum_{i=1}^k \frac{\Tilde{\lambda_i}}{\lambda_i} c_1^4 \frac{\snorm{z_i}^2}{\snorm{\Pi_{\calL_i} z_i}^4} \\
    &\leq \sum_{i=1}^k \frac{\Tilde{\lambda_i}}{\lambda_i} c_1^4 \frac{c_2 c_3^2}{n} \\
    &= c_4 \sum_{i=1}^k \frac{\Tilde{\lambda_i}}{\lambda_i} \frac{1}{n}.
\end{align*}

On the same event we use to bound $\mu_{k+1}(A_{-i})$ via Lemma \ref{lemma:bartlett_lemma10}, we also have that $\mu_1(A^{-2}) \leq \mu_n(A)^{-2}$.
As such,
\begin{align*}
    \sum_{i > k} \frac{\Tilde{\lambda_i}}{\lambda_i} (\lambda_i^2 z_i^T A^{-2} z_i) \leq \frac{c_1^2 \sum_{i > k} \frac{\Tilde{\lambda_i}}{\lambda_i} \lambda_i^2 \snorm{z_i}^2}{(n\lambda_{k+1}\rho_k)^2}.
\end{align*}

Then by Corollary \ref{cor:bernstein_any_seq}, there is a universal constant $a$ such that with probability at least $1 - 2e^{-t}$ for $t < n/c_0$ and $c_0 > a^{-1}$,
\begin{align*}
    \sum_{i > k} \frac{\Tilde{\lambda}_i}{\lambda_i} \lambda_i^2 \snorm{z_i}^2 &\leq \sigma_x^2 \max(t \max_{i > k}(\Tilde{\lambda}_{i}\lambda_{i}), \sqrt{t\sum_{i > k} (\Tilde{\lambda_i}\lambda_i)^2}) \\
    &\leq n\sum_{i > k} \Tilde{\lambda}_i \lambda_i + \sigma_x^2 \max(t \max_{i > k}(\Tilde{\lambda}_{i}\lambda_{i}), \sqrt{tn\sum_{i > k} (\Tilde{\lambda_i}\lambda_i)^2}) \\
    &\leq n\sum_{i > k} \Tilde{\lambda}_i \lambda_i + \sigma_x^2 \max(t\sum_{i > k} \Tilde{\lambda}_i \lambda_i, \sqrt{tn}\sum_{i > k} \Tilde{\lambda}_i \lambda_i) \\
    &\leq c_5 n \sum_{i > k} \Tilde{\lambda}_i \lambda_i \\
    &= c_5 n \sum_{i > k} \frac{\Tilde{\lambda}_i}{\lambda_i} \lambda_i^2.
\end{align*}

Altogether,
\begin{align*}
    \sum_{i > k} \frac{\Tilde{\lambda_i}}{\lambda_i} (\lambda_i^2 z_i^T A^{-2} z_i) &\leq \frac{c_1^2 \sum_{i > k} \frac{\Tilde{\lambda_i}}{\lambda_i} \lambda_i^2 \snorm{z_i}^2}{(n\lambda_{k+1}\rho_k)^2} \\
    &\leq \frac{c_1^2c_5 n}{(n\lambda_{k+1}\rho_k)^2} \sum_{i > k} \frac{\Tilde{\lambda}_i}{\lambda_i} \lambda_i^2 \\
    &= c_6 \sum_{i > k} \frac{\Tilde{\lambda}_i}{\lambda_i} \l( \frac{\lambda_i^2}{n\lambda_{k+1}^2 \rho_k^2} \r).
\end{align*}

By taking $c > \max(c_0, c_4, c_6)$ we have that with probability $1 - 7e^{-n/c}$,
\begin{align*}
    V &\leq c\l( \sum_{i=1}^k \frac{\Tilde{\lambda}_i}{\lambda_i} \frac{1}{n} + \sum_{i > k} \frac{\Tilde{\lambda}_i}{\lambda_i} \l( \frac{\lambda_i^2}{n\lambda_{k+1}^2 \rho_k^2} \r) \r) \\
    &=  \frac{1}{n}\sum_{i=1}^k \frac{\ltarget_i}{\lsource_i} + n \frac{\sum_{i > k} \ltarget_i \lsource_i}{(\sum_{i > k} \lsource_i)^2}.
\end{align*}

\subsection{Lower bound}
\label{apdx:variance_lower}

Recall that the variance takes the form,
\begin{align*}
    V = \sum_{i=1}^p \Tilde{\lambda_i} \lambda_i \frac{z_i^T A_{-i}^{-2}z_i}{(1 + \lambda_i z_i^T A_{-i}^{-1}z_i)^2}.
\end{align*}

\noindent By Cauchy-Schwartz,
\begin{align*}
    (z_i^T A_{-i}^{-1} z_i)^2 = |\langle z_i, A_{-i}^{-1}z_i \rangle|^2 \leq \snorm{z_i}^2 \cdot (z_i^T A_{-i}^{-2}z_i).
\end{align*}
We plug this identity into our lower bound, and further multiply by $\frac{\lambda_i}{\lambda_i}$, resulting in 
\begin{align*}
    V &= \sum_{i=1}^p \Tilde{\lambda_i} \lambda_i \frac{z_i^T A_{-i}^{-2}z_i}{(1 + \lambda_i z_i^T A_{-i}^{-1}z_i)^2} \\
    &= \sum_{i=1}^p (\frac{\Tilde{\lambda_i}}{\lambda_i}) \frac{\lambda_i^2 z_i^T A_{-i}^{-2}z_i}{(1 + \lambda_i z_i^T A_{-i}^{-1}z_i)^2} \\
    &\geq \sum_{i=1}^p (\frac{\Tilde{\lambda_i}}{\lambda_i}) \frac{\lambda_i^2 (z_i^T A_{-i}^{-1}z_i)^2}{||z_i||^2(1 + \lambda_i z_i^T A_{-i}^{-1}z_i)^2} \\
    &= \sum_{i=1}^p (\frac{\Tilde{\lambda_i}}{\lambda_i}) \frac{1}{||z_i||^2(1 + \lambda_i z_i^T A_{-i}^{-1}z_i)^2(\lambda_i z_i^T A_{-i}^{-1}z_i)^{-2}} \\
    &= \sum_{i=1}^p (\frac{\Tilde{\lambda_i}}{\lambda_i}) \frac{1}{||z_i||^2(1 + (\lambda_i z_i^T A_{-i}^{-1}z_i)^{-1})^2}.
\end{align*}

Then, let $k \in (0, n)$ and $\mathscr{L}_i$ be the span of the bottom $n-k$ eigenvectors of $A_{-i}$ and $\Pi_{\mathscr{L}_i}$ be the projection onto the orthogonal complement of $\mathscr{L}_i$. 
We have that
\begin{align*}
    z_i^T A_{-i}^{-1}z_i &\geq (\Pi_{\mathscr{L}_i} z_i)^T A_{-i}^{-1} (\Pi_{\mathscr{L}_i} z_i) \\
    &\geq \snorm{\Pi_{\mathscr{L}_i} z_i}^2 \mu_{k+1}(A_{-i})^{-1}.
\end{align*}

From Lemma \ref{lemma:bartlett_lemma10}, there is a constant $c_1 \geq 1$, such that for any $k \geq 0$, with probability $1 - 2e^{-n/c_1}$, $\mu_{k+1}(A_{-i}) \leq c_1(\sum_{j > k} \lambda_j + \lambda_{k+1}n)$.
Additionally, by Corollary \ref{cor:z_projz_corollary}, let $t < n/c_3$ and $k \in (0, n/c)$ for $c > c_3$ and $c_3$ sufficiently large.
Then, with probability at least $1 - 3e^{-t}$
\begin{align*}
    \snorm{\Pi_{\mathscr{L}_i} z_i}^2 \geq n/c_4
\end{align*}
where $c_4$ only depends on $c_3, \sigma_x$ and the universal constant given in Corollary \ref{cor:bartlett_cor24}.

Then, for $c \geq \max\{c_1, c_3\}$, with probability $1 - 5e^{-n/c}$,
\begin{align*}
    z_i^T A_{-i}^{-1}z_i &\geq \snorm{\Pi_{\mathscr{L}_i} z_i}^2 \mu_{k+1}(A_{-i})^{-1} \\
    &\geq \frac{n}{c_4(\sum_{j > k} \lambda_j + \lambda_{k+1}n)}.
\end{align*}

By again applying Corollary \ref{cor:z_projz_corollary} on the same event we have
\begin{align*}
    \snorm{z_i}^2 &\leq c_5 n.
\end{align*}
where $c_5$ has the same dependencies as $c_4$.

Altogether, we have for each $i$, with probability $1 - 5e^{-n/c}$,
\begin{align*}
    \frac{1}{||z_i||^2(1 + (\lambda_i z_i^T A_{-i}^{-1}z_i)^{-1})^2} &\geq \frac{1}{c_5 n (1 + (\frac{c_4(\sum_{j > k} \lambda_j + \lambda_{k+1}n)}{\lambda_i n}))^2} \\
    &= \frac{1}{c_5 n(1 + \frac{c_4\lambda_{k+1}}{\lambda_i}(\frac{\sum_{j > k} \lambda_j}{\lambda_{k+1}n} + 1))^2} \\
    &= \frac{1}{c_5 c_4^2 n(1/c_4 + \frac{\lambda_{k+1}}{\lambda_i}(\rho_k + 1))^2} \\
    &\geq \frac{1}{c_6 n(1 + \frac{\lambda_{k+1}}{\lambda_i}(\rho_k + 1))^2}
\end{align*}
where $c_6 = c_5 c_4^2$ and $c > \max\{c_1, c_3\}$ as defined above.

Finally, we invoke Lemma \ref{lem:bartlett_lemma15} and that $1 / (a + b)^2 \geq \min(a^{-2}, b^{-2})/4$ to get that,
with probability $1 - 10e^{-n/c}$,
\begin{align*}
    V \geq \frac{1}{8 c_6 n}\sum_{i=1}^p \frac{\Tilde{\lambda}_i}{\lambda_i} \min (1, \frac{\lambda_i^2}{\lambda_{k+1}^2(\rho_k + 1)^2}).
\end{align*}

For $c_7 \geq \max \{ 8c_6,  c \}$ we have that with probability $1 - 10e^{-n/c_7}$,
\begin{align*}
    V \geq \frac{1}{c_7 n}\sum_{i=1}^p \frac{\Tilde{\lambda}_i}{\lambda_i} \min (1, \frac{\lambda_i^2}{\lambda_{k+1}^2(\rho_k + 1)^2}).
\end{align*} \section{Proof of bias bounds}
\label{apdx:bias_bound_proof}

\biasublb*

\subsection{Upper bound}
\label{apdx:bias_ub_proof}

\begin{proof}
    As defined in Eqn. \ref{eqn:bias_b2}, 
    \begin{align*}
        \bias &= \snorm{\tsource - \mnisource(\xsource \tsource)}_{\covtarget}^2 \\
        &= \snorm{\tsource - X^TA^{-1}X\tsource}_{\covtarget}^2\\
        &= \tsource^T(I - X^TA^{-1}X)\covtarget(I - X^TA^{-1}X)\tsource. \numberthis \label{eqn:bias_expansion}
    \end{align*}

The $i^{\text{th}}$ row of $I_p - X^TA^{-1}X$ is given by $e_i - \sqrt{\lambda_i} z_i^T A^{-1}X$. It follows that 
\begin{align*}
    (\theta^*)^T M \theta^* & = \begin{pmatrix}
        \vdots \\
        \sum_{j=1}^p\theta_j(e_i[j]-\sqrt{\lambda_i\lambda_j}z_i^\top A^{-1}z_j) \\
        \vdots
    \end{pmatrix}^\top \covtarget \begin{pmatrix}
        \vdots \\
        \sum_{j=1}^p\theta_j(e_i[j]-\sqrt{\lambda_i\lambda_j}z_i^\top A^{-1}z_j) \\
        \vdots
    \end{pmatrix} \;\;\textit{$i^{\text{th}}$ row shown} \\
    &= \sum_{i=1}^p\Tilde{\lambda}_i \Big(\sum_{j=1}^p\theta_j(e_i[j]-\sqrt{\lambda_i\lambda_j}z_i^\top A^{-1}z_j)\Big)^2 \\
    &\leq \sum_{i=1}^p\Tilde{\lambda}_i \Big(\sum_{j=1}^p\theta^2_j \Big)\sum_{j=1}^p\Big(e_i[j]-\sqrt{\lambda_i\lambda_j}z_i^\top A^{-1}z_j\Big)^2 \\
    &= \snorm{\theta^*}^2\sum_{i=1}^p\Tilde{\lambda}_i \sum_{j=1}^p\Big(e_i[j]-\sqrt{\lambda_i\lambda_j}z_i^\top A^{-1}z_j\Big)^2.
\end{align*}

Next we look at $i^{\text{th}}$ term in the outer sum.
\begin{align*}
    \Tilde{\lambda}_i \sum_{j=1}^p(e_i[j]-\sqrt{\lambda_i\lambda_j}z_i^\top A^{-1}z_j)^2 &= \Tilde{\lambda}_i (1-\lambda_i z_i^\top A^{-1}z_i)^2 + \Tilde{\lambda}_i \sum_{j\neq i}\lambda_i\lambda_j(z_i^\top A^{-1}z_j)^2 \\
    &= \ltarget_i (1-2\lsource_iz_i^TA^{-1}z_i + \lsource_i^2 (z_i^TA^{-1}z_i)^2 + \sum_{j\neq i}\lambda_i\lambda_j(z_i^\top A^{-1}z_j)^2) \\
    &= \ltarget_i (1-2\lsource_iz_i^TA^{-1}z_i  + \sum_{i=1}^p\lambda_i\lambda_j(z_i^\top A^{-1}z_j)^2) \\
    &= \ltarget_i (1-2\lsource_iz_i^TA^{-1}z_i  + \lambda_i z_i^\top A^{-1}\Big(\sum_{i=1}^p\lambda_jz_jz_j^T\Big)A^{-1}z_i) \\
    &= \ltarget_i \l(1-2\lsource_iz_i^TA^{-1}z_i  + \lambda_i z_i^\top A^{-1}AA^{-1}z_i\r) \\
    &= \ltarget_i \l(1-2\lsource_iz_i^TA^{-1}z_i  + \lambda_i z_i^\top A^{-1}z_i\r) \\
    &= \ltarget_i \l(1-\lsource_iz_i^TA^{-1}z_i\r). 
\end{align*}

Using the Sherman-Morrison formula, we get that
\begin{align*}
    1-\lsource_i z_i^T A^{-1} z_i & = 1 - \lsource_i z_i^T \big(A_{-i} + \lsource_i z_i z_i^T\big)^{-1}z_i\\
    & = 1 - \lsource_i z_i^T \Big(A^{-1}_{-i} - \lsource_i A^{-1}_{-i} z_i(1+\lsource_i z_i^T A^{-1}_{-i} z_i)^{-1} z_i^TA^{-1}_{-i}\Big)z_i \\
    & = 1 - \lsource_i z_i^T A^{-1}_{-i}z_i + \frac{(\lsource_i z_i^TA^{-1}_{-i} z_i)^2}{1+\lsource_i z_i^T A^{-1}_{-i} z_i}  \\
    &= \frac{1}{1 + \lsource_i z_i^TA_{-i}^{-1}z_i}.
\end{align*}

    We now provide an upper bound for the remaining term. Let $\Pi_{\calL_i}$ be the orthogonal projection onto the bottom $n-k$ eigenvectors of $A_{-i}.$ By Lemma \ref{lemma:bartlett_lemma10}, there exist constants $b,c_0\geq 1$ such that if $\rho_k \geq b$, then with probability at least $1-2e^{-n/c_0},$
\begin{align*}
    \mu_{k+1}(A_{-i}) \leq c_0 \lambda_{k+1}\rho_kn,
\end{align*}
so we get
\begin{align*}
    1 + \lsource_i z_i^TA_{-i}^{-1}z_i & \geq 1 + \lsource_i (\Pi_{\calL_i}z_i)^TA_{-i}^{-1}(\Pi_{\calL_i}z_i) \\
    & \geq 1 + \frac{\lsource_i\norm{\Pi_{\calL_i}z_i}^2}{c_0\lsource_{k+1}n\rho_k}.
\end{align*}

By Corollary \ref{cor:z_projz_corollary}, there exist constants $c_1$ and $c_2$ with $c_2>c_1$ and $c_1$ sufficiently large such that for $0<k<n/c_2$, we have with probability at least $1-3e^{-n/c_1}$,
\begin{align*}
    \norm{\Pi_{\calL_i}z_i}^2 \geq n/c_3,
\end{align*}
where $c_3$ depends only on $c_1$ and $\sigma$.

Plugging these in gives us with probability at least $1-5e^{-n/c_4}$,
\begin{align*}
    \ltarget_i \l(1-\lsource_iz_i^TA^{-1}z_i\r) & \leq \frac{\ltarget_i}{\big(1 + \frac{c_5^2\lsource_i}{\lsource_{k+1}\rho_k}\big)} \\
    & = \frac{\ltarget_i}{\lsource_i}\frac{\lsource_i}{\big(1 + \frac{c_5^2\lsource_i}{\lsource_{k+1}\rho_k}\big)},
\end{align*}
where $c_4 = \max(c_0,c_1)$ and $c_5 = \min(c_0, c_3).$

Therefore by union bound over the application of Corollary \ref{cor:z_projz_corollary},
\begin{align*}
    B &\leq \snorm{\theta^*}^2\sum_{i=1}^p\frac{\ltarget_i}{\lsource_i}\frac{\lsource_i}{\big(1 + \frac{c_5^2\lsource_i}{\lsource_{k+1}\rho_k}\big)} \\
    &\leq \frac{1}{c_6}\snorm{\theta^*}^2\sum_{i=1}^p\frac{\ltarget_i}{\lsource_i}\frac{\lsource_i}{\big(1 + \frac{\lsource_i}{\lsource_{k+1}\rho_k}\big)},
\end{align*}
where $c_6 = \min(c_5^2, 1)$. Taking $c = \max(c_6^{-1}, c_4)$ gives us the result.

\end{proof}

\subsection{Lower bound}
\label{apdx:bias_lower}
After isolating the contribution of $\frac{\ltarget_i}{\lsource_i}$, many of the components of this proof are as given in \citet{tsigler2023benignjmlr}.
For completeness, we replicate them here.

\begin{proof}

    Assume that the vector $\tsource$ is randomly distributed according to the PriorSigns($\overline{\theta}_{\src}$) assumption. Using Lemma \ref{lem:kend_quad_form}, the bias term can be rewritten as
\begin{align*}
    B &= \E_{\tsource} [B_\tsource] \\
    & = \E_{\tsource} [ \normtarget{\tsource - \mnisource(\xsource\tsource)}^2] \\
    & = \E_{\tsource} [(\tsource)^T(I_p - \xsource^T(\xsource\xsource^T)^{-1}\xsource)\covtarget(I_p - \xsource^T(\xsource\xsource^T)^{-1}\xsource)\tsource] \\
    & = \E_{\tsource} [(\tsource)^T M\tsource] \\
    & = \E_{\tsource} [\tsource]^T M \E_{\tsource} [\tsource] + \tr(M\mathrm{Cov}(\tsource)) \\
    & = \tr(M\mathrm{Cov}(\tsource)).
\end{align*}
where $M = (I_p - \xsource^T(\xsource\xsource^T)^{-1}\xsource)\covtarget(I_p - \xsource^T(\xsource\xsource^T)^{-1}\xsource).$ The last equality follows from the assumption $\E_{\tsource} [(\tsource)]=0.$ The diagonal elements of Cov($\tsource$) are the component-wise variances of $\tsource$, which are given by $(\tsource)_i^2 = (\overline{\theta}_{\src})_i^2$. The off-diagonal elements are 0 since the components of $\tsource$ are independent. As such, we need only consider the diagonal elements of $M$.

Note that the $i^{th}$ row of $I_p - \xsource^T(\xsource\xsource^T)^{-1}\xsource$ is equal to $e_i - \sqrt{\lsource_i}z_i^T(\xsource\xsource^T)^{-1}\xsource$, where $e_i$ is the $i^{th}$ vector of the standard orthonormal basis. It follows that the $i^{th}$ diagonal element of $M$ is given by
\begin{align*}
    M_{ii} &= \sum_{j=1}^p \ltarget_j (e_i[j] - \sqrt{\lsource_i\lsource_j}z_i^T A^{-1}z_j)^2 \\
    &= \ltarget_i(1 - \lsource_i z_i^T A^{-1} z_i)^2 + \sum_{j \neq i} \ltarget_j \lsource_i \lsource_j (z_i^T A^{-1} z_j)^2.
\end{align*}
Hence, we can express the bias term as
\begin{align*}
    B &= \sum_{i=1}^p (\overline{\theta}_{\src})_i^2\big[\ltarget_i(1 - \lsource_i z_i^T A^{-1} z_i)^2 + \sum_{j \neq i} \ltarget_j \lsource_i \lsource_j (z_i^T A^{-1} z_j)^2\big] \\
        & \geq \sum_{i=1}^p\frac{\ltarget_i}{\lsource_i} \lsource_i(\overline{\theta}_{\src})_i^2(1 - \lsource_i z_i^T A^{-1} z_i)^2.
    \end{align*}
We are able to eliminate the second term because it is non-negative. 
Substituting $A = A_{-i} + \lsource_i z_i z_i^T$ and using the Sherman-Morrison identity, we have that $1-\lsource_i z_i^T A^{-1}z_i = \frac{1}{1 + \lsource_i z_i^T A_{-i}^{-1}z_i}$ (see proof of bias upper bound in Appendix \ref{apdx:bias_ub_proof}).
Then,
\begin{align*}
    B &\geq  \sum_{i=1}^p\frac{\ltarget_i}{\lsource_i} \frac{\lsource_i (\overline{\theta}_{\src})_i^2}{(1 + \lsource_iz_i^T A_{-i}^{-1}z_i)^2}
\end{align*}

Let's bound each term in that sum from below with high probability. By Corollary \ref{lem:our_stab_lower_eig}, there exist constants $b,c_0\geq 1$ such that for any $i\geq 0$ with probability at least $1-2e^{-n/c_0}$, if $\rho_k \geq b$, then
\begin{align*}
    \mu_n(A_{-i}) \geq \frac{1}{c_0} n\lsource_{k+1}\rho_k.
\end{align*}

Next,
\begin{align*}
\frac{\lsource_i}{(1 + \lsource_i z_{i}^T A_{-i}^{-1} z_{i})^2} \geq \frac{\lsource_i}{(1 + \lsource_i \mu_n(A_{-i})^{-1} \norm{z_i}^2)^2}.
\end{align*}
By Corollary \ref{cor:z_projz_corollary}, for constants $c_1, c_2$ such that $k < n/c_2$ with $c_2 > c_1$ for sufficiently large $c_1$ with probability at least $ 1 - 3e^{-n/c_1}$ we have $\norm{z_i}^2 \leq c_3 n$, where $c_3$ depends only on $c_1$ and $\sigma.$

We obtain that w.p. at least $1 - 5e^{-n/c_4}$,
\begin{align*}
\frac{\lsource_i \bar{\theta}_i^2}{(1 + \lsource_i z_{i}^T A_{-i}^{-1} z_{i})^2} \geq \frac{\lsource_i \bar{\theta}_i^2}{\left(1 + \frac{c_4^2\lsource_i}{\lambda_{k+1}\rho_k}\right)^2},
\end{align*}
where $c_4=\max(c_0, c_1, c_3).$ All the terms are non-negative so Lemma \ref{lem:bartlett_lemma15} provides a lower bound on their sum.
With probability at least $1-10e^{-n/c_4},$
\begin{align*}
    B &\geq \frac{1}{2}\sum_{i=1}^p \frac{\ltarget_i}{\lsource_i}\frac{\lsource_i\Bar{\theta}_i^2}{(1 + \frac{c_4^2\lsource_i}{\lambda_{k+1}\rho_k})^2} \\
    &\geq \frac{1}{c_5}\sum_{i=1}^p \frac{\ltarget_i}{\lsource_i}\frac{\lsource_i\Bar{\theta}_i^2}{(1 + \frac{\lsource_i}{\lambda_{k+1}\rho_k})^2},
\end{align*}
where $c_5 = 2\max(c_4^2, 1).$

Finally, we notice that on $i > k$ we have $\rho_k \geq b > 1$ and $\lambda_i \leq \lambda_{k+1}$ giving us,
\begin{align*}
    \sum_{i>k} \frac{\ltarget_i}{\lsource_i} \frac{\lsource_i \Bar{\theta}_i^2}{(1 + \frac{\lsource_i}{\lsource_{k+1}\rho_k})^2} &\geq \sum_{i>k} \frac{\ltarget_i}{\lsource_i} \frac{\lsource_i \Bar{\theta}_i^2}{(1 + \frac{\lsource_i}{\lsource{k+1}})^2} \\
    &\geq \sum_{i>k} \frac{\ltarget_i}{\lsource_i} \frac{\lsource_i \Bar{\theta}_i^2}{4} \\
    &= \frac{1}{4}\sum_{i > k} \ltarget_i \Bar{\theta}_i^2.
\end{align*}

Letting $c = 4\max(c_4, c_5)$ gives us the result.

\end{proof} \section{Proof of tightness of bounds}
\label{apdx:tightness_proofs}

\variancebiastight*

\begin{proof}
    We split the proof into the variance proof in Appendix \ref{apdx:tightness_variance} and the bias proof in Appendix \ref{apdx:tightness_bias}.
\end{proof}

\subsection{Variance Proof}
\label{apdx:tightness_variance}

\begin{proof}
Recall that
\begin{align*}
    \underline{V} &= \frac{1}{8 c_6 n} \sum_{i=1}^p \frac{\Tilde{\lambda}_i}{\lambda_i} \min \l( 1, \frac{\lambda_i^2}{\lambda_{k+1}^2 (\rho_k + 1)^2} \r) \\
    \overline{V} &= c\l(\sum_{i=1}^k \frac{\Tilde{\lambda}_i}{\lambda_i} \frac{1}{n} + \sum_{i > k} \frac{\Tilde{\lambda}_i}{\lambda_i} \l( \frac{\lambda_i^2}{n \lambda_{k+1}^2 \rho_k^2} \r)\r).
\end{align*}

Since $k$ is the smallest $\ell$ such that $\rho_\ell \geq b$, it is clear by definition that $\rho_{k-1} < b$.
Then we observe that 
\begin{align*}
    \rho_{k-1} = \frac{1}{n \lambda_k}\sum_{j > k-1} \lambda_j = \frac{\lambda_k + \sum_{j > k} \lambda_j}{n \lambda_{k}} = \frac{\lambda_k + n\lambda_{k+1}\rho_k}{n\lambda_k} < b \\
    \therefore \lambda_{k} + n\lambda_{k+1}\rho_k < nb\lambda_k \Rightarrow \lambda_k > \frac{\lambda_k + n\lambda_{k+1}\rho_k}{nb} > \frac{n\lambda_{k+1}\rho_k}{nb} = \frac{\lambda_{k+1}\rho_k}{b}.
\end{align*}

On $i \leq k$,
\begin{align*}
    \underline{V} : \overline{V} &= \frac{1}{8c_6 n}\sum_{i=1}^k \frac{\Tilde{\lambda_i}}{\lambda_i} \min\Bigg(1, \frac{\lambda_i^2}{\lambda_{k+1}^2(\rho_{k}+1)^2}\Bigg) : \frac{\Tilde{\lambda_i}}{\lambda_i}\frac{c}{n} \\
    &\geq \frac{1}{8c_6 c}\sum_{i=1}^k \min\Bigg(1, \frac{\lambda_i^2}{\lambda_{k+1}^2(\rho_{k}+1)^2}\Bigg) : 1.
\end{align*}
If the min is $1$ then we are okay otherwise, using the identity above and that fact that $\lambda_i \geq \lambda_k$, we have that
\begin{align*}
    \frac{\lambda_i^2}{\lambda_{k+1}^2(\rho_k + 1)^2} &> \frac{(\lambda_{k+1}\rho_k)^2}{b^2\lambda_{k+1}^2(\rho_k+1)^2} \\
    &= \frac{\rho_k^2}{b^2(\rho_k+1)^2}.
\end{align*}

Examining the $\rho_k$ terms:
\begin{align*}
    \frac{\rho_k^2}{(\rho_k + 1)^2} &= \frac{1}{\rho_k^{-2}(\rho_k + 1)^2} \\
    &= \frac{1}{(1 + \rho_k^{-1})^2}.
\end{align*}
As $\rho_k \geq b$ we have that $\rho_k^{-1} \leq b \Rightarrow 1 + \rho_k^{-1} \leq 1 + b \Rightarrow (1 + \rho_k^{-1})^{-2} \geq (1 + b)^{-2}$.

Putting it together we get that
\begin{align*}
    \frac{1}{8c_6 c} \sum_{i=1}^k \frac{\lambda_i^2}{\lambda_{k+1}^2(\rho_k + 1)^2} &\geq \frac{1}{8c_6 c} \sum_{i=1}^k \frac{1}{b^2(1 + b)^2} \\
    &= \frac{k}{8 c_6 c \cdot b^2(1 + b)^2} \\
    &\geq \frac{1}{8 c_6 c \cdot b^2(1 + b)^2}.
\end{align*}

On $i > k$, it is clear that the $\min$ is always given by the second term, as $\lambda_i \leq \lambda_{k+1}$, so we get
\begin{align*}
    \underline{V} : \overline{V} &= \frac{1}{8 c_6 n}\sum_{i> k} \frac{\Tilde{\lambda_i}}{\lambda_i} \min\Bigg(1, \frac{\lambda_i^2}{\lambda_{k+1}^2(\rho_{k}+1)^2}\Bigg) : c\frac{\Tilde{\lambda_i}}{\lambda_i} \frac{\lambda_i^2}{n\lambda_{k+1}^2\rho_k^2} \\
    &= \frac{1}{8c_6 c} \sum_{i > k} \frac{\rho_k^2}{(\rho_k + 1)^2} \\
    &\geq \frac{1}{8c_6 c} \sum_{i > k} \frac{1}{(1 + b)^2} = \frac{1}{8c_6 c} \frac{p-k}{(1 + b)^2} > \frac{1}{8 c_6 c (1 + b)^2}.
\end{align*}
Finally we note that for $b \geq 1$ it is clear that $\min(b^{-2}(1 + b)^{-2}, (1 + b)^{-2}) = b^{-2}(1 + b)^{-2}$.
Therefore, 
\begin{align*}
    \underline{V} : \overline{V} \geq \frac{1}{8 c_6 c} b^{-2} (1 + b)^{-2}.
\end{align*}

By setting $c$ in the upper bound such that $c > 8c_6$, we get
\begin{align*}
    \underline{V} : \overline{V} \geq \frac{1}{c^2} b^{-2} (1 + b)^{-2}.
\end{align*}

\end{proof}

\subsection{Bias proof}
\label{apdx:tightness_bias}
\begin{proof}

We will bound the ratio of the lower and upper bounds by bounding the ratios of the corresponding terms in each sum. Observe that for all $i$, the ratio of the terms is equal to
\begin{align*}
 \frac{(\theta^*_i)^2}{\snorm{\theta^*}^2}\cdot\frac{1}{\left(1 + \frac{\lsource_i}{\lambda_{k+1}\rho_k}\right)}.  
\end{align*}
On $i\leq k,$
\begin{align*}
    & \frac{(\theta^*_i)^2}{\snorm{\theta^*}^2}\cdot\frac{1}{\left(1 + \frac{\lsource_i}{\lambda_{k+1}\rho_k}\right)} \\
    &\geq \min_i \frac{(\theta^*_i)^2}{\snorm{\theta^*}^2}\cdot\frac{1}{\left(1 + \frac{\lsource_1}{\lsource_{k+1}}b^{-1}\right)}.
\end{align*}
On $i>k$, we have $\lsource_i/\lsource_{k+1} \leq 1$, so

\begin{align*}
    & \frac{(\theta^*_i)^2}{\snorm{\theta^*}^2}\cdot\frac{1}{\left(1 + \frac{\lsource_i}{\lambda_{k+1}\rho_k}\right)} \\
    &\geq \min_i \frac{(\theta^*_i)^2}{\snorm{\theta^*}^2}\cdot\frac{1}{\left(1 + b^{-1}\right)}.
\end{align*}
Unfortunately, the looseness in the top $k$ components coming from the gap $\lsource_1 / \lsource_{k+1}$ dominates the tighter ratios in the bottom $p-k$ components which only contain a model-dependent gap, $\min_i \theta_i^2 / \snorm{\theta}^2$.
Future work would seek to resolve this and provide tight upper and lower bounds for the bias terms.

\end{proof} \section{Proof of beneficial and malignant shifts}
\label{apdx:beneficial_malignant_shifts}

\subsection{Trace conditions for simple shifts}
\label{apdx:trace_conditions_simple_shifts}
Let $\covsource$ be any source covariance and define $\covtarget$ as $\ltarget_i = \alpha \lsource_i$ for $i \leq k$ and $\ltarget_i = \beta \lsource_i$ for $i > k$ with $\alpha, \beta \geq 0$.

Then $\tr(\covsource) = \sum_{i=1}^k \lsource_i + \sum_{i>k} \lsource_i$ and $\tr(\covtarget) = \alpha(\sum_{i=1}^k \ltarget_i) + \beta(\sum_{i > k} \ltarget_i)$.

For $\alpha > 1, \beta < 1$, if
\begin{align*}
    \frac{\sum_{i > k} \lsource_i}{\sum_{i=1}^k \lsource_i} < \frac{\alpha - 1}{1 - \beta}
\end{align*}
then we have that $\tr(\covsource) < \tr(\covtarget)$ and if the inequality is flipped then we obtain $\tr(\covsource) > \tr(\covtarget)$.

For $\alpha < 1, \beta > 1$, if
\begin{align*}
    \frac{\sum_{i =1}^k \lsource_i}{\sum_{i>k} \lsource_i} < \frac{\beta - 1}{1 - \alpha}
\end{align*}
then we have that $\tr(\covsource) < \tr(\covtarget)$ and if the inequality is flipped then we obtain $\tr(\covsource) > \tr(\covtarget)$.

\subsection{Proof of beneficial and malignant shifts for simple shifts}
\label{apdx:proof_beneficial_malignant_simple_shifts}

We restate the theorem for ease.
\beneficialmalignantshifts*

\begin{proof}
From Theorem \ref{thm:main_results_variance_bounds} and Theorem \ref{thm:tightness}, we have that for a universal constant $b > 1$ if $\rho_k \geq b$ we get the following upper and lower bounds on the out-of-distribution variance for some constants $c_1$, $c_2$,
\begin{align*}
    V_{ood} &\leq c_1 \l( \frac{1}{n}\sum_{i=1}^k \frac{\ltarget_i}{\lsource_i} + n \frac{\sum_{i > k} \ltarget_i \lsource_i}{(\sum_{i > k} \lsource_i)^2} \r) \\
    V_{ood} &\geq c_2 \l( \frac{1}{n}\sum_{i=1}^k \frac{\ltarget_i}{\lsource_i} + n \frac{\sum_{i > k} \ltarget_i \lsource_i}{(\sum_{i > k} \lsource_i)^2} \r).
\end{align*}

Analogously, the in-distribution variance is upper and lower bounded by,
\begin{align*}
    V_{id} &\leq c_1 \l( \frac{k}{n} + \frac{n}{R_k} \r) \\
    V_{id} &\geq c_2 \l( \frac{k}{n} + \frac{n}{R_k} \r)
\end{align*}
where $R_k = (\sum_{i > k} \lsource_i)^2 / \sum_{i > k} \lsource_i^2$.

Let $\covsource$ be any source covariance model that satisfies benign source conditions.
Define $\covtarget$ by,
\begin{align*}
    \ltarget_i =
    \begin{cases}
        \alpha \lsource_i, \quad i \leq k \\
        \beta \lsource_i, \quad i > k
    \end{cases}
\end{align*}
for $\alpha, \beta \geq 0$.

\paragraph{Beneficial shifts.} We use the upper bound to specify requirements for the beneficial shifts.
\begin{align*}
    V_{ood} &\leq c_1\l( \alpha \frac{k}{n} + \beta \frac{n}{R_k} \r) \\
    &= V_{id} + c_1 \l( \frac{k}{n}(\alpha - 1) + \frac{n}{R_k}(\beta - 1) \r).
\end{align*}
Let $\alpha > 1, \beta < 1$. To obtain a beneficial shift in this setting we need,
\begin{align*}
    \frac{n}{R_k}(1 - \beta) &> \frac{k}{n}(\alpha - 1)\\
    \Rightarrow \frac{n}{R_k} &> \frac{k}{n} \l( \frac{\alpha - 1}{1 - \beta} \r).
\end{align*}

In the case $\alpha < 1, \beta > 1$, to obtain a beneficial shift we need,
\begin{align*}
    \frac{n}{R_k}(\beta - 1) &< \frac{k}{n}(1 - \alpha) \\
    \Rightarrow \frac{n}{R_k} &< \frac{k}{n}\l(\frac{1 - \alpha}{\beta - 1}\r).
\end{align*}

In the case where $\alpha = 1$ then any $\beta < 1$ leads to beneficial shifts. Similarly when $\beta = 1$, any $\alpha < 1$ leads to beneficial shifts.

\paragraph{Malignant shifts.}
We use the lower bound to specify requirements for the malignant shift.
\begin{align*}
    V_{ood} &\geq c_2 \l( \alpha \frac{k}{n} + \beta \frac{n}{R_k} \r) \\
    &= V_{id} + c_2 \l( \frac{k}{n}(\alpha - 1) + \frac{n}{R_k}(\beta - 1) \r).
\end{align*}

Let $\alpha < 1$ and $\beta > 1$.
To obtain a malignant shift in this setting we need,
\begin{align*}
    \frac{n}{R_k} > \frac{k}{n}\l( \frac{1 - \alpha}{\beta - 1}\r).
\end{align*}

In the case of $\alpha > 1, \beta < 1$, to obtain a malignant shift we need,
\begin{align*}
    \frac{n}{R_k} < \frac{k}{n}\l( \frac{\alpha - 1}{1 - \beta}\r).
\end{align*}

In the case where $\alpha = 1$ then any $\beta > 1$ leads to malignant shifts. Similarly when $\beta = 1$, any $\alpha > 1$ leads to malignant shifts.

\paragraph{Mild and severe overparameterization.}
We see that the four cases separate into settings in which we are mildly overparameterized, meaning
\begin{align*}
    \frac{n}{R_k} > \frac{k}{n}\l| \frac{\alpha - 1}{1 - \beta} \r|,
\end{align*}
and settings in which we are severely overparamterized, meaning
\begin{align*}
    \frac{n}{R_k} < \frac{k}{n}\l| \frac{\alpha - 1}{1 - \beta} \r|.
\end{align*}

In each of these regimes of overparameterization, the above proof has delineated whether we achieve beneficial or malignant shifts in all settings of $\alpha, \beta$.
\end{proof}

\subsection{Generalized (necessary) conditions for beneficial and malignant shifts}
\label{apdx:sufficient_arbitrary_shifts}

Let $\covsource$ be any source covariance matrix that satisfies benign source conditions and define $\covtarget$ as
\begin{align*}
    \ltarget_i = 
    \begin{cases}
        \alpha_i \lambda_i \qquad i \leq k,\\
        \beta_i \lambda_i \qquad i > k
    \end{cases}
\end{align*}
with $\alpha_i, \beta_i \geq 0$ for all $i$.

Then the OOD variance upper bound is given by,
\begin{align*}
    V_{ood} &\leq c_1 \l( \frac{1}{n}\sum_{i=1}^k \alpha_i + n \frac{\sum_{i > k} \beta_i \lsource_i^2}{(\sum_{i > k} \lsource_i)^2} \r) \\
    &= V_{id} + c_1 \l( \frac{(\sum_{i=1}^k \alpha_i) - k}{n} + n \frac{\sum_{i > k} \lsource_i^2 (\beta_i - 1)}{(\sum_{i > k} \lsource_i)^2} \r) \\
    &= V_{id} + c_1 \l( \frac{k}{n}\l( \frac{\sum_{i =1}^k \alpha_i}{k} - 1 \r) + \frac{n}{R_k} \l( \frac{\sum_{i > k} \beta_i \lsource_i^2 }{\sum_{i > k} \lsource_i^2}  - 1 \r)  \r),
\end{align*}
and the OOD variance lower bound is given by,
\begin{align*}
    V_{ood} &\geq c_2 \l( \frac{1}{n}\sum_{i=1}^k \alpha_i + n \frac{\sum_{i > k} \beta_i \lsource_i^2}{(\sum_{i > k} \lsource_i)^2} \r) \\
    &= V_{id} + c_2 \l( \frac{(\sum_{i=1}^k \alpha_i) - k}{n} + n \frac{\sum_{i > k} \lsource_i^2(\beta_i - 1)}{(\sum_{i > k} \lsource_i)^2} \r) \\
    &= V_{id} + c_2 \l( \frac{k}{n}\l( \frac{\sum_{i =1}^k \alpha_i}{k} - 1 \r) + \frac{n}{R_k} \l( \frac{\sum_{i > k} \beta_i \lsource_i^2 }{\sum_{i > k} \lsource_i^2}  - 1 \r) \r),
\end{align*}
where $V_{id}$ is the ID variance bound.

Again, we use the upper bounds to prove conditions for beneficial shifts and the lower bounds to prove conditions for malignant shifts.

\paragraph{Beneficial shifts.}
From the upper bound we consider two separate cases for non-trivial beneficial shifts: 
\begin{enumerate}
    \item $\sum_{i=1}^k \alpha_i < k$ and $\sum_{i > k} \beta_i \lsource_i^2 > \sum_{i > k} \lsource_i^2$,
    \item $\sum_{i=1}^k \alpha_i > k$ and $\sum_{i > k} \beta_i \lsource_i^2 < \sum_{i > k} \lsource_i^2$.
\end{enumerate}

We start with the case of $\sum_{i=1}^k \alpha_i < k$ and $\sum_{i > k} \beta_i \lsource_i^2 > \sum_{i > k} \lsource_i^2$.
If this is satisfied, the only way to achieve a beneficial shift is if
\begin{align}
    \label{eqn:general_beneficial_severe}
    \frac{n}{R_k} \l( \frac{\sum_{i > k} \beta_i \lsource_i^2 }{\sum_{i > k} \lsource_i^2}  - 1 \r) < \frac{k}{n}\l( 1 - \frac{\sum_{i =1}^k \alpha_i}{k} \r).
\end{align}
We also have in this setting that,
\begin{align*}
    0 < 1 - \frac{\sum_{i=1}^k \alpha_i}{k} \leq 1.
\end{align*}

In Equation \ref{eqn:general_beneficial_severe} we see a notion of severe overparameterization that leads to beneficial shifts.
For instance as $R_k \to \infty$ we see the left-hand-side (LHS) of Equation \ref{eqn:general_beneficial_severe} $\to 0$.
So as $R_k \to \infty$ we have that finite $n$ always leads to a beneficial shift in this setting.
We note that equivalently if $\beta_i = 1$ for all $i$ then we also have the LHS $\to 0$, just as in the case of severe overparameterization.
We will return to the definitions of mild and severe overparameterization for arbitrary shifts after showing the remaining conditions for beneficial and malignant shifts.

Now consider the case of $\sum_{i=1}^k \alpha_i > k$ and $\sum_{i > k} \beta_i \lsource_i^2 < \sum_{i > k} \lsource_i^2$.
If this is satisfied, the only way to achieve a beneficial shift is if
\begin{align}
    \label{eqn:general_beneficial_mild}
    \frac{n}{R_k} \l( 1 - \frac{\sum_{i > k} \beta_i \lsource_i^2 }{\sum_{i > k} \lsource_i^2} \r) > \frac{k}{n}\l( \frac{\sum_{i =1}^k \alpha_i}{k} - 1 \r).
\end{align}
In this setting it is clear that
\begin{align*}
    0 < 1 - \frac{\sum_{i > k} \beta_i \lsource_i^2 }{\sum_{i > k} \lsource_i^2} \leq 1.
\end{align*}

In Equation \ref{eqn:general_beneficial_mild}, it is clear that we have a notion of mild overparameterization that leads to beneficial shifts.
As above if $\alpha_i = 1$ for all $i$ then we always obtain a beneficial shift in this setting.
Otherwise if $R_k$ does not grow too quickly (as in the case with mild overparameterization) then this is a necessary condition to achieve beneficial shifts when $\sum_{i > k} \beta_i \lsource_i^2 < \sum_{i > k} \lsource_i^2$.

\paragraph{Malignant shifts.}

From the lower bound we once again consider two separate cases for non-trivial malignant shifts:
\begin{enumerate}
    \item $\sum_{i=1}^k \alpha_i > k$ and $\sum_{i > k} \beta_i \lsource_i^2 < \sum_{i > k} \lsource_i^2$,
    \item $\sum_{i=1}^k \alpha_i < k$ and $\sum_{i > k} \beta_i \lsource_i^2 > \sum_{i > k} \lsource_i^2$.
\end{enumerate}

We start with the case of $\sum_{i=1}^k \alpha_i > k$ and $\sum_{i > k} \beta_i \lsource_i^2 < \sum_{i > k} \lsource_i^2$.
If this is satisfied then the only way to achieve a malignant shift is if,
\begin{align}
    \label{eqn:general_malignant_severe}
    \frac{n}{R_k} \l( 1 - \frac{\sum_{i > k} \beta_i \lsource_i^2 }{\sum_{i > k} \lsource_i^2} \r) < \frac{k}{n}\l( \frac{\sum_{i =1}^k \alpha_i}{k} - 1 \r).
\end{align}

In the case of $\sum_{i=1}^k \alpha_i < k$ and $\sum_{i > k} \beta_i \lsource_i^2 > \sum_{i > k} \lsource_i^2$ the only way to achieve a malignant shift is if,
\begin{align}
    \label{eqn:general_malignant_mild}
    \frac{n}{R_k} \l( \frac{\sum_{i > k} \beta_i \lsource_i^2 }{\sum_{i > k} \lsource_i^2} - 1 \r) > \frac{k}{n}\l( 1 - \frac{\sum_{i =1}^k \alpha_i}{k}  \r).
\end{align}

We now are ready to define mild and severe overparameterization for arbitrary multiplicative shifts.
\begin{theorem}(Mild and severe overparameterization for arbitrary multiplicative shifts)
    Let $\covsource$ be any source covariance matrix that satisfies benign source conditions, meaning $\exists \ k$ such that $\rho_k \geq b$ for a universal constant $b > 1$.
    Furthermore, let $\covtarget$ be defined by $\ltarget_i = \alpha_i \lsource_i$ for $i \leq k$ and $\ltarget_i = \beta_i \lsource_i$ for $i > k$.

    We will define
    \begin{align*}
        C := \l| \l(\frac{\sum_{i =1}^k \alpha_i}{k} - 1 \r)  \l(1 - \frac{\sum_{i > k} \beta_i \lsource_i^2 }{\sum_{i > k} \lsource_i^2} \r)^{-1} \r|.
    \end{align*}

    Then we are \textit{mildly overparameterized} if
    \begin{align*}
        \frac{n}{R_k} = \omega \l( C \frac{k}{n} \r)
    \end{align*}
    and we are \textit{severely overparameterized} if
    \begin{align*}
        \frac{n}{R_k} = o \l( C \frac{k}{n} \r).
    \end{align*}
\end{theorem}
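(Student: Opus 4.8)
The plan is to obtain the statement as a one-line reorganization of the computation that precedes it, together with a short sign analysis. First I would introduce the shorthand $D_1 := \frac{1}{k}\sum_{i\le k}\alpha_i - 1$ for the (relative) excess signal energy and $D_2 := \frac{\sum_{i>k}\beta_i\lsource_i^2}{\sum_{i>k}\lsource_i^2} - 1$ for the excess noise energy, so that the OOD bounds derived just above read $V_{ood} = V_{id} + c\l(\frac{k}{n}D_1 + \frac{n}{R_k}D_2\r)$, with $c = c_1$ for the upper bound and $c = c_2$ for the lower bound (these constants coming from Theorem~\ref{thm:main_results_variance_bounds} together with the tightness statement of Theorem~\ref{thm:tightness}). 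Since a shift is beneficial iff $V_{ood} < V_{id}$ and malignant iff $V_{ood} > V_{id}$, the entire classification is controlled by the sign of $\frac{k}{n}D_1 + \frac{n}{R_k}D_2$, and the quantity $C = \l|D_1/D_2\r|$ is exactly the ratio at which the two competing terms balance.

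Next I would split into the aligned and opposing cases. When $D_1,D_2 \ge 0$ (not both zero) the bracket is strictly positive, so the lower bound already forces $V_{ood} > V_{id}$ for every $n$, i.e.\ a malignant shift independent of the degree of overparameterization; symmetrically $D_1,D_2 \le 0$ (not both zero) gives a beneficial shift via the upper bound. Hence the regime distinction is only needed when $D_1$ and $D_2$ have opposite signs. In the case $D_1 > 0 > D_2$ I would write $\frac{k}{n}D_1 + \frac{n}{R_k}D_2 = \frac{k}{n}|D_1| - \frac{n}{R_k}|D_2|$ and observe that this is eventually positive precisely when $\frac{n}{R_k} = o\l(C\,\frac{k}{n}\r)$ (severe overparameterization), giving a malignant shift and matching Equation~\ref{eqn:general_malignant_severe}, and eventually negative precisely when $\frac{n}{R_k} = \omega\l(C\,\frac{k}{n}\r)$ (mild overparameterization), giving a beneficial shift and matching Equation~\ref{eqn:general_beneficial_mild}. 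The case $D_2 > 0 > D_1$ is handled identically with the roles of the two terms swapped and yields the beneficial/malignant pair of Equations~\ref{eqn:general_beneficial_severe} and~\ref{eqn:general_malignant_mild}. Finally, specializing to constant shifts $\alpha_i \equiv \alpha$, $\beta_i \equiv \beta$ gives $D_1 = \alpha - 1$, $D_2 = \beta - 1$, $C = C_{\alpha\beta}$, so Theorem~\ref{thm:benficial_malignant_shifts} appears as the special case.

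I expect the only genuine care to be bookkeeping rather than mathematics. One has to check that the constants $c_1,c_2 \ge 1$ never reverse any sign; they do not, since the bracket enters $V_{id}$ with a positive coefficient, so using the upper bound to certify beneficial shifts and the lower bound to certify malignant ones is consistent. One also has to treat the degenerate sub-cases cleanly: when $D_2 = 0$ the noise-tail contribution disappears and the sign is that of $D_1$ for all $n$, which is exactly the stated convention $C = \infty$ and the resulting ``effectively severely overparameterized'' behavior, while the mirror case $D_1 = 0$ corresponds to $C = 0$; the knife-edge $\frac{k}{n}D_1 + \frac{n}{R_k}D_2 = 0$ is neither beneficial nor malignant and is excluded by the strict $o$/$\omega$ conditions, just as in the finitely-many-directions special case. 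The hard part, to the extent there is one, is simply that the $o$/$\omega$ statements are asymptotic in $n$ with $p$, $k$, and $R_k$ all functions of $n$, so ``eventually'' is the correct quantifier and there is no new probabilistic content beyond what Theorems~\ref{thm:main_results_variance_bounds} and~\ref{thm:tightness} already supply.
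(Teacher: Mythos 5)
Your proposal matches the paper's own route. The paper obtains the statement exactly as you describe: it writes the upper and lower bounds on $V_{ood}$ in the form $V_{id} + c\l(\frac{k}{n}D_1 + \frac{n}{R_k}D_2\r)$ with $D_1 := \frac{1}{k}\sum_{i\le k}\alpha_i - 1$ and $D_2 := \frac{\sum_{i>k}\beta_i\lsource_i^2}{\sum_{i>k}\lsource_i^2} - 1$ (using $R_k = (\sum_{i>k}\lsource_i)^2/\sum_{i>k}\lsource_i^2$ to absorb the tail normalization), treats the aligned-sign cases as unconditionally beneficial or malignant, and reads off the mild/severe threshold as the ratio $C = |D_1/D_2|$ at which the two rate terms balance in the opposite-sign cases. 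Your handling of the degenerate cases ($D_1=0$ or $D_2=0$) and of the asymptotic $o/\omega$ quantifiers is consistent with the paper's conventions, and the specialization $\alpha_i\equiv\alpha$, $\beta_i\equiv\beta$ recovering $C_{\alpha\beta}$ is the same check the paper implicitly relies on.
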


We now state our taxonomy of covariate shifts for arbitrary multiplicative shifts.
\begin{theorem}(Beneficial and Malignant (Arbitrary) Multiplicative Shifts on Variance)
    Let $\covsource$ be any source covariance matrix that satisfies benign source conditions, meaning $\exists \ k$ such that $\rho_k \geq b$ for a universal constant $b > 1$.
    Furthermore, let $\covtarget$ be defined by $\ltarget_i = \alpha_i \lsource_i$ for $i \leq k$ and $\ltarget_i = \beta_i \lsource_i$ for $i > k$.

    \begin{enumerate}
        \item If $\sum_{i=1}^k \alpha_i \leq k$ and $\sum_{i > k} \beta_i \lsource_i^2 < \sum_{i > k} \lsource_i^2$ then we obtain a beneficial shift.
        \item If $\sum_{i=1}^k \alpha_i < k$ and $\sum_{i > k} \beta_i \lsource_i^2 \leq \sum_{i > k} \lsource_i^2$ then we obtain a beneficial shift.
        \item If $\sum_{i=1}^k \alpha_i \geq k$ and $\sum_{i > k} \beta_i \lsource_i^2 > \sum_{i > k} \lsource_i^2$ then we obtain a malignant shift.
        \item If $\sum_{i=1}^k \alpha_i > k$ and $\sum_{i > k} \beta_i \lsource_i^2 \geq \sum_{i > k} \lsource_i^2$ then we obtain a malignant shift.
        \item If we are in the mildly overparameterized regime:
        \begin{itemize}
            \item $\sum_{i=1}^k \alpha_i > k$ and $\sum_{i > k} \beta_i \lsource_i^2 < \sum_{i > k} \lsource_i^2$ leads to beneficial shifts,
            \item $\sum_{i=1}^k \alpha_i < k$ and $\sum_{i > k} \beta_i \lsource_i^2 > \sum_{i > k} \lsource_i^2$ leads to malignant shifts.
        \end{itemize}
        \item If we are in the severely overparameterized regime:
        \begin{itemize}
            \item $\sum_{i=1}^k \alpha_i < k$ and  $\sum_{i > k} \beta_i \lsource_i^2 > \sum_{i > k} \lsource_i^2$ leads to beneficial shifts,
            \item $\sum_{i=1}^k \alpha_i > k$ and $\sum_{i > k} \beta_i \lsource_i^2 < \sum_{i > k} \lsource_i^2$ leads to malignant shifts.
        \end{itemize}
    \end{enumerate}

\end{theorem}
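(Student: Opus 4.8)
The plan is to reduce the statement to a one-dimensional sign analysis of the OOD--ID variance gap, feeding into the explicit two-sided variance bounds already assembled just above in Appendix~\ref{apdx:sufficient_arbitrary_shifts}. Write $\bar\alpha := \tfrac1k\sum_{i\le k}\alpha_i$ and $\bar\beta := \big(\sum_{i>k}\beta_i\lsource_i^2\big)\big/\big(\sum_{i>k}\lsource_i^2\big)$. On the high-probability event of Theorem~\ref{thm:main_results_variance_bounds}, and invoking Theorem~\ref{thm:tightness} so that $\underline V$ and $\overline V$ agree up to a universal constant (hence so do the ID and OOD variances and their leading-order estimates), the bounds of Appendix~\ref{apdx:sufficient_arbitrary_shifts} take the form
\[
   V_{id} + c_2\,\Delta \ \le\ V_{ood}\ \le\ V_{id} + c_1\,\Delta, \qquad \Delta := \tfrac{k}{n}(\bar\alpha-1) + \tfrac{n}{R_k}(\bar\beta-1),
\]
for constants $c_1 \ge c_2 > 0$, where throughout $V_{id}$ stands (as elsewhere in the paper) for its own tight-up-to-constants estimate $\tfrac{k}{n}+\tfrac{n}{R_k}$. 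It then suffices to determine the sign of $\Delta$ under each hypothesis: when $\Delta<0$ the upper bound certifies $V_{ood}<V_{id}$, a beneficial shift; when $\Delta>0$ the lower bound certifies $V_{ood}>V_{id}$, a malignant shift.

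For items~1 and~2 the hypotheses are exactly $\bar\alpha\le1$ and $\bar\beta\le1$ with at least one inequality strict (note $\sum_{i>k}\beta_i\lsource_i^2<\sum_{i>k}\lsource_i^2$ is precisely $\bar\beta<1$), so $\Delta<0$ and the shift is beneficial. Symmetrically, items~3 and~4 give $\bar\alpha\ge1$ and $\bar\beta\ge1$ with one strict, so $\Delta>0$ and the shift is malignant; none of these four cases references the overparameterization regime.

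For items~5 and~6 the two summands of $\Delta$ carry opposite signs, so the conclusion is governed by which of $\tfrac{k}{n}(\bar\alpha-1)$ and $\tfrac{n}{R_k}(\bar\beta-1)$ dominates in magnitude, and this is exactly what the mild/severe dichotomy encodes via $C = \big|(\bar\alpha-1)/(1-\bar\beta)\big|$. In the mildly overparameterized regime $\tfrac{n}{R_k}=\omega\!\big(C\tfrac{k}{n}\big)$; when $\bar\alpha>1$ and $\bar\beta<1$ this says $\tfrac{n}{R_k}(1-\bar\beta)$ eventually dominates $\tfrac{k}{n}(\bar\alpha-1)$, so $\Delta<0$ and the upper bound gives a beneficial shift, while the mirror sub-case $\bar\alpha<1$, $\bar\beta>1$ makes the positive summand of $\Delta$ dominate, so $\Delta>0$ and the lower bound gives a malignant shift. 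The severely overparameterized regime replaces $\omega$ by $o$, which reverses which summand of $\Delta$ wins and hence flips both conclusions, yielding item~6. Since the six hypotheses exhaust the nontrivial sign patterns of $(\bar\alpha-1,\bar\beta-1)$, substituting each into the displayed inequalities completes the argument.

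The step demanding the most care is framing rather than any inequality: because the variance upper and lower bounds differ by a universal multiplicative constant, all comparisons above are really between the tight-up-to-constants estimates $V_{id}\asymp\tfrac kn+\tfrac n{R_k}$ and $V_{ood}\asymp\tfrac kn\bar\alpha+\tfrac n{R_k}\bar\beta$, so ``beneficial/malignant (on variance)'' must be read, consistently with Theorems~\ref{thm:benficial_malignant_shifts} and~\ref{thm:tightness}, at the level of these estimates; this should be stated once at the outset so the reasoning is not circular. Likewise, in items~5 and~6 the relevant strict inequalities are the asymptotic $\omega/o$ conditions, so the conclusions are for sequences of problems with the stated scaling rather than for a single fixed instance. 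Everything else is a routine substitution into the two displayed bounds.
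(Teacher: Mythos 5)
Your proof is correct and takes essentially the same route as the paper's proof in Appendix~\ref{apdx:sufficient_arbitrary_shifts}, differing only in presentation: you package the four sign patterns into one quantity $\Delta$ via the averaged shift ratios $\bar\alpha$, $\bar\beta$, then read off each case, whereas the paper handles the cases one at a time. Your closing caveat, that the paper's conclusion is really a comparison of the tight-up-to-constants estimates $\tfrac kn + \tfrac n{R_k}$ and $\tfrac kn\bar\alpha + \tfrac n{R_k}\bar\beta$ rather than the actual variances $V_{id}$, $V_{ood}$ (because the upper and lower bounds differ by multiplicative constants), is a valid observation that the paper's own writing glosses over and that makes your version of the argument slightly more carefully stated.
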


 \section{Experiment details}
\label{apdx:experiment_details}

\subsection{Synthetic data experiments}
Our synthetic data experiments use source data generated from random Gaussians with covariance structures that are known to exhibit benign overfitting.
These structures include the $(k, \delta, \epsilon)$ spiked covariance models and eigendecay rates given by \citet{bartlett2020benignpnas} such as $\lsource_i = i^{-\alpha}\ln^{-\beta}(i+1)$ for $\alpha = 1, \beta > 1$.
Target data is generated from random Gaussians with covariances that lead to beneficial and malignant shifts based on our theories and modifications of the aforementioned source covariance structures.

All ground truth models are sampled uniformly on the $p$-dimensional hypersphere, as $\tsource \sim \calS^{p-1}$.
Label noise is sampled as $\varepsilon \sim \calN(0, 1)$, unless otherwise specified.
For a data matrix $X \in \R^{n \times p}$, training labels are obtained as $y = X\tsource + \varepsilon$.
Excess risk is computed for unseen testing data from source and target distributions of interest using clean labels.

In Figure \ref{fig:k_eps_beneficial_malignant} we take the source to be the $(k, \delta, \epsilon)$ spiked model with parameters given by $k=70$, $\delta=1$, and $\epsilon=0.005$. The beneficial shift scales the first $k$ eigenvalues by $\alpha=1.125$ and the last $p-k$ eigenvalues by $\beta=0.65$. For the malignant shift we use $\alpha=0.875$ and $\beta=1.35$. The minimum-norm linear interpolator is fit to 500 data points sampled from a centered multivariate Gaussian with unit variance and dimension $p=4900$. The model vector is sampled from a centered Gaussian and scaled to unit norm. The x-axis represents the amount of additive label noise in training. All evaluation is done on clean data. Each point is the average of 40 runs.

In Figure \ref{fig:k_eps_extreme_overparam}, we take the source to be the $(k, \delta, \epsilon)$ spiked model with source parameters as $k = 10, \delta = 1.0, \epsilon = 1e^{-6}$ and target parameters $\Tilde{k} = 10, \Tilde{\delta} = 1.35, \Tilde{\epsilon} = 6.5e^{-7}$. We use $n = 50$ training data points, $10k$ held-out testing data points in each OOD test set, and vary $p$ from 75 to 1000 dimensions.
We solve OLS using the closed-form MNI solution on the source data.
Each experiment is averaged over 100 independent runs.

In Figure \ref{fig:mlp_synth_regr} we train fully-connected neural networks with ReLU activation functions.
Data is sampled as above from the covariance structures given by $\lsource_i = i^{-\alpha}\ln^{-\beta}(i+1)$ with varying $\beta$ to obtain beneficial and malignant shifts.
The network architecture is $3$ hidden layers, with hidden widths $512$ and $2048$.
Networks are trained with stochastic gradient descent with momentum $0.9$ until the training MSE has reached $< 5e^{-6}$.
We start with a learning rate of 0.01 and decay by a stepped cosine schedule for 1,500 epochs.
We take batch size of 64 and train without weight decay.
Each experiment is averaged over 20 independent runs.
We train in PyTorch with a single A100 NVIDIA GPU.
In these experiments we take $n=200$ and compare $p=20$ with $p=2000$.
Label noise is sampled as $\calN(0, \sigma^2)$ and we vary $\sigma^2$ to show the behaviors at varying train label noise.

In Figure \ref{fig:mlp_log_plaw_k10_synthetic}  we train full-connected neural networks with ReLU activation functions.
Source data is sampled from a mean-centered Gaussian with diagonal covariance matrix with eigenvalues $\lambda_i = i^{-1}\ln^{-1.5}(i+1)$. 
Target covariate shifts are implemented in the style of Theorem \ref{thm:benficial_malignant_shifts} where the top $k$ source eigenvalues are multiplied by $\alpha$ and the bottom $p-k$ source eigenvalues are multiplied by $\beta$.
In this experiment, we take $k=10, \alpha=2, \beta=0.1$ and experiment with $n=400$ source data samples for $p=200$ and $p=4,000$.
The network architecture is 3 hidden layers with hidden width $2,048$. 
Our training setup is the same as given above for prior MLP experiments.

\subsection{CIFAR-10 and CIFAR-10C experiments}

In Figures \ref{fig:cf10c_mni} and \ref{fig:cf10c_mni_overparam} we use a binary variant of CIFAR-10 and CIFAR-10C.
For details on the CIFAR-10C dataset, see \citet{HendrycksDietterich2019}.
The binary problem is constructed by selecting only the dog and truck classes.
To stay overparameterized, we subsample $n=500, 1000, 2000$ points in a class-balanced manner.
Images are flattened into $p=3072$ dimensional vectors.
We fit our model using the OLS solution for the MNI against $\{0, 1\}$ class labels.
We test on the same two classes from CIFAR-10 and CIFAR-10C Gaussian blur and Gaussian noise corruptions.
Recall that these two corruptions were selected for their eigenspectra's similarity to beneficial and malignant shifts, respectively.
Label noise is injected by flipping class labels with a given probability.

In Figure \ref{fig:cf10_resnets}, we train ResNet18 models on the entire CIFAR-10 dataset and evaluate on the CIFAR-10C test sets for the Gaussian blur and Gaussian noise corruptions. 
The setting is not high-dimensional because we train on 50000 images with 3072 dimensions. 
However, the ResNet18 architecture has around 11.7 million parameters, so the level of overparameterization is very high. 
The training procedure is similar to that used for our MLP experiments. 
Networks are trained with stochastic gradient descent with a learning rate of 0.1 and stepped cosine decay schedule for 60 epochs.
Each point in the plot is an  average over 30 independent runs.
As before, we train in PyTorch with a single A100 NVIDIA GPU.
Label noise takes the form of random label flips with probabilities 0.1 to 0.9. \section{Additional experiments}

We present a number of additional supporting experiments that show: (1) more cases of the behavior of the MNI and MLPs under covariate shift on synthetic datasets; (2) underparameterized and overparameterized regimes for linear regression under covariate shift for more realistic eigendecay rates outside of $(k, \delta, \epsilon)$ spiked covariance models; (3) cases in which the MNI is overfit in a \textit{tempered} or \textit{catastrophic} manner and evaluated on OOD datasets constructed based on our results in Theorem \ref{thm:benficial_malignant_shifts}, indicating that our insights hold up for the MNI even when benign source conditions are not satisfied; (4) the value of overparameterization for the MNI trained on CIFAR-10 and evaluated on CIFAR-10C; (5) experiments training ResNet-18 models to interpolation on the full CIFAR-10 dataset and evaluated on CIFAR-10C blur and noise corruptions.

\subsection{MNI on Synthetic Data}
\label{apdx:additional_exps_synthetic_mni}

\begin{figure}[H]
    \centering
    \includegraphics[width=0.7\linewidth]{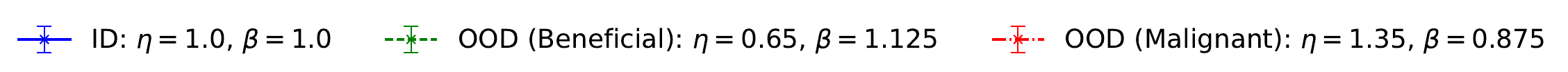}
    \includegraphics[width=0.32\linewidth]{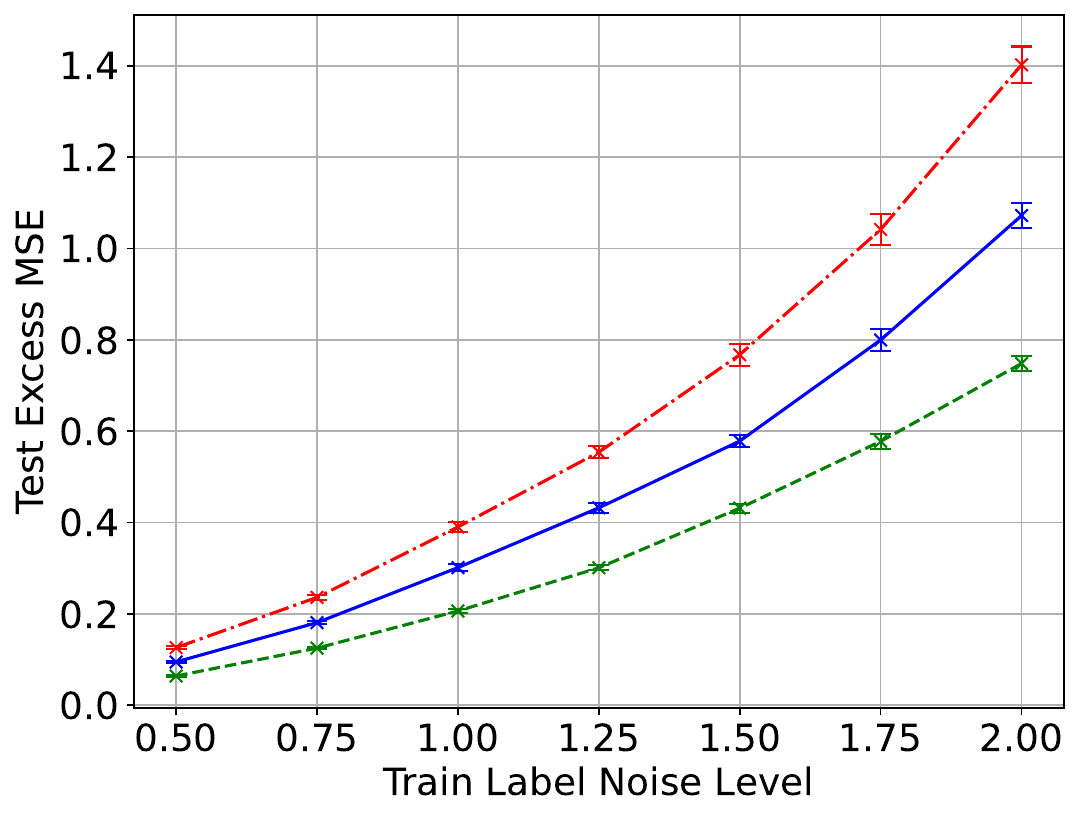}
    \includegraphics[width=0.32\linewidth]{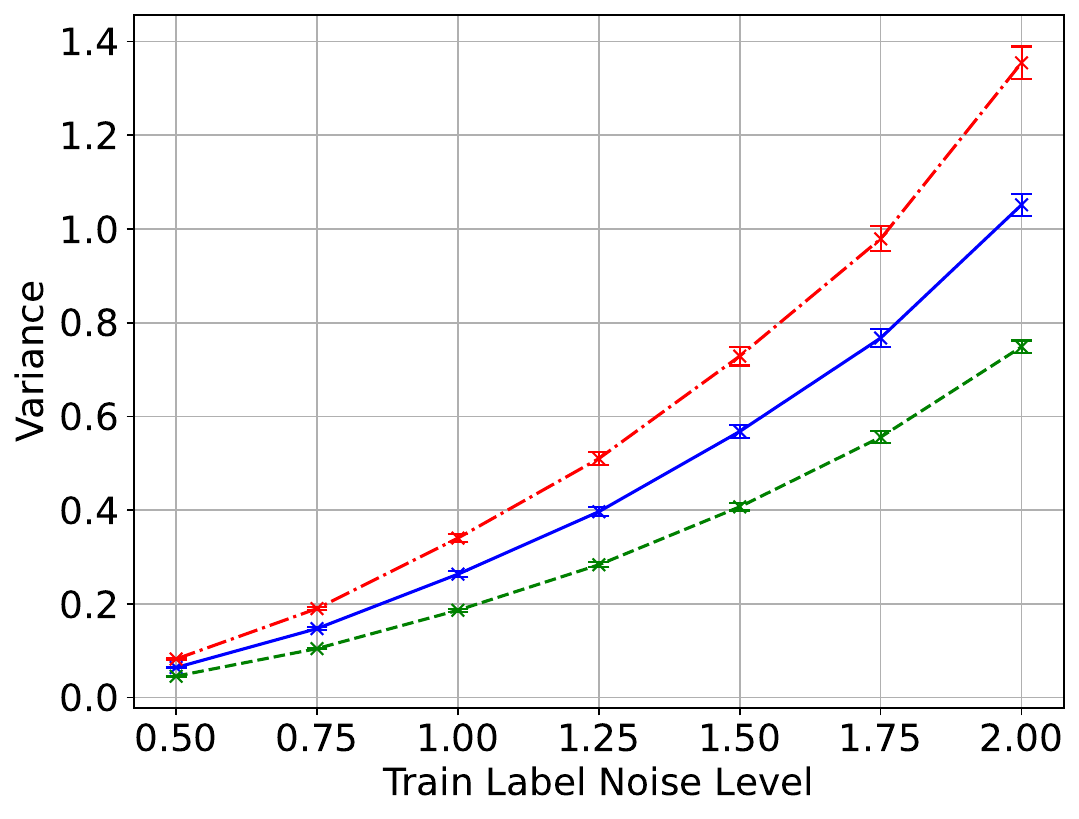}
    \includegraphics[width=0.32\linewidth]{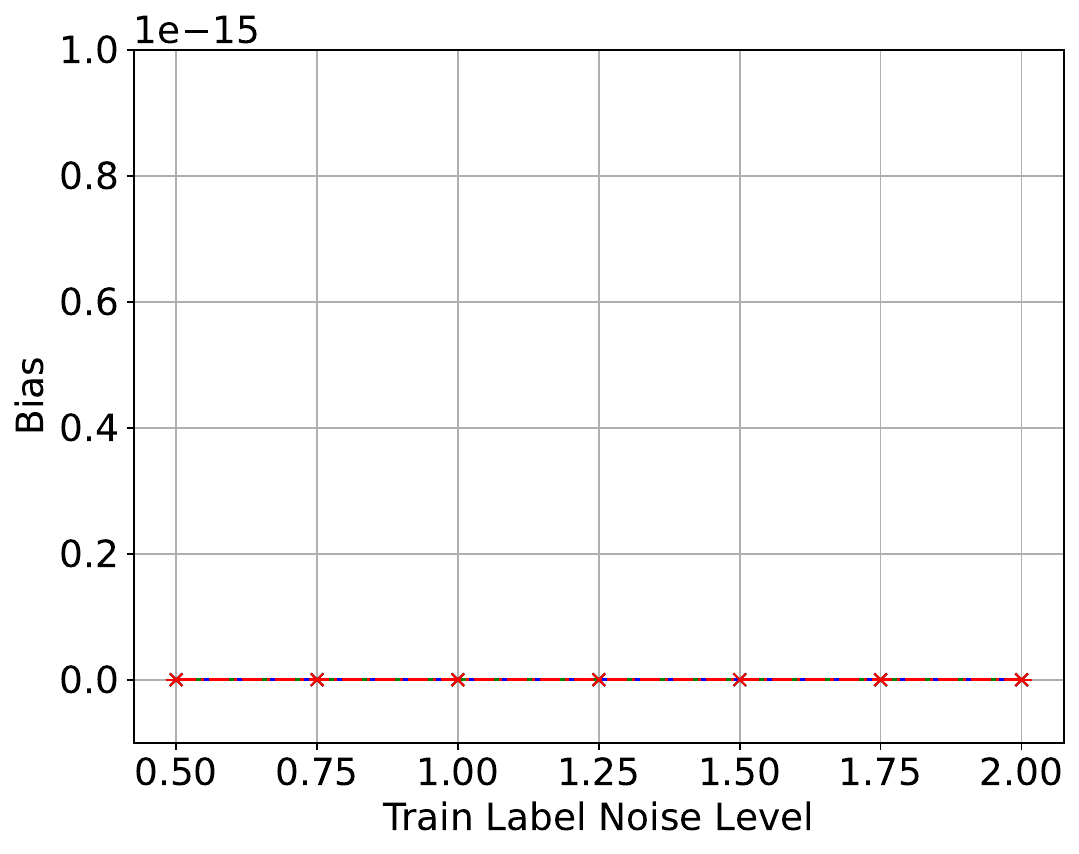}
    \caption{We fit interpolating linear models to random Gaussian data sampled from spiked covariance models with parameters $k, \delta, \epsilon$. In this setting, $k=70, n=500$, $p=4900$, $\delta=1$ and $\epsilon=0.005.$ To illustrate a beneficial shift, we scale the first $k$ eigenvalues by $\alpha=1.125$ and the last $p-k$ eigenvalues by $\beta=0.65$. Similarly, for the malignant shift we use $\alpha=0.875$ and $\beta=1.35$. All experiments are averaged over 25 independent runs with standard error bars displayed. Note that the bias is consistently below $10^{-16}$.}
    \label{fig:k_eps_beneficial_malignant}
\end{figure}

In Figure \ref{fig:k_eps_beneficial_malignant}, we experiment with interpolating linear models where $\covsource, \covtarget$ are given by $(k, \delta, \epsilon)$-spike covariances with $k=70$, $n=500$, and $p=4900$.
We design problem parameters to show settings in which $\tr(\covtarget) > \tr(\covsource)$ and we get a beneficial shift, and $\tr(\covtarget) < \tr(\covsource)$ and we get a malignant shift.
To do this, the source covariance matrix is constructed using $\delta=1$ and $\epsilon=0.005.$ To illustrate a beneficial shift, we scale the first $k$ eigenvalues by $\alpha=1.125$ and the last $p-k$ eigenvalues by $\beta=0.65$. 
Similarly, for the malignant shift we use $\alpha=0.875$ and $\beta=1.35$. 
The resulting plots are significant because they highlight the distinct effects that the first $k$ and last $p-k$ components have on the excess risk.

As illustrated by our main theorems, increasing the energy of an eigenvalue has a negative impact on the risk. 
Nonetheless, these plots show that where the increase happens plays an important role on how the shift affects generalization. 
We are able to improve performance by decreasing the energy on the tail and increasing the energy on the head in such a way that the total energy is increased. 
In short, this setting is a direct connection to our theory and shows clearly that our constructions for beneficial and malignant shifts, when mildly overparameterized, hold up in low and high train label noise regimes, with higher noise exacerbating the effects of the shifts. In addition, Figure \ref{fig:k_eps_beneficial_malignant} demonstrates that the variance generally contributes much more significantly to the overall risk.

\begin{figure}[H]
\centering
\subfigure{\includegraphics[width=.35\textwidth]{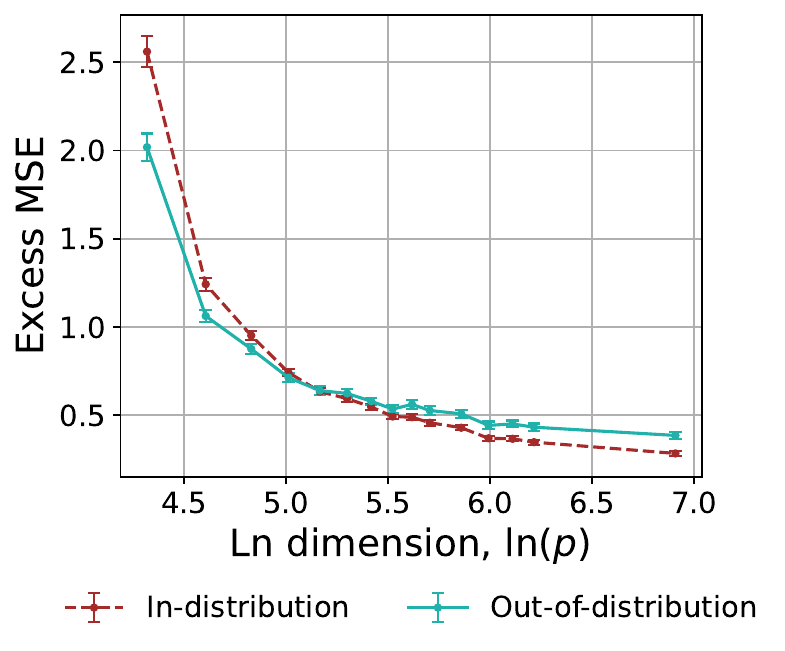}}

\caption{We experiment with the $(k, \delta, \epsilon)$ spiked covariance models and examine conditions for beneficial and malignant shifts as given in Theorem \ref{thm:benficial_malignant_shifts}. We take $n=50, k=10, \delta=1.0, \epsilon=1e^{-6}, \Tilde{\delta}=1.5, \Tilde{\epsilon} = 5e^{-7}$, and vary $p$. In all cases, $\tr(\covtarget) > \tr(\covsource)$, showing that beneficial shifts of this form can occur. As we increase $p$ while keeping other problem parameters fixed we observe the transition from mild to severe overparameterization and see the cross-over point between the shift going from beneficial to malignant. For both ID and OOD excess risk, we observe that excess risk is a decreasing function of input dimension. Curves are averaged over 100 independent runs.}
\label{fig:k_eps_extreme_overparam}
\end{figure}

Figure \ref{fig:k_eps_extreme_overparam} shows another example of the transition from mild overparameterization to severe overparameterization in the case of $(k, \delta, \epsilon)$ spiked covariance models. In this example we take $k = 10, n = 50, \delta=1.0, \epsilon = 1e^{-6}$.
Using our shifts defined in Theorem \ref{thm:benficial_malignant_shifts} we set $\alpha = 1.5$ and $\beta = 0.5$.
We plot excess MSE on both ID and OOD test sets vs. the input data dimension, while holding all other problem parameters fixed and clearly observe the transition from beneficial to malignant shifts in keeping with our theorem.

Next, we experimentally show that while our theory is built for benign source covariance structures it holds for non-benign covariances.
In particular, we examine eigendecay rates that are known to lead to \textit{tempered} overfitting and \textit{catastrophic} overfitting \citep{Mallinar2022}. \citet{bartlett2020benignpnas} identify the covariance structure given by $\lambda_i = i^{-1}\textrm{ln}^{-2}(i+1)$ as sufficient for benign overfitting.
The rate of $i^{-\alpha}$ for $\alpha > 1$ is akin to a ridgeless Laplace kernel and corresponds to tempered overfitting. Finally, the rate of $i^{-\ln(i)}$ is akin to a ridgeless Gaussian kernel and corresponds to catastrophic overfitting. This relative ordering is determined by how high-dimensional the tail eigenvalues are, in decreasing order.

\begin{figure}[H]
\centering
\includegraphics[width=0.7\linewidth]{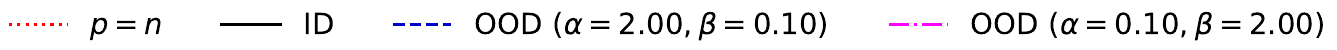}
\subfigure[Ridgeless, $\lambda_i = i^{-1}\ln^{-2}(i+1)$]{\includegraphics[width=.32\textwidth]{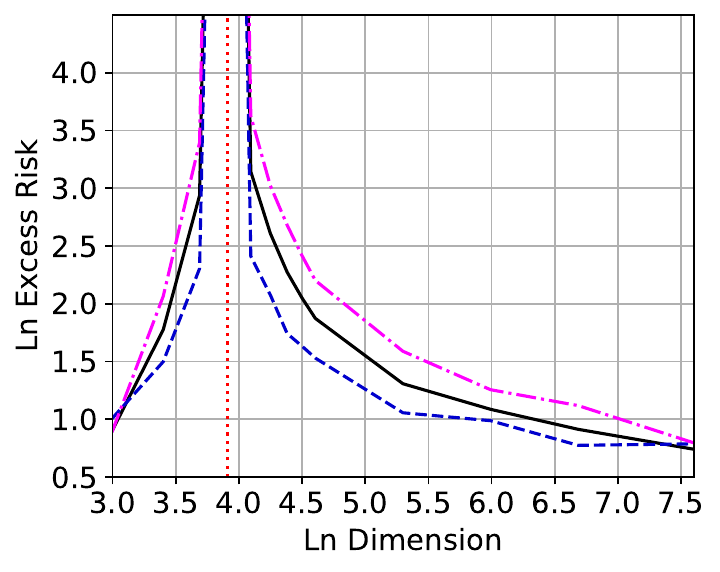}}\hfill
\subfigure[Ridgeless, $\lambda_i = i^{-2}$]{\includegraphics[width=.32\textwidth]{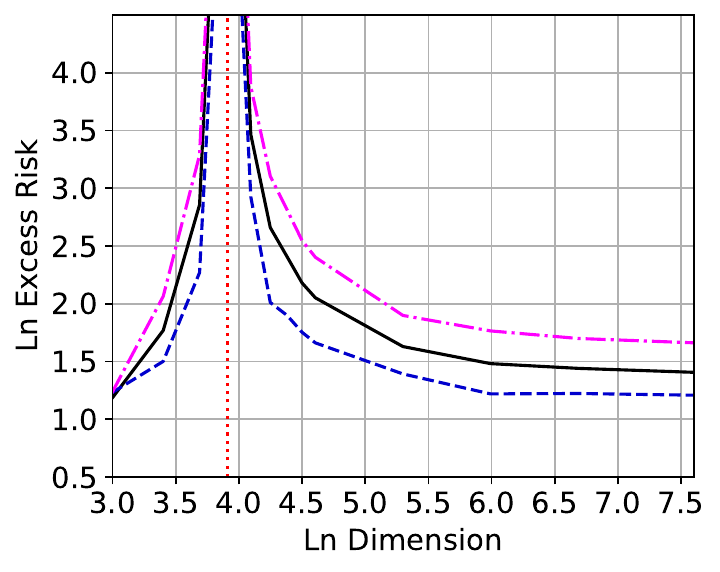}}\hfill
\subfigure[Ridgeless, $\lambda_i = i^{-\ln(i)}$]{\includegraphics[width=.32\textwidth]{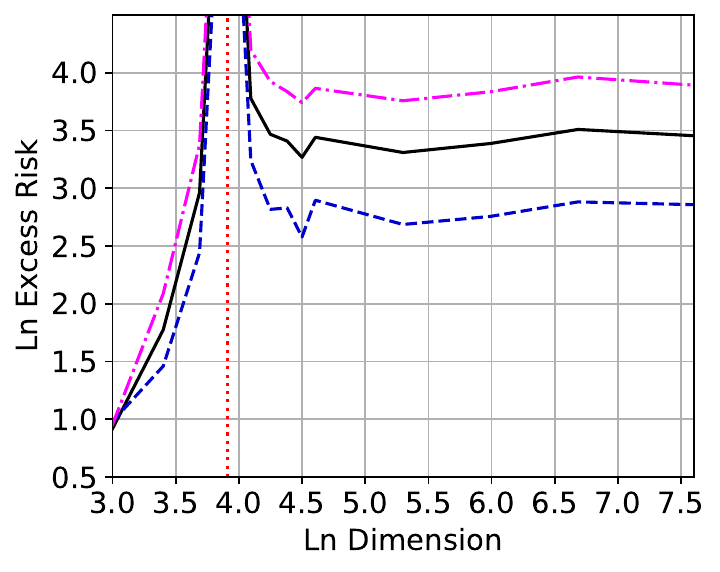}} 

\caption{Comparing covariate shift in underparameterized vs. overparameterized linear regression for three different eigendecay rates. In the $p > n$ setting: (a) leads to benign overfitting, (b) leads to tempered overfitting, and (c) leads to catastrophic overfitting. We implement simple multiplicative shifts with $\alpha, \beta$ as defined in Section \ref{sec:taxonomy_shifts} where we take $n=50, k=10$. Ground truth models are sampled uniformly from $\calS^{p-1}$ and training label noise is sampled from $\calN(0, 2)$. Every curve is averaged over 50 independent runs.}
\label{fig:under_and_over}
\end{figure}
It is clear that even though Theorem \ref{thm:benficial_malignant_shifts} is for the case in which $\covsource$ satisfies benign source conditions, the style of beneficial and malignant shift we identify holds for the MNI even when overfit in a non-benign manner.
That is, when $\covsource$ has eigendecay rates that are \textit{tempered} or \textit{catastrophic} we can still obtain non-trivial beneficial and malignant shifts by changing the energy on the signal and noise components in a heterogeneous way.

We also notice in Figure \ref{fig:under_and_over} that even when varying the dimension up to $p=2000$ at $n=50, k=10$ we do not quite observe the cross-over from beneficial to malignant shifts in the overparameterized regime.
However, we observe that in Figure \ref{fig:under_and_over}(a) that the two OOD curves begin to cross-over.
Given compute budget, we run a variant of Figure \ref{fig:under_and_over} where we extend up to $p=5000$ and take smaller $n$, e.g. $n=20, 30, 40$, in order to closer examine the different regimes of overfitting.
In addition, we experimentally show results for $p=5, 10$ which we liken to the classical linear regression regime in which $k = p < n$, meaning all of the signal is captured in the $p$ components.
In this setting, $\alpha$ shifts are all that influence the distribution shift behavior.
We show these behaviors in Figure \ref{fig:under_and_over_crossover_vary_n}.

\begin{figure}[H]
\centering
\subfigure[$n=20$]{\includegraphics[width=.32\textwidth]{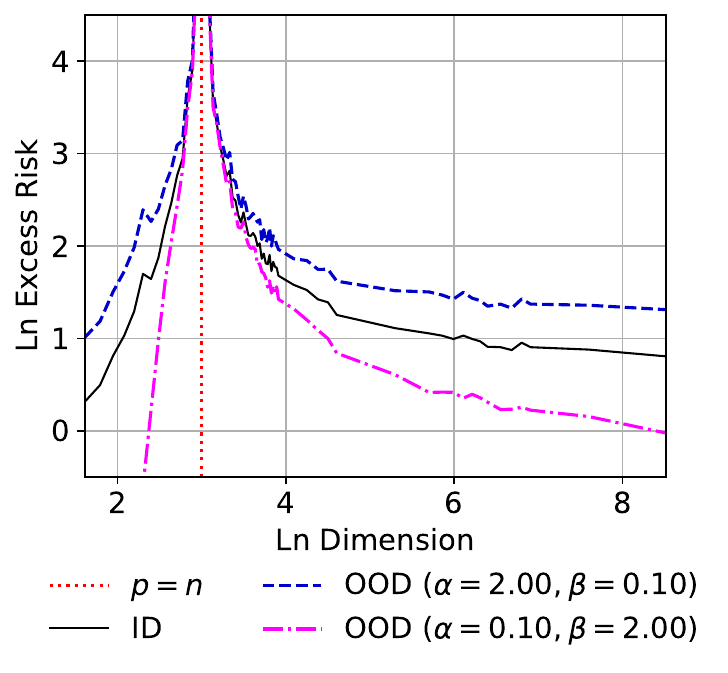}}\hfill
\subfigure[$n=30$]{\includegraphics[width=.32\textwidth]{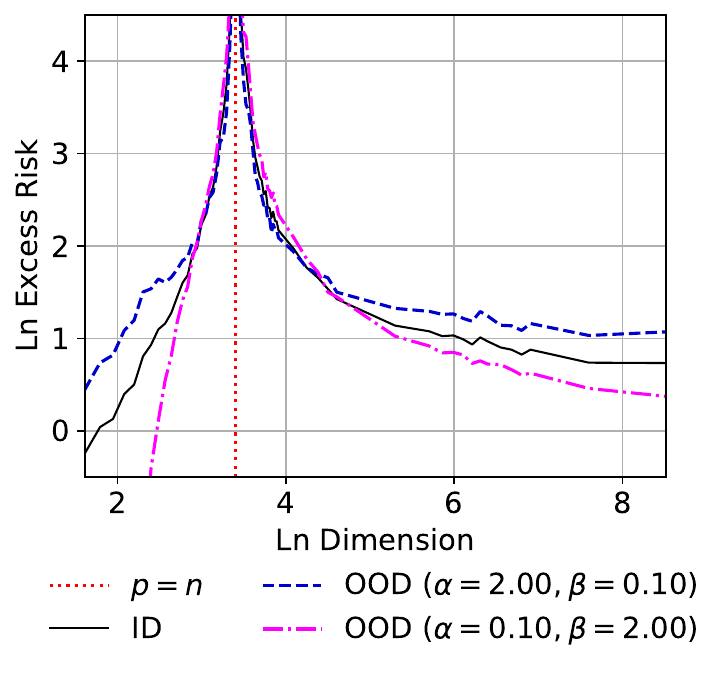}}\hfill
\subfigure[$n=40$]{\includegraphics[width=.32\textwidth]{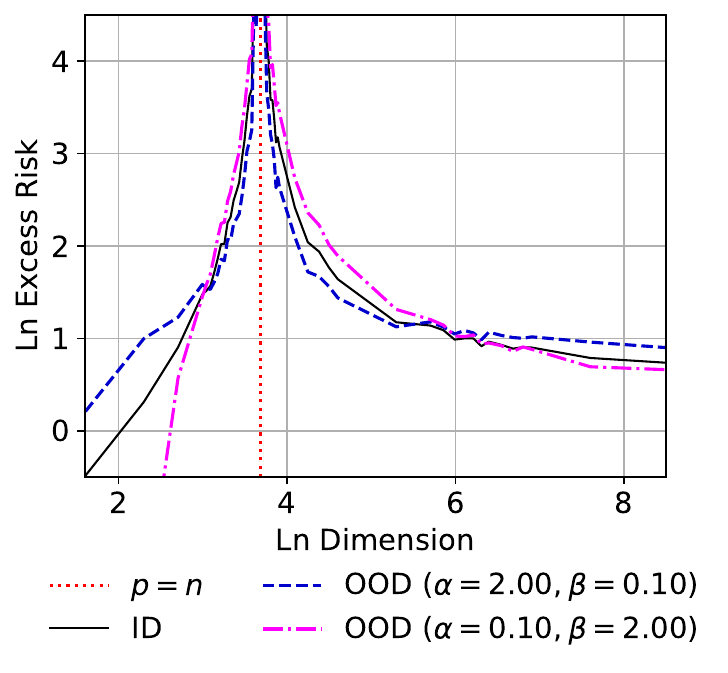}}

\caption{Comparing covariate shift in underparameterized vs. overparameterized linear regression for when $\lambda_i = i^{-1}\ln^{-2}(i+1)$. We implement simple multiplicative shifts with $\alpha, \beta$ as defined in Section \ref{sec:taxonomy_shifts} where we take $k=10$ and vary $n$. Ground truth models are sampled uniformly from $\calS^{p-1}$ and training label noise is sampled from $\calN(0, 2)$. Every curve is averaged over 100 independent runs.}
\label{fig:under_and_over_crossover_vary_n}
\end{figure}

\subsection{MLP on Synthetic Data}

We now show additional results for MLPs trained to interpolation on synthetic datasets.
This experiment is analogous to that of Figure \ref{fig:mlp_synth_regr} except that we implement shifts in the style of Theorem \ref{thm:benficial_malignant_shifts}.

\begin{figure}[H]
\centering
\includegraphics[width=0.5\linewidth]{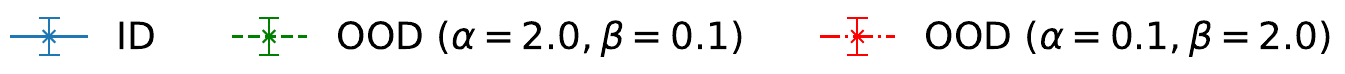}\\
\subfigure[$n=400, p=200, h=2048$]{\includegraphics[width=.28\textwidth]{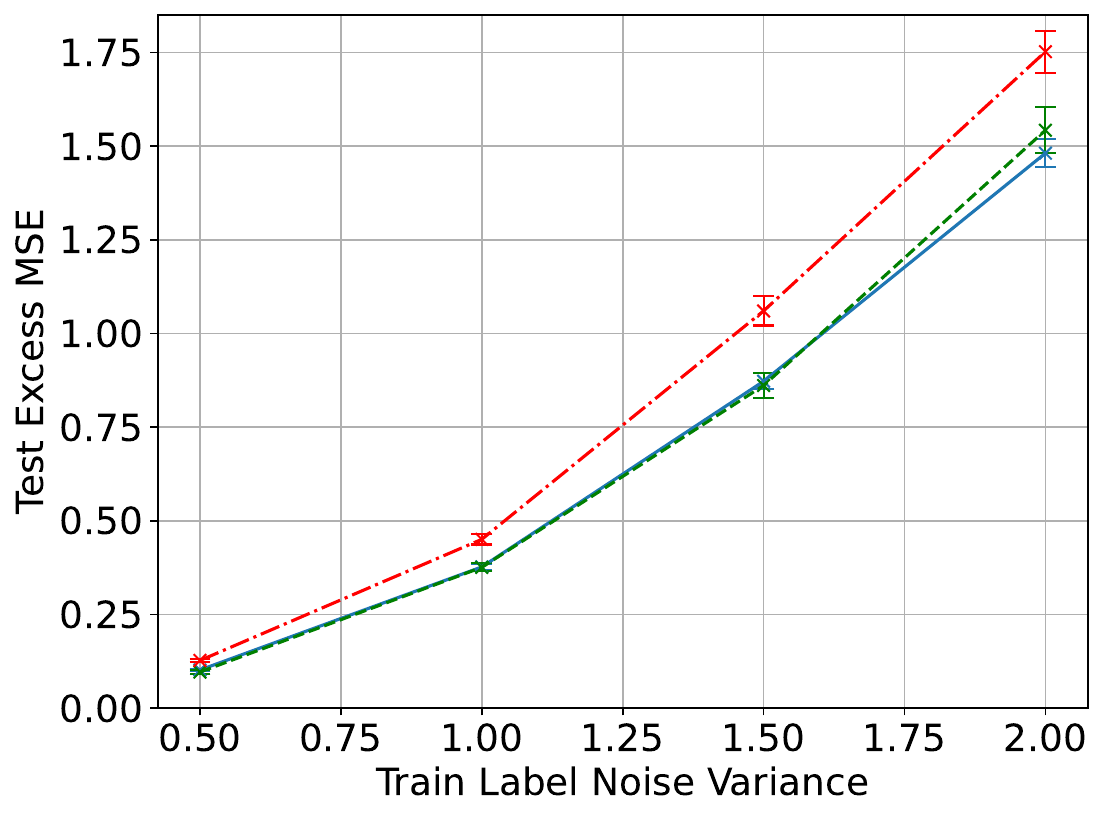}}\qquad \qquad
\subfigure[$n=400, p=4k, h=2048$]{\includegraphics[width=.28\textwidth]{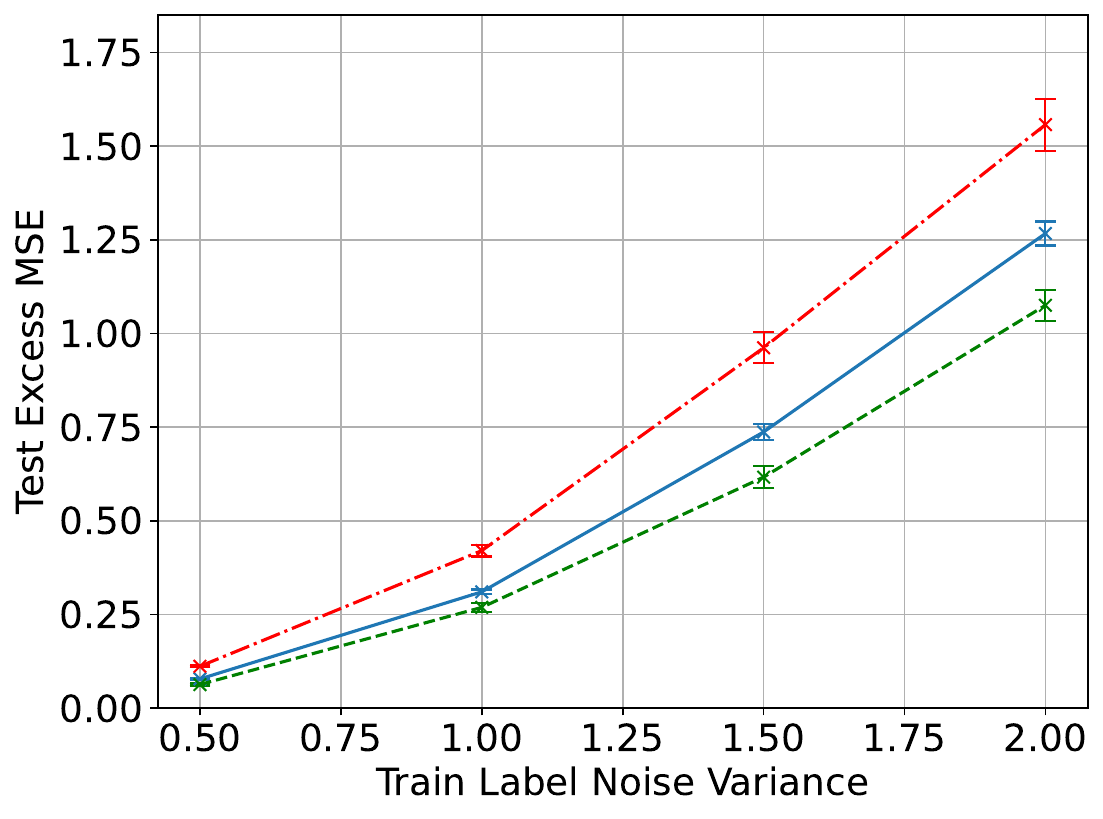}}

\caption{We implement multiplicative shifts for interpolating 3-layer ReLU MLPs in the style of Theorem \ref{thm:benficial_malignant_shifts}. Source data, $\xsource$, is sampled from a mean-centered Gaussian with diagonal covariance given by $\lambda_i = i^{-1}\ln^{-1.5}(i+1)$. Ground truth models are sampled as $\tsource \sim \calS^{p-1}$ and training label noise is samples as $\varepsilon = \calN(0, \sigma_x^2)$. Noisy training labels are obtained as $y = \xsource \tsource + \varepsilon$. The target covariances are obtained by multiplying the top $k=10$ source eigenvalues by $\alpha$ and the bottom $p-k$ source eigenvalues by $\beta$. From our theory, we expect that $\alpha=2, \beta=0.1$ leads to beneficial shifts while $\alpha=0.1, \beta=2$ leads to malignant shifts. We see this holds up when $p > n$, and that $h > n$ does not change this relationship. All curves are averaged over 20 independent runs and each training run reaches MSE loss $\leq 5e^{-6}$.}
\label{fig:mlp_log_plaw_k10_synthetic}
\end{figure}

In Figure \ref{fig:mlp_synth_regr} we sampled the ID dataset from a mean-centered Gaussian with diagonal covariance that has eigenvalues $\lsource_i = i^{-1} \ln^{-3}(i+1)$ and we examined the behavior for OOD datasets under covariate shift where the eigenvalues of the OOD covariance are given by $\lsource_i = i^{-1}\ln^{-2}(i+1)$ and $\lsource_i = i^{-1}\ln^{-4}(i+1)$.
In Figure \ref{fig:mlp_log_plaw_k10_synthetic} we take the ID data to be sampled from a mean-centered Gaussian with diagonal covariance that has eigenvalues $\lsource_i = i^{-1}\ln^{-1.5}(i+1)$.
For the covariance of the OOD datasets, we shift the top $k=10$ eigenvalues by a factor of $\alpha$ and the bottom $p-k$ eigenvalues by a factor of $\beta$, as in the setting of Theorem \ref{thm:benficial_malignant_shifts}.
We experiment here with $\alpha = 2, \beta = 0.1$ and $\alpha=0.1, \beta = 2$.
Each model achieves training MSE $\leq 5e^{-6}$.
We see the same trends as in Figure \ref{fig:mlp_synth_regr} with respect to $p > n$ versus $h > n$.
In the $p < n$ case, even though $h > n$ we do not clearly observe a beneficial shift as predicted by our high-dimensional linear theory.
However, when $p > n$ we do observe beneficial shifts for $\alpha=2, \beta=0.1$, as suggested by our theorem for the mildly overparameterized case.

\subsection{MNI on CIFAR-10C Experiments}

\subsubsection{Additional blur and noise filter experiments}

\begin{figure*}[h]
\centering
\includegraphics[width=0.8\linewidth]{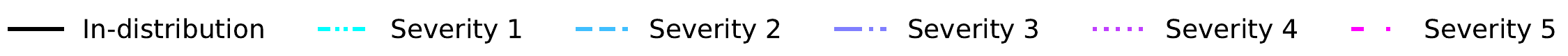}\\
\subfigure[Blur Covariance]{\includegraphics[width=.24\textwidth]{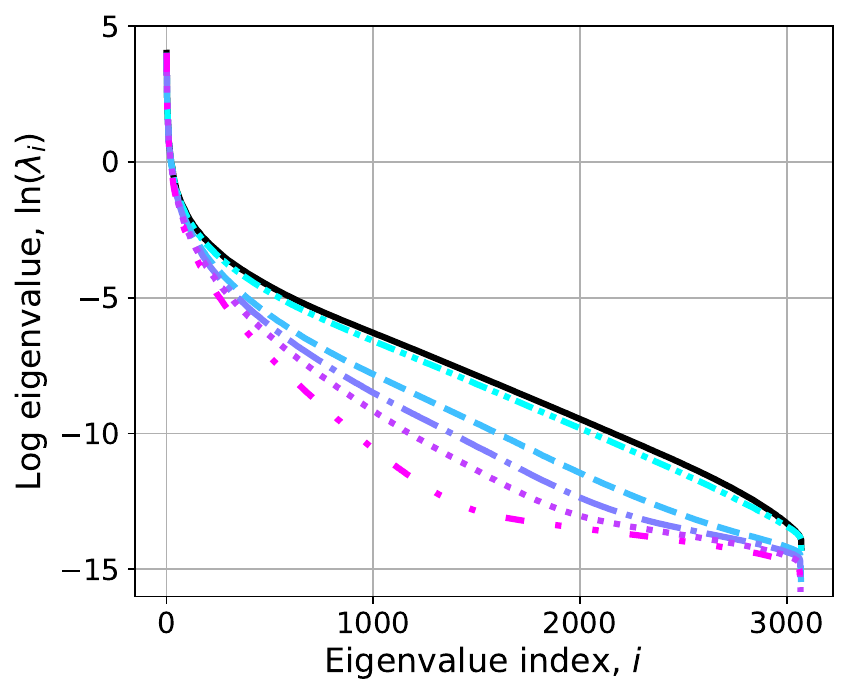}}\hfill
\subfigure[Noise Covariance]{\includegraphics[width=.24\textwidth]{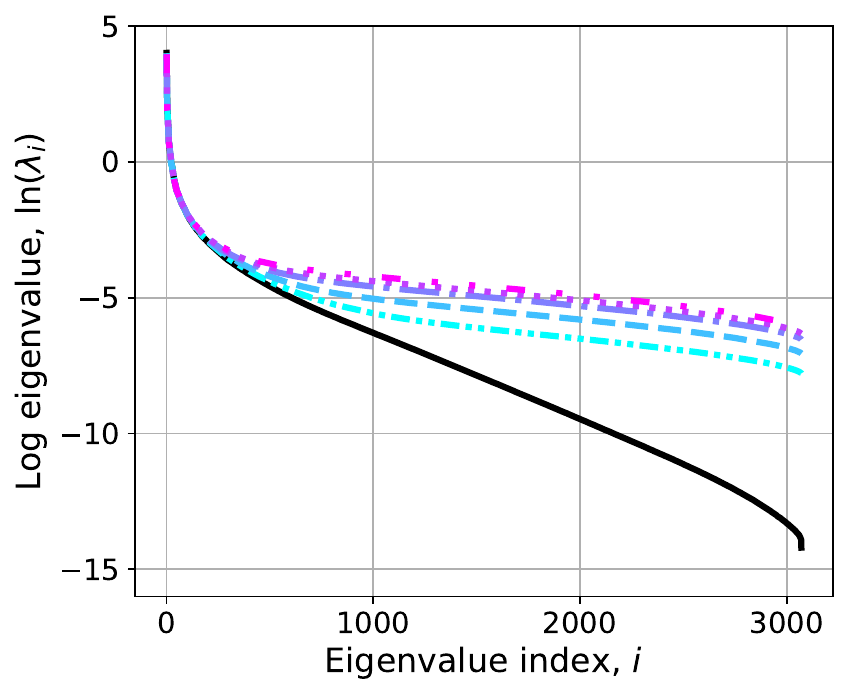}}\hfill
\subfigure[MNI tested on Blur]{\includegraphics[width=.24\textwidth]{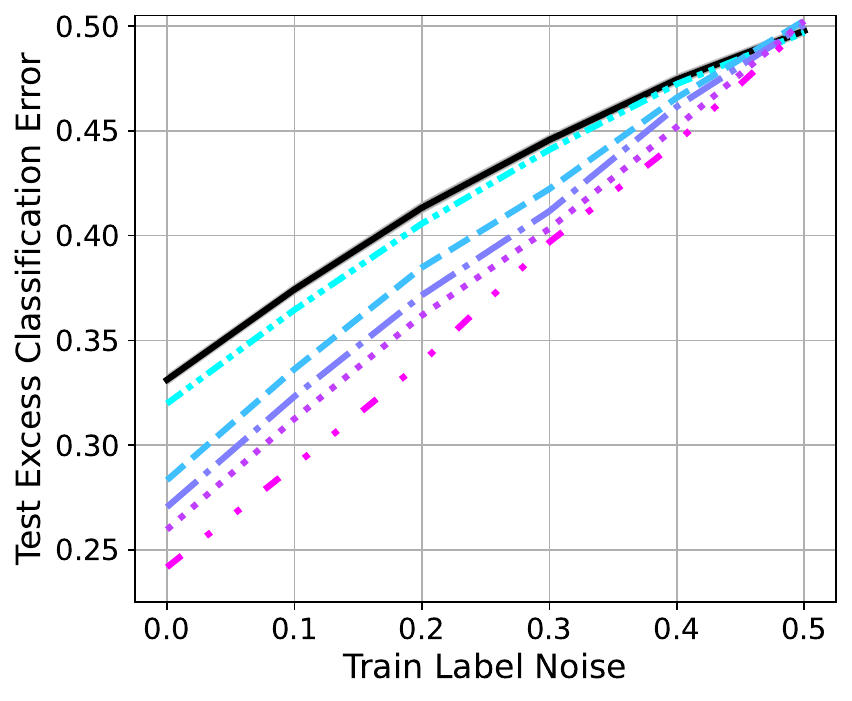}}\hfill
\subfigure[MNI tested on Noise]{\includegraphics[width=.24\textwidth]{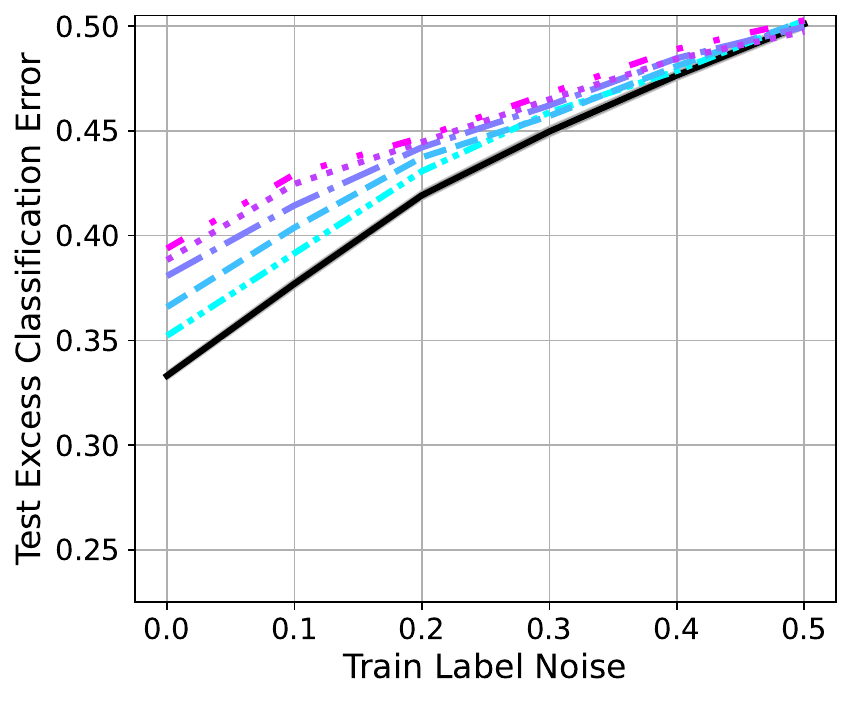}}
\caption{We fit the MNI to binary CIFAR-10 (dog vs. truck) and test on binary CIFAR-10C under Gaussian blur and noise corruptions. In (a), (b) we plot the eigenvalues of the covariance matrices for ID test data and on test sets for each severity. To ensure $p > n$ we subsample the training set to $n=1k$ and average curves over 50 independent runs. We evaluate the MNI against all 5 corruption severities and plot excess classification error vs training label noise, which is class label flip probability. We see that the eigenspectra of the OOD datasets is directly correlated to the OOD performance of the MNI.}
\label{fig:cf10c_mni}
\end{figure*}

\begin{figure*}[h]
    \centering
    \includegraphics[width=0.8\linewidth]{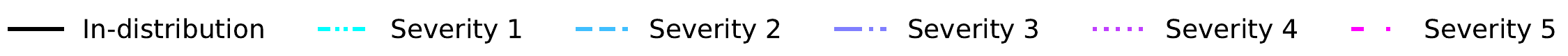}\\
    \subfigure[Blur and Noise Covariance]{\includegraphics[width=0.24\linewidth]{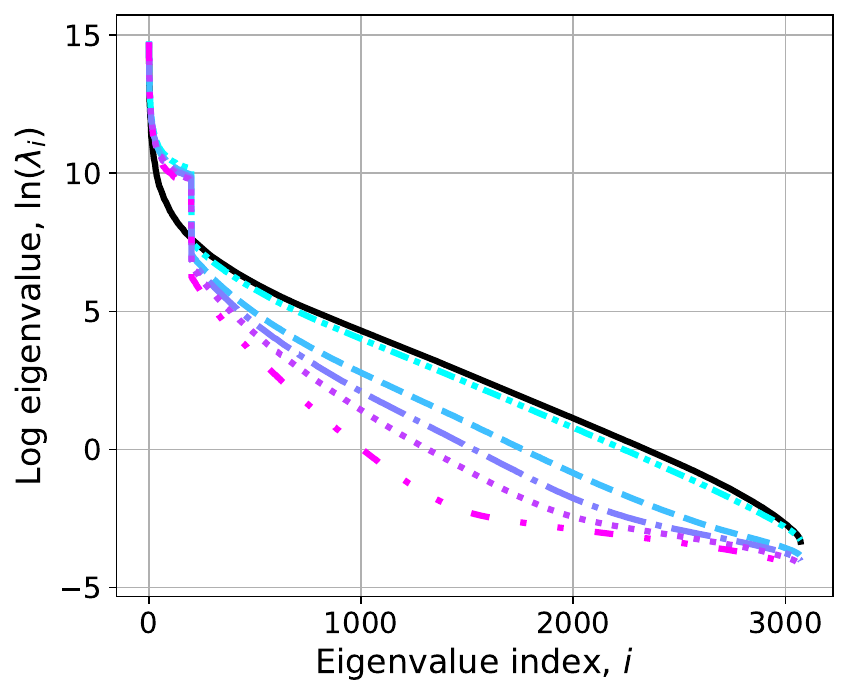}}\qquad
    \subfigure[$n = 500$]{\includegraphics[width=0.24\linewidth]{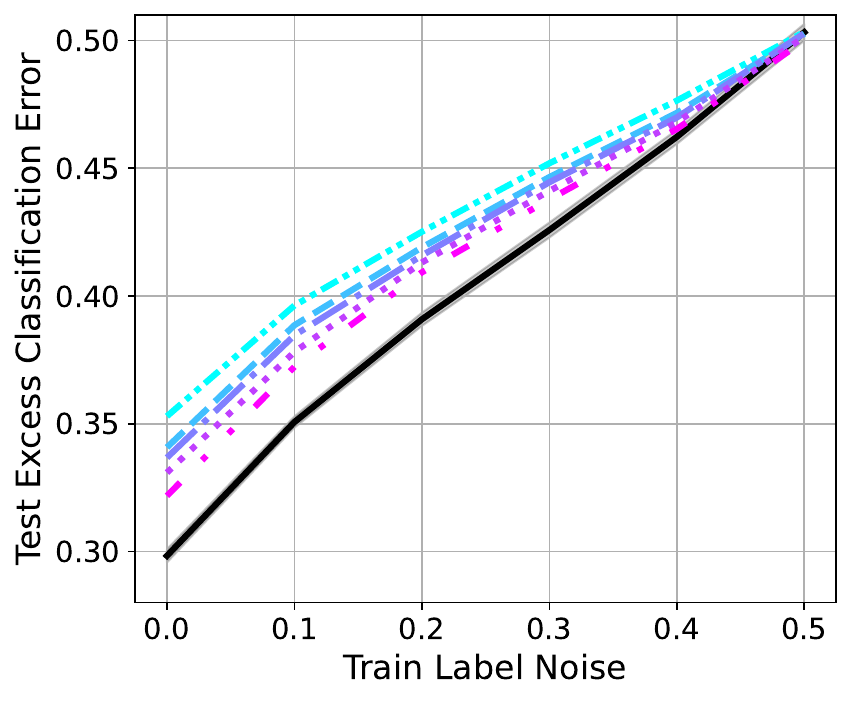}}\qquad
     \subfigure[$n=2000$]{\includegraphics[width=0.24\linewidth]{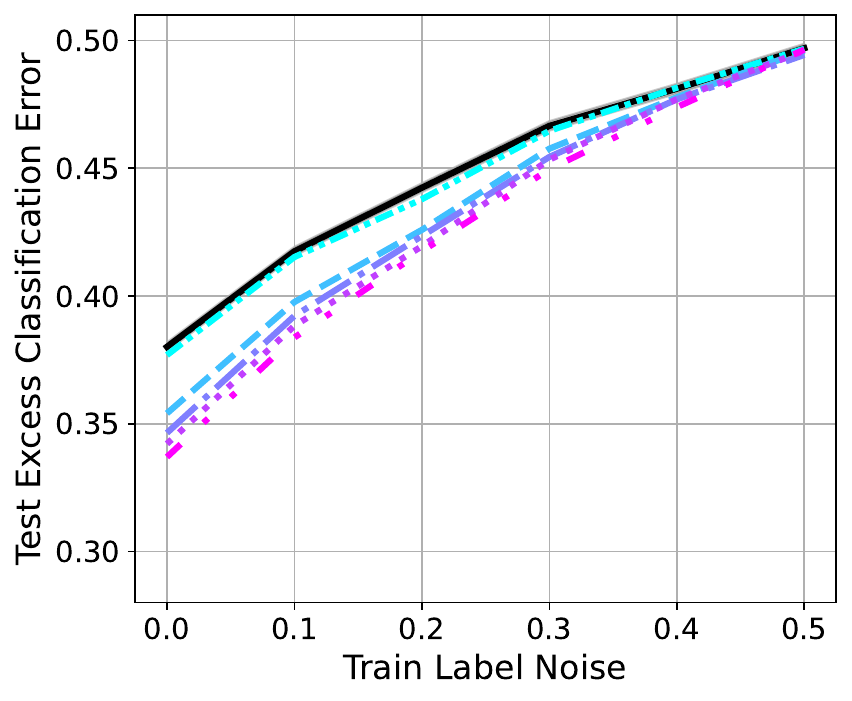}}
    \caption{We consider an experiment using a custom variant of the CIFAR-10C out-of-distribution (OOD) test sets while continuing to train on the original CIFAR-10 dataset with $n$ training samples at varying amounts of training label noise.
    Our constructions injects Gaussian noise at varying severity levels into the top 200 high-variance directions of the Gaussian blur test sets at each severity level.
    In (a) we plot the log of the spectrum of the covariance matrices of each test set.
    This results in a covariance spectrum in which the top eigenvalues of the OOD data are larger than the top eigenvalues of the in-distribution (ID) eigenvalues, and the bottom eigenvalues of the OOD data are smaller than the bottom eigenvalues of the ID data.
    This corresponds to the $\alpha > 1, \beta < 1$ setting in our taxonomy.
    In (b) and (c) we plot test excess classification error vs. train label noise.
    In (b) we show the \textit{severely overparameterized} setting which results in malignant shifts, and in (c) we show the \textit{mildly overparameterized} setting which results in benficial shifts, in keeping with the intuitions from our taxonomy.
    Furthermore, the trace of the OOD covariances are larger than the ID covariance and yet in (c) we observe improved OOD performance, in contrast to intuitions from prior work.
    }
    \label{fig:cf10c_artificial_blur_and_noise}
\end{figure*}

\subsubsection{Varying levels of overparameterization}

\begin{figure}[H]
\centering
\subfigure[$p>n$, In-distribution]{\includegraphics[width=.27\textwidth]{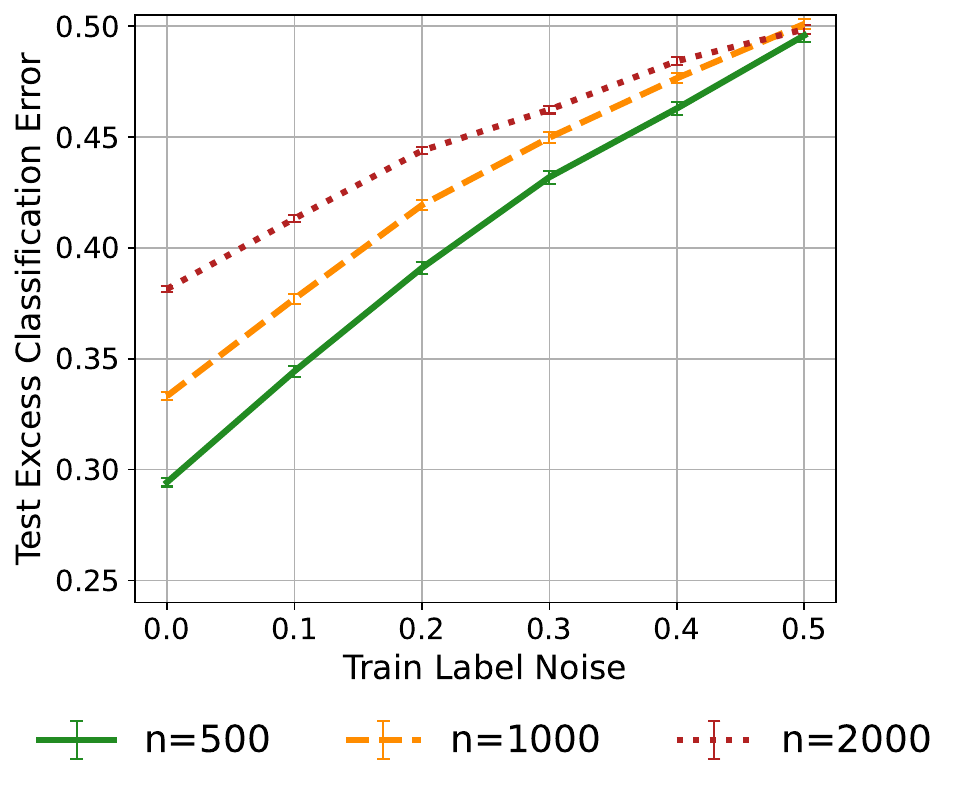}}\qquad
\subfigure[$p>n$, Blur]{\includegraphics[width=.27\textwidth]{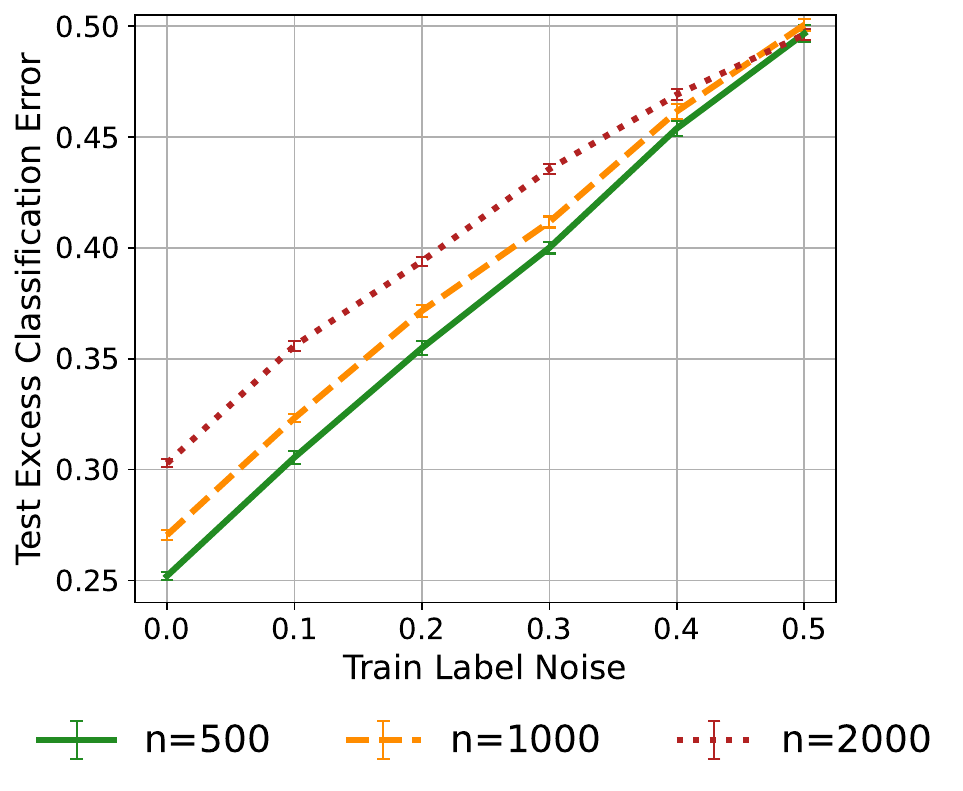}}\qquad
\subfigure[$p>n$, Noise]{\includegraphics[width=.27\textwidth]{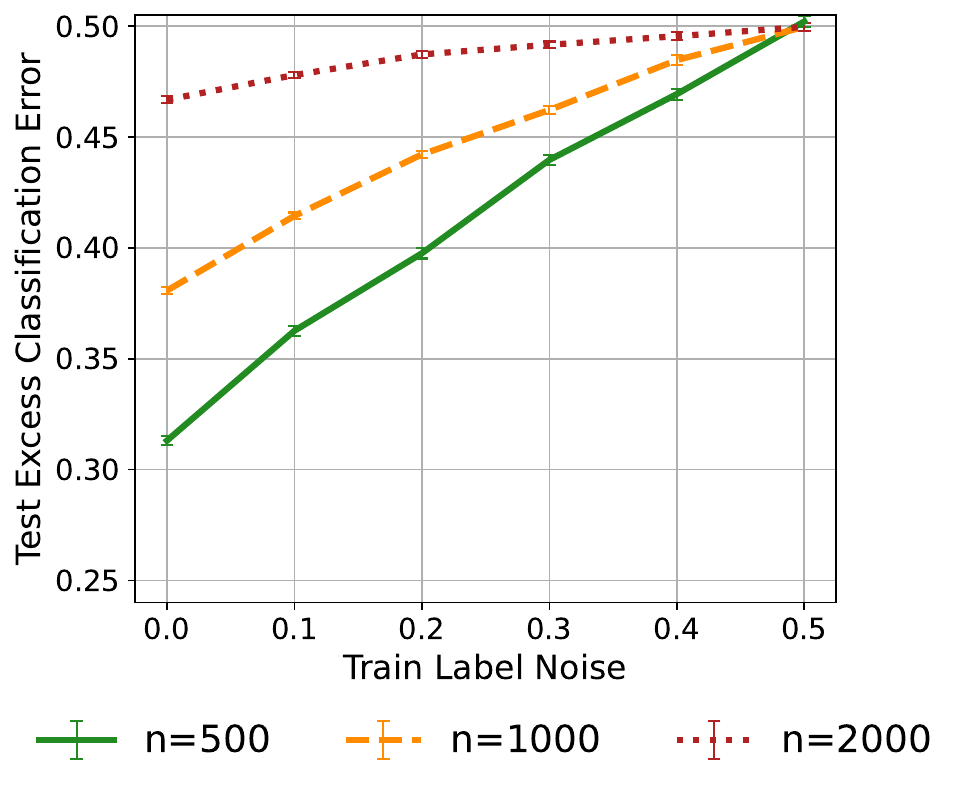}}\\
\subfigure[$p<n$, In-distribution]{\includegraphics[width=.27\textwidth]{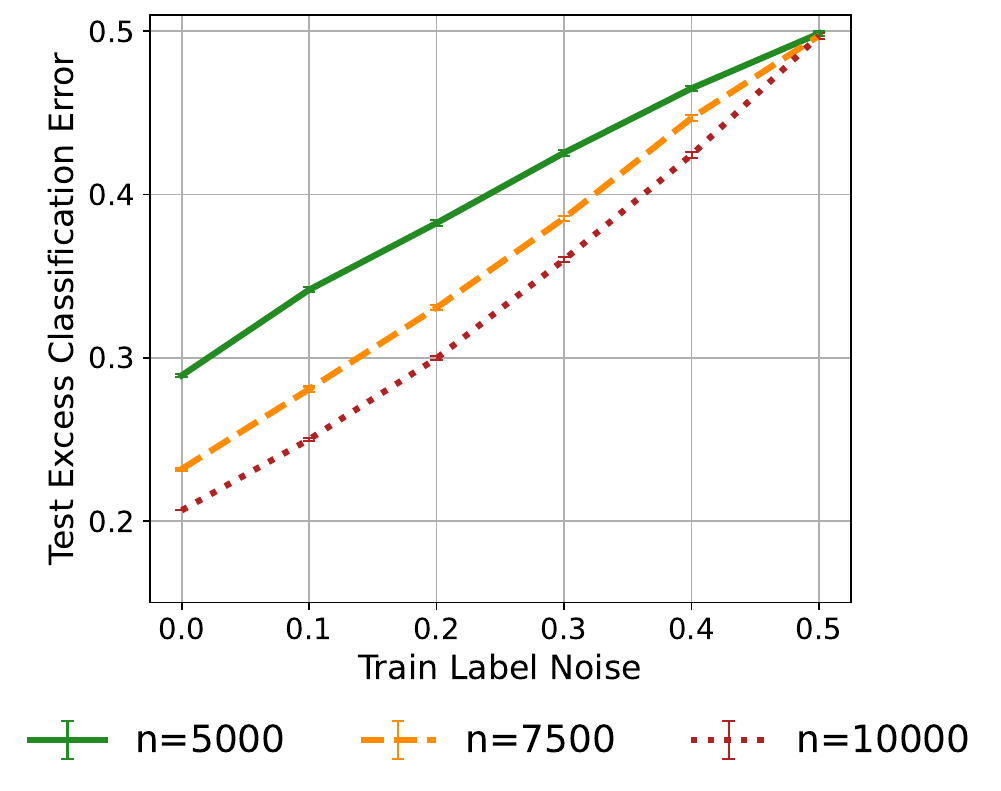}}\qquad
\subfigure[$p<n$, Blur]{\includegraphics[width=.27\textwidth]{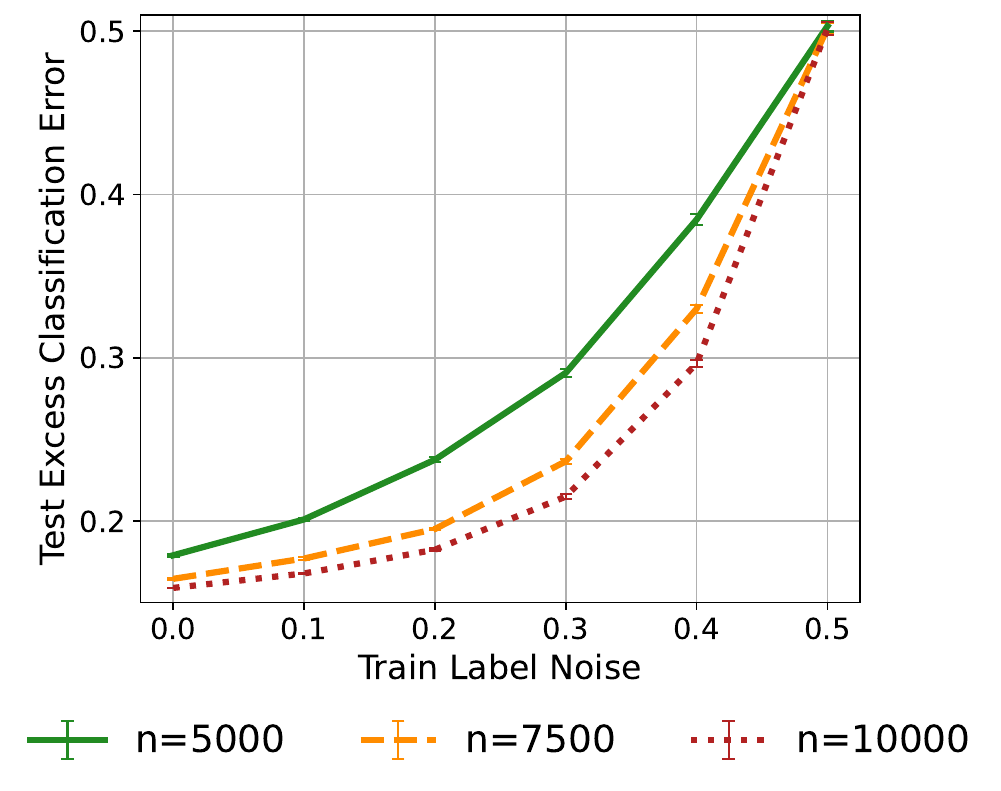}}\qquad
\subfigure[$p<n$, Noise]{\includegraphics[width=.27\textwidth]{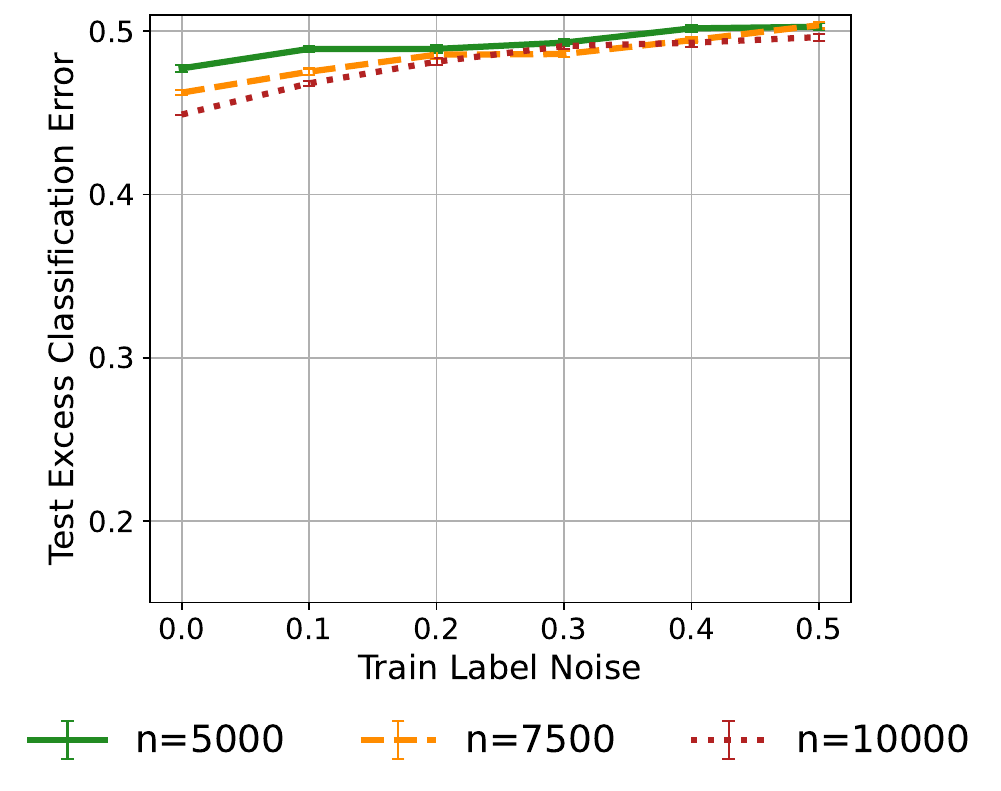}}
\caption{We fit the ridgeless OLS solution to binary CIFAR-10 (dog vs. truck) and test on binary CIFAR-10C under Gaussian blur and noise corruptions. In the top row, we vary the level of overparameterization as $n=500, 1k, 2k$ and average each curve over 50 independent runs. In this $p>n$ setting the ridgelss OLS solution results in the MNI. In the bottom row we obtain a non-interpolating, ridgeless linear solution. Evaluations are done on severity 3 of CIFAR-10C, however the results hold up across all severities.
We plot excess classification error vs training label noise, which is class label flip probability. We see that overparameterization improves robustness of the MNI at all noise levels.}
\label{fig:cf10c_mni_overparam}
\end{figure}

In Figure \ref{fig:cf10c_mni_overparam} (a-c) we show that overparameterization improves OOD excess classification error for the MNI fit to binary CIFAR-10 and evaluated on binary CIFAR-10C under Gaussian blur and noise corruptions.
The details of these datasets and setups are given in Appendix \ref{apdx:experiment_details}.
We note that all of the curves in the top row of this figure are in the overparameterized regime, meaning they are on the right side of the double descent curve.
Flattened CIFAR images have $p=3,072$ and so we vary the number of training subsample sizes over $n=500, 1000, 2000$ in order to remain in an overparameterized setting.
We find that when we are overparameterized, as we reduce $n$ we obtain improved performance. 
We average over 50 independent runs in each setting and provide standard error bars to show that this observation is not due to specific random samples.
We also see that at higher levels of overparameterization, the relative difference in excess classification error between ID, blur, and noise test sets lessens.
For example, at $0.0$ label noise and $n=2k$ the average excess error varies from $0.3028$ on the blur set to $0.4668$ on the noise set for an absolute difference of $0.164$, whereas at $n=500$ the average excess error only varies from $0.252$ on the blur set to $0.3131$ on the noise set for an absolute difference of $0.0611$.

For completeness, in Figure \ref{fig:cf10c_mni_overparam} (d - e) we show the above setting in the  underparameterized regime where we obtain the linear solution via the ridgeless OLS solution.
As these plots are on the left side of the double descent peak, we see that adding more data improves OOD excess classification error.
While these models are not interpolating, we observe that noise corruptions lead to nearly \textit{catastrophic} performance, meaning random guessing, on the OOD test sets, whereas blur corruptions lead to more \textit{benign} performance.
Finally the ID performance appears to be \textit{tempered}, in showing a nearly linear relationship between train label noise and test excess classification error.

\subsection{ResNet on CIFAR-10C Experiments}
\begin{figure}[H]
\centering
\includegraphics[width=0.8\linewidth]{figs/CIFAR_10_C/replotted_jan9/cf10c_legend.pdf}\\
\subfigure[Interpolating ResNet tested on Blur]{\includegraphics[width=.3\textwidth]{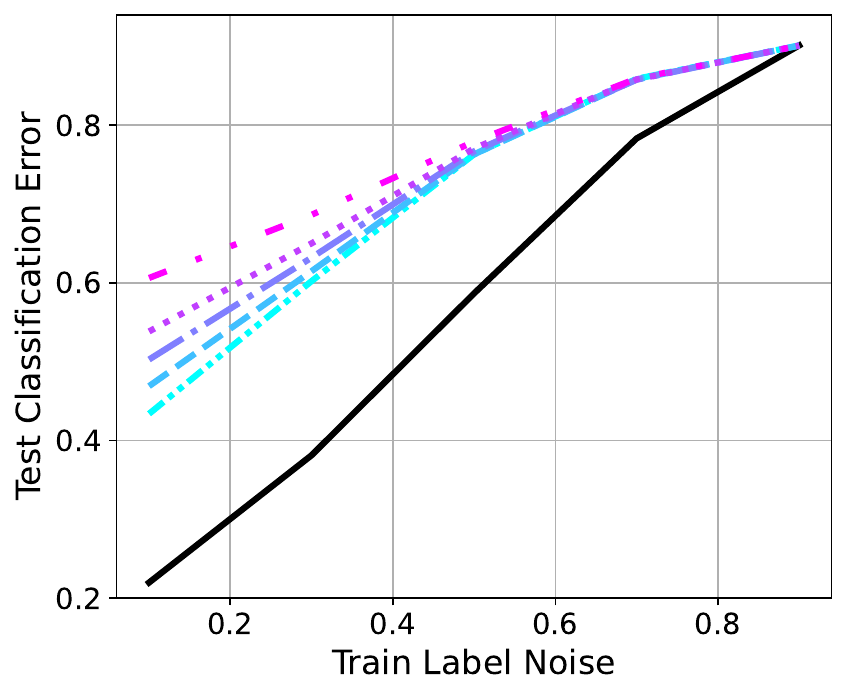}}\qquad \qquad \qquad
\subfigure[Interpolating ResNet tested on Noise]{\includegraphics[width=.3\textwidth]{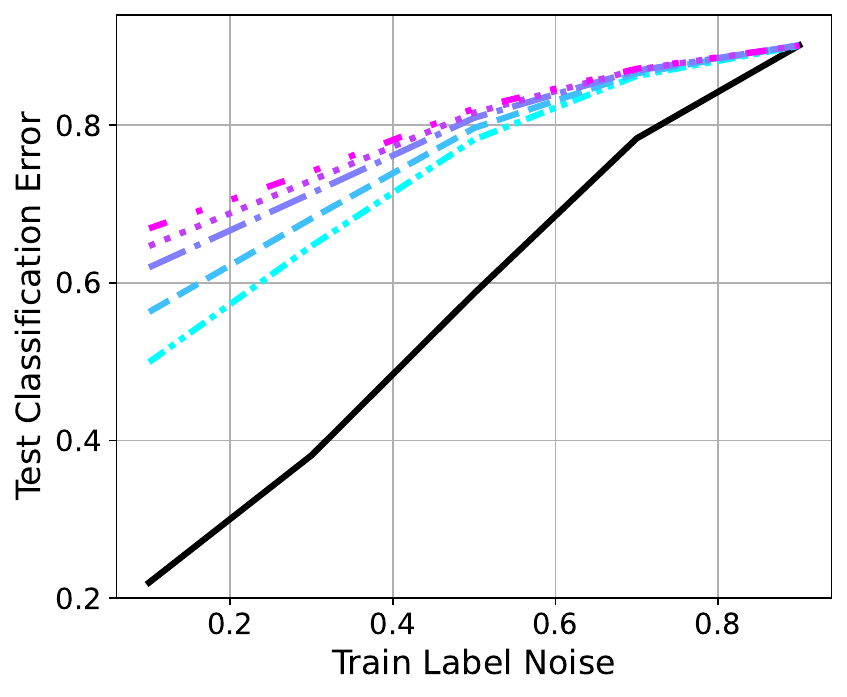}}
\caption{We train ResNet18 on clean CIFAR-10 and evaluate on test sets that has been corrupted by Gaussian blur and Gaussian noise, which correspond to beneficial and malignant shifts, respectively. Labels are flipped with probability 0.1 through 0.9, seen on the x-axis. The setting is not high-dimensional because the training data contains 50000 images, each of which are 3072-dimensional. ResNet18 contains around 11.7 million parameters, so the setting is very overparameterized. We observe that while both shifts negatively affect generalization, the beneficial shift isn't as bad as the malignant shift. This result is similar to those seen in subfigures (a) and (b) in Figure \ref{fig:mlp_synth_regr}, where the data is not high-dimensional but the MLP is overparameterized.}
\label{fig:cf10_resnets}
\end{figure}

Figure \ref{fig:cf10_resnets} shows the behavior of interpolating ResNets trained on the full CIFAR-10 dataset and evaluated on CIFAR-10C blur and noise corruptions.
While these numbers are suboptimal with respect to CNNs on CIFAR-10 we note that they are justified in our setting as our goal is to study interpolating models.
At 90\% label noise it takes a lot of compute to interpolate the entire CIFAR-10 dataset, especially if using data augmentations, weight decay, or other regularizations.
As such, we turn off weight decay and data augmentations for these models to be able to tractably interpolate CIFAR-10 at high noise levels. 

\end{document}